\makeatletter
\def\input@path{{./styles/}{../}}
\makeatother
\documentclass[preprint, 10pt]{elsarticle}
\usepackage{graphicx}
\usepackage{tikz}
\usepackage{enumitem}
\usetikzlibrary{shapes.geometric}

\usetikzlibrary{arrows.meta,positioning,calc}

\makeatletter
\newcommand*\bigcdot{\mathpalette\bigcdot@{.5}}
\newcommand*\bigcdot@[2]{\mathbin{\vcenter{\hbox{\scalebox{#2}{$\m@th#1\bullet$}}}}}
\makeatother
\usepackage[margin=2.5cm]{geometry}
\usepackage{setspace}
\usepackage[percent]{overpic}
\usepackage{lmodern}
\usepackage{amsmath,amssymb}
\usepackage{mathtools}
\usepackage{amsfonts,amsthm}
\usepackage{lineno}
\usepackage{graphicx}
\usepackage{multirow}
\usepackage{bm, bbm}
\usepackage{booktabs}
\usepackage{fancyvrb}
\usepackage{algorithmicx,algorithm}
\usepackage[noend]{algpseudocode}
\usepackage{physics}
\usepackage{xcolor}
\usepackage{pxfonts}
\usepackage[footnotesize,bf]{caption}
\usepackage{subcaption}
\usepackage{mathtools}
\usepackage{hhline}
\usepackage[T1]{fontenc}
\usepackage[colorlinks,linkcolor=red,anchorcolor=blue,citecolor=green]{hyperref}
\usepackage{placeins}
\newtheorem{definition}{Definition}[section]
\newtheorem{proposition}{Proposition}[section]

\newcommand{\mb}{\mathbb}
\newcommand{\mbe}{\mathbb{E}}

\newcommand{\mc}{\mathcal}

\newtheorem{theorem}{Theorem}[section]
\newtheorem{remark}{Remark}[section]
\newtheorem{lemma}[theorem]{Lemma}

\newcommand{\E}{\mathbb{E}}

\DeclareMathOperator*{\argmin}{arg\,min}

\newcommand{\Enc}{\operatorname{Enc}}
\graphicspath{{figures/}}
\journal{}

\begin{document}
	\begin{frontmatter}
		\title{FLUID: Flow-based Unified Inference for Dynamics}

        \author[US]{Tiangang Cui}
        \ead{tiangang.cui@sydney.edu.au}
        
        \author[UIC]{Xiaodong Feng}
        \ead{xiaodongfeng@bnbu.edu.cn}
        
        \author[LSEC]{Chenlong Pei}
        \ead{peichenlong@amss.ac.cn}
        
        \author[LSU]{Xiaoliang Wan}
        \ead{xlwan@lsu.edu}
        
        \author[LSEC]{Tao Zhou}
        \ead{tzhou@lsec.cc.ac.cn}
        
        \address[US]{School of Mathematics and Statistics, The University of Sydney, Camperdown, NSW 2006, Australia.}
        
        \address[UIC]{Faculty of Science and Technology, Beijing Normal-Hong Kong Baptist University, Zhuhai 519087, China.}
        
        \address[LSEC]{Institute of Computational Mathematics and Scientific/Engineering Computing, Academy of Mathematics and Systems Science, Chinese Academy of Sciences, Beijing, China.}
        
        \address[LSU]{Department of Mathematics and Center for Computation and Technology, Louisiana State University, Baton Rouge 70803, USA.}

		\begin{abstract} 
			Bayesian filtering and smoothing for high-dimensional nonlinear dynamical systems are fundamental yet challenging problems in many areas of science and engineering. Gaussian-based approximations often break down when posterior distributions are highly nonlinear or non-Gaussian, while sequential Monte Carlo methods can be computationally demanding and often suffer from particle degeneracy, especially over long time horizons or when smoothing distributions are required. Recent deep generative models are capable of representing complex high-dimensional posteriors, but they typically treat filtering and smoothing separately and often rely on costly per-instance optimization, which limits their applicability in online and large-scale settings.
			To address these challenges, we propose FLUID, a flow-based unified amortized inference framework for filtering and smoothing dynamics. The core idea is to encode each observation history into a fixed-dimensional summary statistic and use this shared representation to learn both a forward flow for the filtering distribution and a backward flow for the backward transition kernel. Specifically, a recurrent encoder maps each observation history to a fixed-dimensional summary statistic whose dimension does not depend on the length of the time series. Conditioned on this shared summary statistic, the forward flow approximates the filtering distribution, while the backward flow approximates the backward transition kernel. The smoothing distribution over an entire trajectory is then recovered by combining the terminal filtering distribution with the learned backward flow through the standard backward recursion. By learning the underlying temporal evolution structure, FLUID also supports extrapolation beyond the training horizon. Moreover, by coupling the two flows through shared summary statistics, FLUID induces an implicit regularization across latent state trajectories and improves trajectory-level smoothing. In addition, we develop a flow-based particle filtering variant that provides an alternative filtering procedure and enables ESS-based diagnostics when explicit model factors are available. Numerical experiments on a high-dimensional advection-diffusion system, a strongly nonlinear stochastic volatility model, a high-dimensional PDE system, and Lorenz systems in both single-scale and two-scale settings demonstrate that FLUID provides accurate approximations of both filtering distributions and smoothing paths.
		\end{abstract}
		\begin{keyword}
			Data assimilation \sep Filtering and smoothing \sep Amortized inference \sep Conditional Normalizing flow \sep Recurrent neural network
		\end{keyword}
		
	\end{frontmatter}

	\section{Introduction}\label{sec:intro}
	
	Filtering and smoothing \cite{sarkka2023bayesian} form the backbone of sequential inference in state-space models, where the goal is to estimate hidden, time-varying system states from indirect noisy time-series observations. They are central in data assimilation \cite{law2015data,asch2016data,bach2024inverse}, robotics \cite{ko2009gp}, signal processing \cite{candy2016bayesian}, and econometrics \cite{lopes2011particle,zhang2008box}.
	Filtering and smoothing are built around recursive representations of the time-evolving probability distributions of the system states, conditioned on observations over different time windows. In the presence of strong nonlinearity, non-Gaussianity, high-dimensional states, and long time horizons \cite{chopin2004central,bengtsson2008curse,rebeschini2015can}, filtering and smoothing methods must balance statistical accuracy with computational efficiency to deliver tractable implementations.
	The Kalman filter provides an exact recursion for linear dynamics with Gaussian noise \cite{kalman1960new}. It has been extended to nonlinear models by propagating Gaussian approximations, either via local linearization (the extended Kalman filter) or through ensemble-based covariance estimation (the ensemble Kalman filter) \cite{einicke2002robust,evensen2003ensemble,calvello2025ensemble}.
	However, these Gaussian filters rely on a moment-closure ansatz limited to the first two moments and exhibit systematic bias for general nonlinear systems.
	Beyond Gaussian approximations, sequential Monte Carlo (SMC) methods \cite{doucet2001sequential,beskos2015sequential}, commonly referred to as particle filters \cite{djuric2003particle}, represent filtering distributions using weighted samples and can, in principle, handle general nonlinear and non-Gaussian models \cite{gordon1993novel}. However, particle methods typically suffer from severe weight degeneracy as the state dimension and the time horizon increase. To remain effective in high dimensions, one often has to employ refined methods such as the auxiliary particle filter \cite{pitt1999filtering} together with resampling strategies \cite{carpenter1999improved,kitagawa1996monte,gilks2001following}, or scale the sample size with the state dimension \cite{snyder2008obstacles}. See \cite{reich2015probabilistic,del2012adaptive} for comprehensive reviews.
	
	Recent work has sought to move beyond these classical approximations by integrating modern machine learning with filtering and smoothing \cite{cheng2023machine,bach2024machine}. When the state transition and observation models are available, learning is typically used to parameterize components of the Bayesian recursion (e.g., gain-like operators, analysis maps, or covariance structures), yielding enhanced filters that better accommodate nonlinearities and high-dimensional structures while retaining online scalability \cite{zhou2024bi,bocquet2024accurate,bach2024learning,revach2022kalmannet}.
	For instance, score-based filters \cite{bao2024score,bao2024ensemble,huynh2025joint} represent the filtering density via its score function, enabling reverse-time diffusion sampling with arbitrarily many particles and reduced high-dimensional degeneracy. MNMEF \cite{bach2025learning} uses a permutation-invariant neural measure mapping to transform the predicted joint state-observation measure into an updated state estimate. Alternatively, LAE-EnKF \cite{tong2026latent} maps physical states into a compact latent space governed by learned linear dynamics to facilitate standard ensemble Kalman updates.
	
	Another emerging line of work employs measure transport maps to transform forecast samples into approximately equally weighted samples targeting the evolving filtering distribution, and thus mitigates particle degeneracy and improves stability in high dimensions. Representative examples include ensemble transform methods based on discrete optimal transport \cite{taghvaei2020optimal} and coupling-based nonlinear ensemble filtering that exploits structured transport and conditional rearrangements \cite{spantini2022coupling,ramgraber2025friendly}. More recently, tensor-train-based approaches interpret sequential inference as recursive function approximation and exploit low-rank structure to build scalable density surrogates and associated Knothe--Rosenblatt rearrangements for filtering, smoothing, and parameter learning \cite{zhao2024tensor}. In parallel, the realization of transport maps via neural networks, particularly normalizing flows, provides flexible density approximations associated with exact likelihood evaluation and efficient sampling through invertible transformations \cite{kobyzev2020normalizing,papamakarios2021normalizing}. 
	Normalizing flows are also widely used for posterior approximation in simulation-based inference; see, e.g., \cite{zammit2025neural,deistler2025simulation}.
	These capabilities have motivated flow-based formulations for nonlinear filtering in high dimensions \cite{wang2025flow}. 
	Nonetheless, many existing measure transport approaches treat filtering and smoothing as separate tasks, or rely on costly 
	per-instance optimization and approximation during online deployment, which limit their suitability for large-scale applications.
	
	To address these challenges, we propose a jointly amortized, data-driven framework based on a recurrent summary network and conditional normalizing flows. We design a single recurrent encoder to map an observation history $y_{1:t}$ to a fixed-dimensional summary statistic at each time step, independent of the length of the time series. Conditioned on this shared sequence of summaries, we then construct two separate conditional normalizing flows: a forward flow to approximate the filtering distribution of the current state, $p(u_t \mid y_{1:t})$, and a backward flow to approximate the backward transition kernel linking consecutive states, $p(u_{t-1} \mid u_t, y_{1:t-1})$.
	The forward flow enables real-time online state inference, while its combination with the backward flow recovers the smoothing
	distributions over entire trajectories via standard backward recursion. By sharing time-series summary statistics across time, 
	the method couples the learned conditional distributions and induces an implicit regularization over whole state paths.
	Our main contributions can be summarized as follows:
	\begin{itemize}[topsep=6pt, partopsep=0pt, leftmargin=\parindent]
		\item We introduce FLUID, a flow-based unified amortized inference framework for Bayesian filtering and smoothing in high-dimensional nonlinear dynamical systems. The framework jointly learns the filtering distribution and the backward transition kernel using a forward flow and a backward flow conditioned on shared summary statistics, providing a coherent approach to both online filtering and trajectory-level smoothing. Moreover, by capturing the underlying temporal evolution structure, FLUID supports extrapolation beyond the training horizon, rather than merely fitting the observed training trajectories.
		
		\item We design shared summary statistics of the observation history that serve as a common conditioning variable for the forward and backward flows. By coupling the two flows through the same summary statistics, this design induces an implicit regularization across latent state trajectories and yields a marked improvement in trajectory-level smoothing.
		
		\item We develop a flow-based particle filtering method. In particular, the learned flows are used to model a proposal distribution for propagating particles and a conditional density associated with the current observation. When explicit model factors are available, we further introduce an ESS-based diagnostic to quantify particle degeneracy under the learned proposal.
		
		\item We demonstrate the effectiveness and robustness of the proposed framework on four representative benchmark problems: a high-dimensional advection-diffusion system, a nonlinear stochastic volatility model, a high-dimensional PDE system, and Lorenz systems in both single-scale and two-scale settings.
	\end{itemize}
	
	The remainder of this paper is organized as follows. Section~\ref{sec:problem} introduces the Bayesian filtering and smoothing setting and reviews the relevant preliminaries, including conditional normalizing flows and recurrent neural networks. Section~\ref{sec:method} presents FLUID, our unified amortized framework based on shared summary statistics, a forward flow, and a backward flow, and describes the associated training and inference procedures. Section~\ref{sec:method} also develops a flow-based particle filtering variant and discusses an ESS-based diagnostic when explicit model factors are available. Section~\ref{sec:experiments} presents the numerical experiments. Finally, Section~\ref{sec:conclusion} concludes with limitations and future directions.\section{Problem setup and preliminaries}\label{sec:problem}
	\subsection{Problem setup}
	We consider a discrete-time stochastic dynamical system described by the following
	state-space model:
	\begin{equation}
			\begin{aligned}
			u_t & = f(u_{t-1}, \epsilon_{u,t}), \\
			y_t & = h(u_t, \epsilon_{y,t}),
		\end{aligned}\label{SSM}
	\end{equation}
	where $t \in \mathbb{N}^+$ denotes discrete time, $u_t \in \mathbb{R}^{n_u}$ is the
	hidden state vector, and $y_t \in \mathbb{R}^{n_y}$ is the corresponding observation.
	The state $u_t$ evolves according to the (possibly nonlinear) transition function $f$,
	while $y_t$ is generated by the measurement function $h$. The process noise $\epsilon_{u,t}$ and
	observation noise $\epsilon_{y,t}$ are assumed mutually independent and
	identically distributed over time. For notational simplicity, we write $u_{1:t} \coloneqq  (u_1,\dots,u_t)$ for the state trajectory and $y_{1:t} \coloneqq  (y_1,\dots,y_t)$ for the observation sequence. 
	The above state-space model \eqref{SSM} induces the standard Markov factorization
\begin{equation*}
p(u_{1:t},y_{1:t})
=
p(u_1)\,p(y_1\mid u_1)\prod_{k=2}^t p(u_k\mid u_{k-1})\,p(y_k\mid u_k),
\end{equation*}
where $p(u_k\mid u_{k-1})$ and $p(y_k\mid u_k)$ denote the state transition and observation distributions, respectively. 
	We aim to design algorithms that simultaneously solve the following inference problems at
	each time $t$:
	\begin{itemize}[topsep=6pt, partopsep=0pt, leftmargin=\parindent]
		\item \textbf{Filtering.}
		Estimate the conditional distribution of the state $u_t$ given all observations $y_{1:t}$ up to time $t$:
		\begin{equation*}
		p(u_t \mid y_{1:t}).
		\end{equation*}
		\item \textbf{Path estimation.}
		Infer the entire state trajectory conditioned on all available observations:
		\begin{equation*}
		p(u_{1:t} \mid y_{1:t}).
		\end{equation*}
		Since both the state dimension of $u_{1:t}$ and the amount of data in $y_{1:t}$ increase with $t$, this problem typically becomes progressively more challenging over time.
		
		\item \textbf{Smoothing.}
		Estimate the conditional distribution of a past state $u_k$ given the full set of observations up to time $t$ (with $k<t$), i.e.
		\begin{equation*}
		p(u_k\mid y_{1:t}).
		\end{equation*}
	\end{itemize}
	We do not treat smoothing as a separate problem from path estimation. Instead, we focus on approximating the filtering distribution of the current state and the joint posterior of the entire state trajectory:
	\begin{equation*}
	p(u_t \mid y_{1:t})
	\quad\mathrm{and}\quad
	p(u_{1:t} \mid y_{1:t}), \qquad t\ge 1.
	\end{equation*}
	Although the smoothing distribution is formally obtained by marginalizing the posterior path distribution $p(u_{1:t}\mid y_{1:t})$, explicit computation of the marginal density $p(u_k\mid y_{1:t})$ for $k<t$ is generally intractable in high dimensions. Nevertheless, once samples can be drawn from the trajectory posterior $p(u_{1:t}\mid y_{1:t})$, the smoothing distribution can be assessed directly by marginalizing trajectory samples, i.e., simply taking these trajectory samples at time $k$. 
	
	In this work, we assume access only to a simulator for the state-space model, from which samples can be generated for both the state transition and 
	observation processes. We do not assume explicit knowledge of the functional forms of $f$ and $h$, 
	nor of the associated noise distributions. Equivalently, although the model may be written in terms of the transition distribution $p(u_t\mid u_{t-1})$ and the observation distribution $p(y_t\mid u_t)$, our approach does not require these densities to be available in closed form or to be explicitly evaluated; see Section~\ref{sec:method}. This setting arises naturally in many real-world applications \cite{deistler2025simulation}
	where the underlying dynamics are complex or only partially understood. Specifically, we are given $N$ simulated trajectories of the form $\{u_{1:T}^i, y_{1:T}^i\}_{i=1}^N$.
	Moreover, from the perspective of sequential inference, it is often important not only to characterize these posterior distributions for $1 \le t \le T$, where training data are available from the simulated trajectories, but also to propagate the associated posterior uncertainty to future states. Our goal is to learn such posterior approximations solely from the simulated trajectories described above and to deploy them both within and beyond the training horizon.
	To this end, we introduce two basic ingredients of our approach: conditional normalizing flows and recurrent neural networks.
	
	\subsection{Conditional normalizing flows}\label{CNF}
	Let $U \in \mathbb{R}^{d_u}$ be an unknown random vector, and let $s \in \mathbb{R}^{d_s}$ denote a conditioning variable. For each fixed $s$, let $p(\cdot \mid s)$ denote the (unknown) conditional density of $U$ given $s$. Our goal is to learn a conditional flow model with shared parameters $\theta$ such that, for each $s$ in a prescribed set of conditioning values, the induced density $p_\theta(\cdot \mid s)$ provides an accurate approximation to $p(\cdot \mid s)$. Equivalently, we seek a parametric family of conditional densities $\{p_\theta(\cdot \mid s)\}_{s}$ that approximates the target family $\{p(\cdot \mid s)\}_{s}$ over the range of conditioning variables of interest.
	
	Let $Z \in \mathbb{R}^{d_u}$ be an analytically tractable reference random variable with a known density $p_Z(z)$, e.g., Gaussian.
	A conditional normalizing flow seeks, for each $s$, an invertible mapping
	\begin{equation*}	
		f_\theta(\cdot; s): \mathbb{R}^{d_u} \to \mathbb{R}^{d_u}, \qquad z = f_\theta(u; s),
	\end{equation*}
	so that the transformed random variable, $U = f_\theta^{-1}(Z; s)$, is naturally equipped with a conditional density
	\begin{equation*}
		p_\theta(u \mid s)
		=
		p_Z\bigl(f_\theta(u; s)\bigr)
		\bigl| \det \nabla_u f_\theta(u; s) \bigr|,
		\label{eq:cnf_cov}
	\end{equation*}
	by the change-of-variables formula, where $\nabla_u f_\theta(u; s)$ denotes the Jacobian of $f_\theta(\cdot; s)$ with respect to $u$. This way, we are able to materialize a family of conditional distributions via the conditional normalizing flow. 
	
	To characterize complicated conditional distributions, flow-based models construct the map $f_\theta(u; s)$ by composing a sequence of bijections in a simpler form, namely,
	\begin{equation*}
		f_\theta(u; s)
		=
		f_{[L]}(\cdot; s) \circ f_{[L-1]}(\cdot; s) \circ \cdots \circ f_{[1]}(u; s),
		\label{eq:cnf_comp}
	\end{equation*}
	where each $f_{[i]}(\cdot; s)$ is invertible and admits a tractable Jacobian determinant.
	Writing $u_{[0]} = u$ and $u_{[i]} = f_{[i]}(u_{[i-1]}; s)$ for $i=1,\dots,L$, we have $u_{[L]} = f_\theta(u; s)$ and
	\begin{equation}
		\Bigl| \det \nabla_u f_\theta(u; s) \Bigr|
		=
		\prod_{i=1}^{L}
		\Bigl| \det \nabla_{u_{[i-1]}} f_{[i]}\bigl(u_{[i-1]}; s\bigr) \Bigr|.
		\label{eq:cnf_detprod}
	\end{equation}
	Computing the determinant $\det\nabla_u f_\theta$ for a general dense map in $\mathbb{R}^{d_u}$ typically costs $\mathcal{O}(d_u^3)$ operations via matrix decompositions, which can be prohibitive in high dimensions. Therefore, normalizing flows exploit structured architectures, such as triangular or autoregressive parameterizations, affine coupling transformations, and other designs to make the Jacobian computation tractable, so that the log-determinant of each constituent bijection $f_{[i]}$ in \eqref{eq:cnf_detprod} is available in closed form (or can be evaluated at substantially reduced cost). Consequently, numerous flow architectures have been proposed to balance expressiveness with computational efficiency; see, e.g., \cite{kobyzev2020normalizing,papamakarios2021normalizing} for detailed reviews. Here we employ the KRnet \cite{feng2022solving,he2025adaptive} architecture to construct the conditional transformation $f_\theta(\cdot; s)$, with full details provided in Appendix \ref{appendix:CNF}.
	
	To obtain an optimal parameter $\theta$, a natural criterion is to minimize the discrepancy between the target conditional density $p(\cdot\mid s)$ and the model density $p_\theta(\cdot\mid s)$, measured by the conditional Kullback--Leibler divergence averaged over the conditioning variable $s$. 
    For a given dataset $\{(u^{(n)}, s^{(n)})\}_{n=1}^{N}$, this is equivalent to maximizing the likelihood function
	\begin{equation*}
		\mathcal{L}_N(\theta)
		=
		\frac{1}{N}\sum_{n=1}^{N}\log p_\theta\bigl(u^{(n)} \mid s^{(n)}\bigr)
		=
		\frac{1}{N}\sum_{n=1}^{N}
		\left[
		\log p_Z\bigl(z^{(n)}\bigr)
		+
		\sum_{i=1}^{L}
		\log \Bigl| \det \nabla_{u_{[i-1]}^{(n)}} f_{[i]}\bigl(u_{[i-1]}^{(n)}; s^{(n)}\bigr) \Bigr|
		\right],
		\label{eq:cnf_mle}
	\end{equation*}
	where $u_{[0]}^{(n)} = u^{(n)}$, $u_{[i]}^{(n)} = f_{[i]}\bigl(u_{[i-1]}^{(n)}; s^{(n)}\bigr)$ for $i=1,\dots,L$, and $z^{(n)} = u_{[L]}^{(n)} = f_\theta\bigl(u^{(n)}; s^{(n)}\bigr)$.

	In the above formulation, the conditioning variable $s$ is assumed to be a 
	fixed-dimensional input. In our sequential inference setting, however, 
	the available information at time $t$ is the full observation history $y_{1:t}$, 
	whose length grows with $t$. We therefore introduce recurrent neural networks 
	to convert this variable-length sequence into compact fixed-dimensional 
	summary statistics for subsequent flow-based modeling.
	
	\subsection{Recurrent neural networks}\label{RNN}
	We employ a recurrent neural network (RNN), specifically a long short-term memory (LSTM) network, 
	to encode the observation history $y_{1:t}$ into summary statistics that can be used in the subsequent conditional normalizing flows.
	
	Let $y_{1:T}$ be an input time series with $y_t\in\mathbb{R}^{d_y}$ at each time. A standard (single-layer) RNN computes a sequence of hidden states $h_{1:T}$, with $h_t\in\mathbb{R}^{d_h}$, by the recurrence
	\begin{equation}
		h_t = \phi\bigl( W_{yh} y_t + W_{hh} h_{t-1} + b_h \bigr),
		\label{eq:rnn-hidden}
	\end{equation}
	with initial state $h_0$ (typically set to $0$), weight matrices
	$W_{yh}\in\mathbb{R}^{d_h\times d_y}$ and
	$W_{hh}\in\mathbb{R}^{d_h\times d_h}$, bias vector
	$b_h\in\mathbb{R}^{d_h}$, and an elementwise nonlinearity function $\phi$ (e.g.\ $\tanh$, logistic sigmoid, or ReLU).
	While \eqref{eq:rnn-hidden} provides a simple mechanism for processing temporal data, it is well known to suffer from vanishing and exploding gradients when long-range temporal dependencies are present.
	
	To mitigate this difficulty, we employ a long short-term memory (LSTM) network. A standard (single-layer) LSTM augments the hidden state with an additional cell state, which introduces an auxiliary state variable for retaining information over longer temporal horizons. More precisely, given the current input $y_t\in\mathbb{R}^{d_y}$ together with the previous hidden and cell states $(h_{t-1},c_{t-1})\in\mathbb{R}^{d_h}\times\mathbb{R}^{d_h}$, a single LSTM step produces updated states $(h_t,c_t)\in\mathbb{R}^{d_h}\times\mathbb{R}^{d_h}$. The gate variables and candidate update are defined by
	\begin{equation}
		\begin{aligned}
			i_t &= \sigma\bigl(W_{yi} y_t + W_{hi} h_{t-1} + b_i\bigr), \\
			f_t &= \sigma\bigl(W_{yf} y_t + W_{hf} h_{t-1} + b_f\bigr), \\
			o_t &= \sigma\bigl(W_{yo} y_t + W_{ho} h_{t-1} + b_o\bigr), \\
			g_t &= \tanh\bigl(W_{yg} y_t + W_{hg} h_{t-1} + b_g\bigr),
		\end{aligned}
		\label{eq:lstm-cell}
	\end{equation}
	where $\sigma$ denotes the logistic sigmoid function, $i_t$, $f_t$, and $o_t$ are the input, forget, and output gates, respectively, and $g_t$ is a candidate update formed from the current input $y_t$ and the previous hidden state $h_{t-1}$. We set the initial states $(h_0, c_0)$ to $(\mathbf{0}, \mathbf{0})$. The cell state and hidden state are then updated according to
	\begin{equation}
		\begin{aligned}
			c_t &= f_t \odot c_{t-1} + i_t \odot g_t, \\
			h_t &= o_t \odot \tanh(c_t),
		\end{aligned}
		\label{eq:lstm-update}
	\end{equation}
	where $\odot$ denotes the Hadamard product. The updated cell state $c_t$ combines retained information from $c_{t-1}$ with newly incorporated information from $g_t$, while the hidden state $h_t$ is obtained as a gated transform of the updated cell state. Figure~\ref{fig:lstm} provides a schematic illustration of a single-layer LSTM cell. All weight matrices and bias vectors have compatible dimensions; for example,
	\begin{equation*}
		W_{yi}, W_{yf}, W_{yo}, W_{yg} \in \mathbb{R}^{d_h\times d_y}, \qquad
		W_{hi}, W_{hf}, W_{ho}, W_{hg} \in \mathbb{R}^{d_h\times d_h}, \qquad
		b_i,b_f,b_o,b_g \in \mathbb{R}^{d_h}.
	\end{equation*}
	\begin{figure}[h]
        \vspace{-24pt}
		\centering\includegraphics[width=0.7\textwidth, trim = {0em 0em 0em 0em}, clip]{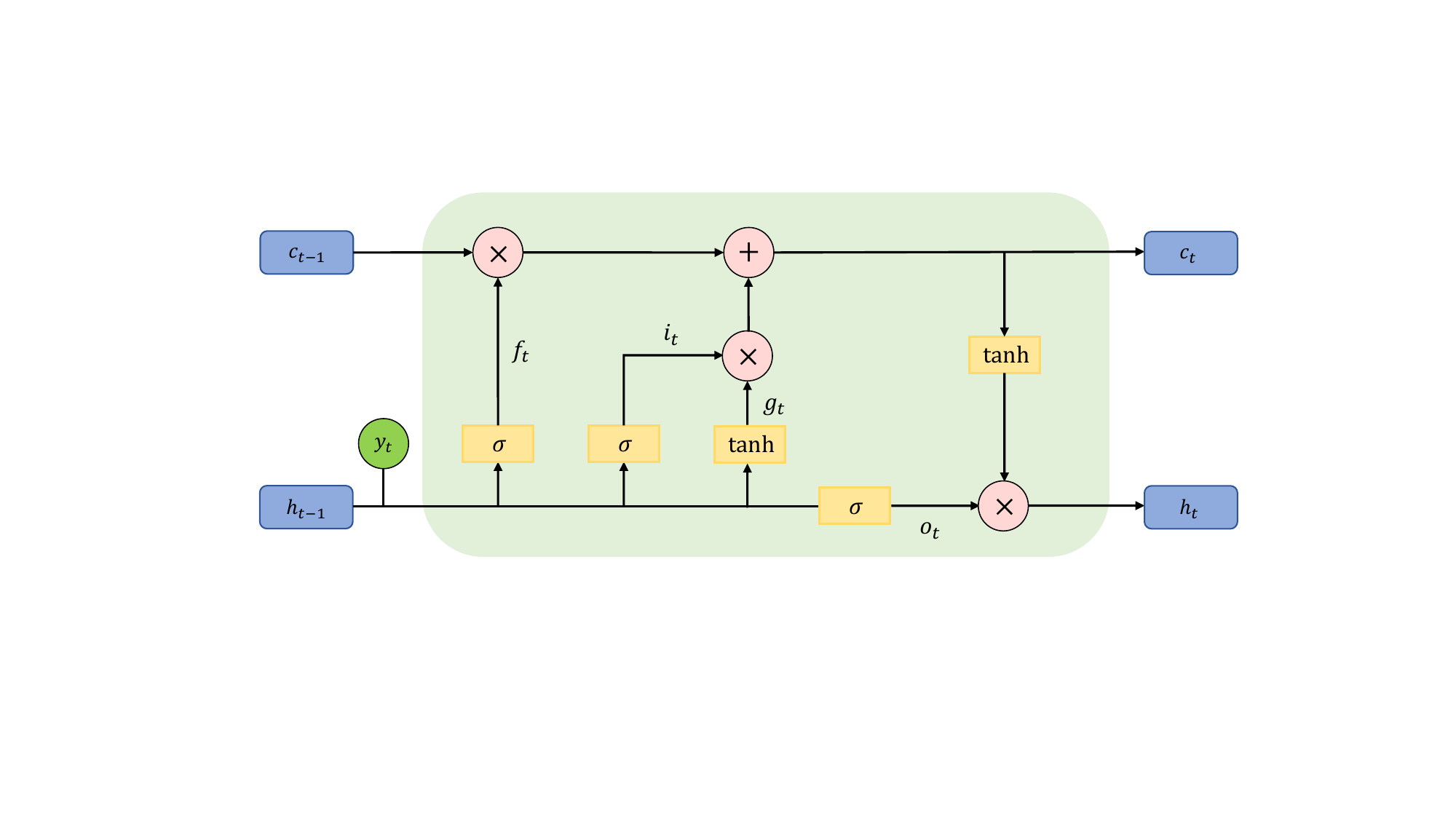}
		\caption{Schematic of a single-layer LSTM cell.}
		\label{fig:lstm}
	\end{figure}
	
	In practice, we use a multi-layer LSTM obtained by stacking $L$ single-layer LSTMs. For each $\ell\in \{1,\dots,L\}$ and time $t$ we denote by $h_t^{(\ell)},c_t^{(\ell)}\in\mathbb{R}^{d_h}$ the hidden and cell states in layer $\ell$, and the hidden states $h_t^{(\ell)}$ are used as inputs for the next layer. The layerwise inputs are defined recursively by
	\begin{equation*}
		z_t^{(1)} \coloneqq  y_t,\qquad
		z_t^{(\ell)} \coloneqq  h_t^{(\ell-1)}\quad\mathrm{for} \quad \ell\ge2,
	\end{equation*}
	and the update at layer $\ell$ is given by an LSTM mapping
	\begin{equation*}
		\mathcal{G}_\ell : \left(z_t^{(\ell)},h_{t-1}^{(\ell)},c_{t-1}^{(\ell)}\right)
		\longmapsto \left(h_t^{(\ell)},c_t^{(\ell)}\right),
		\qquad \ell=1,\dots,L,
	\end{equation*}
	with layer-specific parameters $(W_{y\cdot}^{(\ell)},W_{h\cdot}^{(\ell)},b_{\cdot}^{(\ell)})$. For the update at time step $t=1$, the initial hidden and cell states are conventionally set to zero vectors, i.e., $h_0^{(\ell)} = \mathbf{0}$ and $c_0^{(\ell)} = \mathbf{0}$ for all $\ell = 1, \dots, L$. The top-layer hidden state $h_t^{(L)}$ is taken as the recurrent representation at time $t$. Then, we obtain a fixed-dimensional summary of the history $y_{1:t}$ by an affine transformation of the top-layer hidden state $h_t^{(L)}$,
	\begin{equation*}
		s_t = W_s h_t^{(L)} + b_s,\qquad
		W_s \in \mathbb{R}^{d_s\times d_h},\quad
		b_s\in\mathbb{R}^{d_s}.
		\label{eq:lstm-summary}
	\end{equation*}
	The vectors $s_t$ serve as fixed-dimensional summary statistics of $y_{1:t}$ and will be used as conditioning variables in subsequent probabilistic models.

	\begin{remark}
		A key structural difference between the standard RNN recurrence \eqref{eq:rnn-hidden} and the LSTM update (\ref{eq:lstm-cell}-\ref{eq:lstm-update}) is the additive cell update \eqref{eq:lstm-update}. Unrolling \eqref{eq:lstm-update} gives, for $t\ge1$,
		\begin{equation}
			c_t
			= \Bigl(\prod_{j=1}^{t} f_j\Bigr)\odot c_0
			+ \sum_{s=1}^{t}
			\Bigl(\prod_{j=s+1}^{t} f_j\Bigr)\odot \bigl( i_s \odot g_s \bigr),\nonumber
		\end{equation}
		where an empty product is the all-ones vector. Hence each past update $i_s\odot g_s$ is carried to time $t$ with a weight given by products of forget gates. In particular, we have
		\begin{equation*}
			\frac{\partial c_t}{\partial c_{t-1}} = f_t,
			\qquad
			\frac{\partial c_t}{\partial c_{t-k}} = \prod_{j=t-k+1}^{t} f_j,
		\end{equation*}
		so gradients along the cell state are governed by gate products rather than repeated application of a fixed linear map and nonlinearity. When $f_j\approx 1$ over an interval, information and gradients are preserved across that interval, which mitigates vanishing gradients and supports long-range dependencies.
	\end{remark}
	
	\section{Methodology}\label{sec:method}
	In this section, we describe the proposed unified amortized framework for filtering and smoothing. We first present the model architecture and the corresponding training procedure, and then describe the associated filtering and smoothing inference procedures. We also introduce a flow-based particle filtering variant based on learned conditional flow models, and discuss an effective sample size (ESS) diagnostic when explicit model factors are available.
	
	\subsection{Amortized filtering and smoothing}\label{sec:method-amortized}
	
	We first consider the filtering problem. For each $t=1,\dots,T$, our goal is to approximate the filtering distribution 
	$p(u_t\mid y_{1:t})$ by a conditional normalizing flow, referred to as the forward flow. The main difficulty is that the conditioning variable is the observation history $y_{1:t}$, whose dimension grows with $t$. To obtain a fixed-dimensional representation of this history, we introduce a summary statistic
	\begin{equation*}
		s_t=\Enc(y_{1:t};\psi)\in\mathbb{R}^{h},
	\end{equation*}
	where $\Enc(\cdot;\psi)$ is parameterized by a multi-layer LSTM; its architecture is described in Section~\ref{RNN}. The forward flow then uses $s_t$ as the conditioning variable and represents the filtering distribution as
	\begin{equation}\label{eq:filtering_model}
		p(u_t\mid y_{1:t}) \approx p_{\theta_1,\psi}(u_t\mid s_t),
		\qquad s_t=\Enc(y_{1:t};\psi),
	\end{equation}
	where $\theta_1$ and $\psi$ denote the parameters of the conditional normalizing flow $p_{\theta_1}(u_t\mid s_t)$ and the summary network $s_t=\Enc(y_{1:t};\psi)$, respectively; see Figure~\ref{fig:rnn} for an illustration.
	
	\begin{figure}[h]
		\centering
		\includegraphics[width=0.8\linewidth, trim = {0em 1.5em 0em 1.3em}, clip]{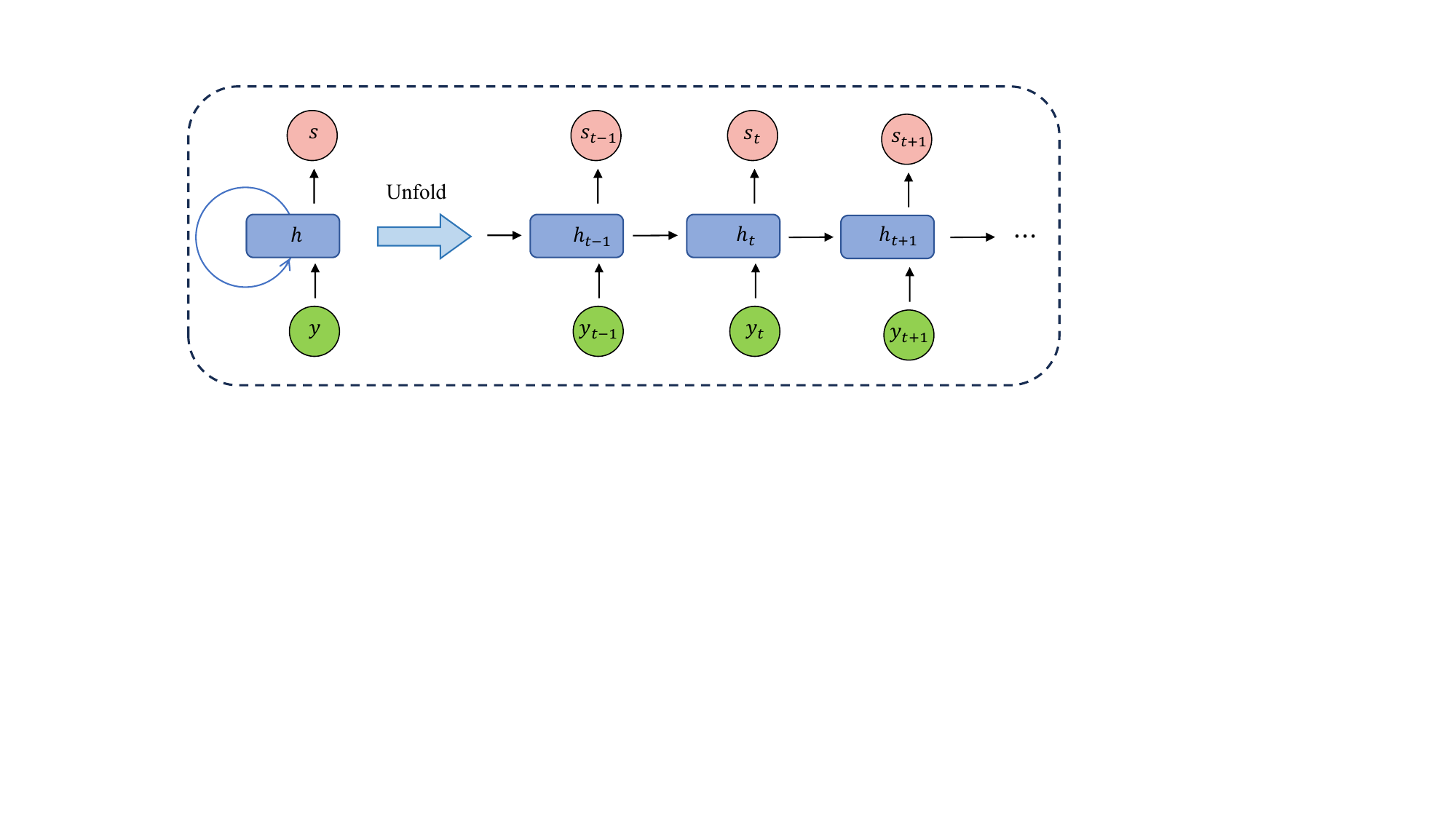}
		\caption{Schematic of the recurrent summary network. The input observation sequence $y_{1:t}$ is processed by a multi-layer LSTM to produce hidden states at each time step. A linear transformation of the top-layer hidden state yields a fixed-dimensional summary statistic $s_t$ that encodes information from the entire history $y_{1:t}$.}
		\label{fig:rnn}
	\end{figure}
	
	We next turn to the smoothing problem. For any $t\ge 2$, the trajectory smoothing distribution admits the standard backward factorization
	\begin{equation}\label{eq:path_factorization}
		p(u_{1:t}\mid y_{1:t})
		=
		p(u_t\mid y_{1:t})
		\prod_{k=1}^{t-1} p(u_k\mid u_{k+1},y_{1:k}),
	\end{equation}
	so that smoothing is determined by the terminal filtering distribution and the backward transition distributions $p(u_k\mid u_{k+1},y_{1:k})$. Motivated by this factorization, we introduce a second conditional normalizing flow, referred to as the backward flow, to approximate the backward transitions,
	\begin{equation}\label{eq:backward_model}
		p(u_t\mid u_{t+1},y_{1:t}) \approx p_{\theta_2,\psi}(u_t\mid u_{t+1},s_t),
		\qquad s_t=\Enc(y_{1:t};\psi),
	\end{equation}
	where $\theta_2$ and $\psi$ denote the parameters of the conditional normalizing flow $p_{\theta_2}(u_t \mid u_{t+1},s_t)$ and the summary network $s_t=\Enc(y_{1:t};\psi)$, respectively. The same summary network is used in both \eqref{eq:filtering_model} and \eqref{eq:backward_model}, with shared parameter $\psi$, so that the two conditional models are trained under a common representation of $y_{1:t}$.
	
	For brevity, we refer to the proposed framework as FLUID (flow-based unified inference for dynamics). Given simulated trajectories $\{(u_{1:T}^i,y_{1:T}^i)\}_{i=1}^{N}$, the forward and backward flows in FLUID are trained by maximum likelihood. Specifically, the training objective is defined as the weighted average negative log-likelihood
	\begin{equation}\label{eq:loss_shared}
		\begin{aligned}
			\min_{\theta_1,\theta_2,\psi}\;\;
			\mathcal{L}_{\theta_1,\theta_2,\psi}
			=
			& -\frac{1}{NT}\sum_{i=1}^{N}\sum_{t=1}^{T}
			\log p_{\theta_1,\psi}\bigl(u_t^i\mid s_t^i\bigr)
			-\frac{\lambda}{N(T-1)}\sum_{i=1}^{N}\sum_{t=1}^{T-1}
			\log p_{\theta_2,\psi}\bigl(u_t^i\mid u_{t+1}^i,s_t^i\bigr),
		\end{aligned}
	\end{equation}
	where $s_t^i=\Enc(y_{1:t}^i;\psi)$ and $\lambda>0$ is a weighting coefficient. Sharing the summary network couples the two likelihood terms through the unified representation $s_t$, so that both conditional flows depend on the same finite-dimensional summary statistic of the observation history.

	With the choice $\lambda=(T-1)/T$, the loss \eqref{eq:loss_shared} can be rewritten as
	\begin{align*}
		\mathcal{L}_{\theta_1,\theta_2,\psi}
		& = - \frac{1}{NT}\sum_{i=1}^{N}\sum_{t=1}^{T}\log p_{\theta_1,\psi}(u_t^i\mid s_t^i)
		- \frac{1}{NT}\sum_{i=1}^{N}\sum_{t=1}^{T-1}\log p_{\theta_2,\psi}(u_t^i\mid u_{t+1}^i,s_t^i) \\
		& =- \frac{1}{NT}\sum_{i=1}^{N}\sum_{t=1}^{T-1}\log p_{\theta_1,\psi}(u_t^i\mid s_t^i)
		- \frac{1}{NT}\sum_{i=1}^{N}\log \left(p_{\theta_1,\psi}(u_T^i\mid s_T^i)\prod_{t=1}^{T-1} p_{\theta_2,\psi}(u_t^i\mid u_{t+1}^i,s_t^i)\right).
	\end{align*}
	For notational convenience, we denote the trajectory density induced by the terminal filtering distribution and the learned backward transitions by
	\begin{equation*}
		p_{\theta_1,\theta_2,\psi}^{\mathrm{smoothing}}(u_{1:T}\mid s_{1:T})
		\coloneqq
		p_{\theta_1,\psi}(u_T\mid s_T)\prod_{t=1}^{T-1} p_{\theta_2,\psi}(u_t\mid u_{t+1},s_t),
	\end{equation*}
	which yields the loss function 
	\begin{equation*}
		\mathcal{L}_{\theta_1,\theta_2,\psi}
		=
		- \frac{1}{NT}\sum_{i=1}^{N}\sum_{t=1}^{T-1}\log p_{\theta_1,\psi}(u_t^i\mid s_t^i)
		- \frac{1}{NT}\sum_{i=1}^{N}\log p_{\theta_1,\theta_2,\psi}^{\mathrm{smoothing}}(u_{1:T}^i\mid s_{1:T}^i).
	\end{equation*}
	In this form, the objective consists of a marginal term for intermediate-time filtering and a path-wise term for the induced trajectory model. Because both are conditioned on the shared summaries $\{s_t\}$, the optimization imposes an implicit consistency regularization: the learned backward transitions must remain coherent with the terminal-time filtering distribution under a common representation of the observation history. This consistency is important for stable backward simulation and accurate smoothing trajectories.
	
	\begin{remark}
		Alternatively, one may train the forward and backward flows separately:
		\begin{align*}
			\min_{\theta_1,\psi_1}\;\;
			& -\frac{1}{NT}\sum_{i=1}^{N}\sum_{t=1}^{T}
			\log p_{\theta_1,\psi_1}\bigl(u_t^i\mid s_{F,t}^i\bigr),
			\qquad s_{F,t}^i=\Enc(y_{1:t}^i;\psi_1), \\
			\min_{\theta_2,\psi_2}\;\;
			& -\frac{1}{N(T-1)}\sum_{i=1}^{N}\sum_{t=1}^{T-1}
			\log p_{\theta_2,\psi_2}\bigl(u_t^i\mid u_{t+1}^i,s_{S,t}^i\bigr),
			\qquad s_{S,t}^i=\Enc(y_{1:t}^i;\psi_2).
		\end{align*}
		Appendix \ref{appendix:summary} provides an information-theoretic perspective on this separation, showing that the optimal summary statistics $(s_{F,t},s_{S,t})$ for the two objectives need not coincide in general. Nevertheless, such decoupled training imposes no consistency constraint between the learned forward flow and backward flow, which may lead to degraded smoothing performance. We also show in Appendix \ref{appendix:summary} that if a sufficient summary statistic exists, then a single shared summary network can be optimal for both objectives.
	\end{remark}
	
	The overall training procedure of FLUID is summarized in Algorithm~\ref{alg:train}. A schematic illustration of FLUID is shown in Figure~\ref{fig:method}.
	
	\begin{algorithm}[h]
		\caption{Training of FLUID}
		\label{alg:train}
		\begin{algorithmic}[1]
			\State \textbf{Input:} Training trajectories $\{(u_{1:T}^i,y_{1:T}^i)\}_{i=1}^{N}$; number of epochs $N_e$; mini-batch size $N_b$.
			\State \textbf{Output:} Learned parameters $(\theta_1,\theta_2,\psi)$.
			\State Initialize parameters $(\theta_1,\theta_2,\psi)$.
			\For{epoch $=1,2,\dots,N_e$}
			\For{each mini-batch $\mathcal{B}\subset\{1,\dots,N\}$ with $|\mathcal{B}|=N_b$}
			\State For all $i\in\mathcal{B}$, compute summary statistics $s_t^i=\Enc(y_{1:t}^i;\psi)$ for $t=1,\dots,T$.
			\State Update $(\theta_1,\theta_2,\psi)$ by stochastic gradient descent on the mini-batch objective\vspace{-8pt}
			\begin{equation*}
				\mathcal{L}_{\theta_1,\theta_2,\psi}(\mathcal{B})
				=
				-\frac{1}{N_bT}\sum_{i\in\mathcal{B}}\sum_{t=1}^{T}\log p_{\theta_1,\psi}(u_t^i\mid s_t^i)
				-\frac{\lambda}{N_b(T-1)}\sum_{i\in\mathcal{B}}\sum_{t=1}^{T-1}\log p_{\theta_2,\psi}(u_t^i\mid u_{t+1}^i,s_t^i).\vspace{-8pt}
			\end{equation*}
			\EndFor
			\EndFor
			\State \textbf{Return:} $(\theta_1,\theta_2,\psi)$.
		\end{algorithmic}
	\end{algorithm}
	
	\begin{figure}[h]
		\centering
		\includegraphics[width=0.9\linewidth, height=0.55\linewidth]{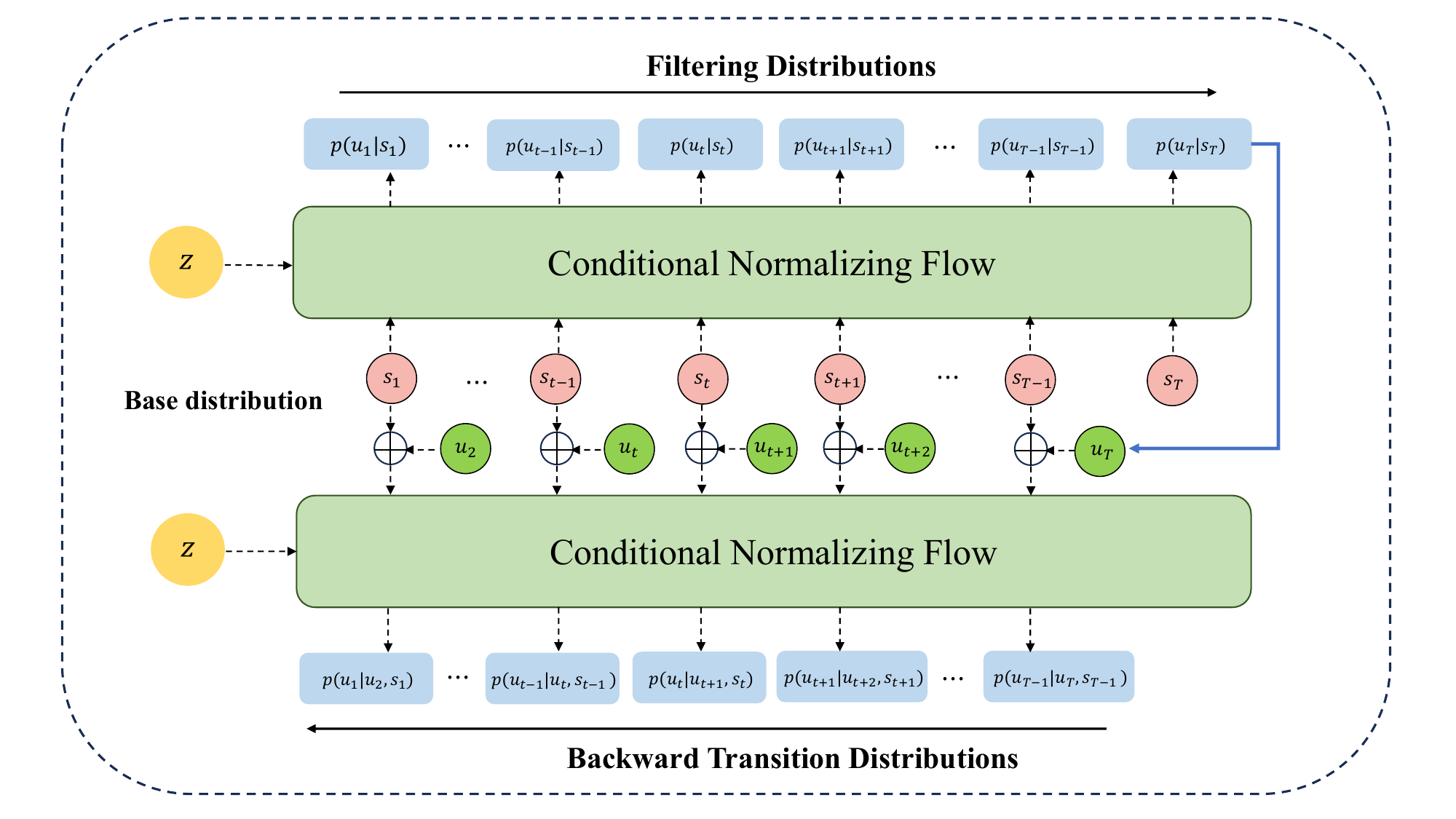}
		\caption{Schematic of FLUID. A shared recurrent summary network encodes the observation history $y_{1:t}$ into a fixed-dimensional summary statistic $s_t$. Conditioned on the shared summary $s_t$, the forward and backward flows approximate the filtering distribution $p(u_t\mid y_{1:t})$ and the backward transition distribution $p(u_t\mid u_{t+1},y_{1:t})$, respectively.}
		\label{fig:method}
	\end{figure}

After training, samples from the filtering distribution at time $t$ are obtained by first computing the summary statistic $s_t=\Enc(y_{1:t};\psi)$ and subsequently drawing
\begin{equation*}
    u_t^{(j)}\sim p_{\theta_1,\psi}(\cdot\mid s_t),
    \qquad j=1,\dots,N_{\mathrm{sample}}.
\end{equation*}
Starting from this set of filtering samples $\{u_t^{(j)}\}_{j=1}^{N_{\mathrm{sample}}}$, we can generate trajectory samples from the smoothing path distribution $p^{\mathrm{smoothing}}_{\theta_1,\theta_2,\psi}(u_{1:t}\mid y_{1:t})$ by recursively applying the backward flow according to \eqref{eq:path_factorization}. Specifically, for $k=t-1,\ldots,1$, we draw samples
\begin{equation*}
    u_k^{(j)} \sim p_{\theta_2,\psi}\bigl(\cdot \mid u_{k+1}^{(j)},s_k\bigr),
     \quad s_k=\Enc(y_{1:k};\psi),
    \qquad j=1,\dots,N_{\mathrm{sample}}.
\end{equation*}
This procedure yields sample paths $\{u_{1:t}^{(j)}\}_{j=1}^{N_{\mathrm{sample}}}$ that approximate the true path distribution $p(u_{1:t}\mid y_{1:t})$. Furthermore, by terminating the backward recursion at any intermediate step $k\geq 1$, we readily obtain samples $\{u_{k}^{(j)}\}_{j=1}^{N_{\mathrm{sample}}}$ from $p^{\mathrm{smoothing}}_{\theta_1,\theta_2,\psi}(u_{k}\mid y_{1:t})$, which serve as an approximation to the smoothing distribution $p(u_{k}\mid y_{1:t})$. The filtering and smoothing inference procedures in FLUID are summarized in Algorithm~\ref{alg:filtering} and  Algorithm~\ref{alg:predict-smooth} respectively.
	
	\begin{algorithm}[h]
		\caption{Filtering inference in FLUID}
		\label{alg:filtering}
		\begin{algorithmic}[1]
			\State \textbf{Input:} Observation history $y_{1:t}$; number of samples $N_{\mathrm{sample}}$; learned parameters $(\theta_1,\psi)$.
			\State \textbf{Output:} Filtering samples $\{u_t^j\}_{j=1}^{N_{\mathrm{sample}}}$ approximating $p(u_t\mid y_{1:t})$.
			\State Compute the summary statistic $s_t=\Enc(y_{1:t};\psi)$.
			\State Sample $\{u_t^j\}_{j=1}^{N_{\mathrm{sample}}}$ from $p_{\theta_1,\psi}(u_t\mid s_t)$.
			\State \textbf{Return:} $\{u_t^j\}_{j=1}^{N_{\mathrm{sample}}}$.
		\end{algorithmic}
	\end{algorithm}
	
	\begin{algorithm}[h]
		\caption{Smoothing inference in FLUID}
		\label{alg:predict-smooth}
		\begin{algorithmic}[1]
			\State \textbf{Input:} Observation history $y_{1:t}$; number of samples $N_{\mathrm{sample}}$; learned parameters $(\theta_1,\theta_2,\psi)$.
			\State \textbf{Output:} Smoothing paths $\{u_{1:t}^j\}_{j=1}^{N_{\mathrm{sample}}}$ approximating $p(u_{1:t}\mid y_{1:t})$.
			\State Compute summary statistics $s_k=\Enc(y_{1:k};\psi)$ for $k=1,\dots,t$.
			\State Sample $\{u_t^j\}_{j=1}^{N_{\mathrm{sample}}}$ from $p_{\theta_1,\psi}(u_t\mid s_t)$.
			\State Initialize the $j$th path with terminal state $u_t^j$ for $j=1,\dots,N_{\mathrm{sample}}$.
			\For{$k=t-1,t-2,\dots,1$}
			\State Sample $\{u_k^j\}_{j=1}^{N_{\mathrm{sample}}}$ from $p_{\theta_2,\psi}(u_k\mid u_{k+1}^j,s_k)$.
			\State Extend the $j$th path by prepending $u_k^j$ for $j=1,\dots,N_{\mathrm{sample}}$.
			\EndFor
			\State \textbf{Return:} $\{u_{1:t}^j\}_{j=1}^{N_{\mathrm{sample}}}$.
		\end{algorithmic}
	\end{algorithm}
	\subsection{Flow-based particle filtering}\label{sec:method-particle}
	
	Beyond the amortized filtering procedure provided by FLUID, which approximates the marginal state posterior distributions, we can also use FLUID to derive recursive flow-based particle filters. This yields an alternative way for updating the filtering distributions forward in time. Although the marginals produced by the pretrained conditional flow model may carry bias, propagating them forward as particles and correcting them with subsequently assimilated observations may progressively reduce this initialization bias. The resulting approach combines the efficiency of pretrained conditional flow models with the corrective updates offered by sequential Monte Carlo corrections.
	
	Recall that the filtering distribution satisfies the standard recursion
	\begin{align*}
		p(u_k\mid y_{1:k})
		&=
		\frac{1}{p(y_k\mid y_{1:k-1})}
		\int p(u_k,y_k\mid u_{k-1})\,p(u_{k-1}\mid y_{1:k-1})\,\mathrm{d}u_{k-1} \\
		&\propto
		\int p(u_k,y_k\mid u_{k-1})\,p(u_{k-1}\mid y_{1:k-1})\,\mathrm{d}u_{k-1}.
	\end{align*}
	Here $p(u_{k-1}\mid y_{1:k-1})$ denotes the filtering distribution at the previous step, and $p(u_k,y_k\mid u_{k-1})$ is the joint transition distribution of the state and observation at time $k$ conditioned on the previous state. In the classical state-space model setting, the joint distribution $p(u_k,y_k\mid u_{k-1})$ can be factorized as $p(u_k,y_k\mid u_{k-1}) = p(u_k\mid u_{k-1}) p(y_k\mid u_{k})$, where $p(u_k\mid u_{k-1})$ and $p(y_k\mid u_{k})$ denotes the state transition density and likelihood, respectively. In our setting, however, these densities are not assumed to be explicitly available. Instead, we only require access to simulated states and observations, as is typical in simulation-based inference.
	
	Since the filtering distribution is generally unavailable in closed form, particle filtering methods approximate it by an empirical distribution induced by a set of weighted samples.
	To construct a sequential update from the previous filtering distribution $p(u_{k-1}\mid y_{1:k-1})$ to the new filtering distribution $p(u_{k}\mid y_{1:k})$, we consider an alternative factorization of the joint transition distribution
	\begin{equation*}\label{eq:pf_factorization}
		p(u_k,y_k\mid u_{k-1})
		=
		p(y_k\mid u_{k-1})\,p(u_k\mid y_k,u_{k-1}).
	\end{equation*}
	This factorization follows the fully adapted proposal of particle filter \cite{doucet2000sequential}: the predictive likelihood $p(y_k\mid u_{k-1})$ is used to assess how well each particle from the previous filtering distribution explains the newly arrived observation $y_k$, while the data-conditioned transition density $p(u_k\mid y_k,u_{k-1})$ serves as an observation-adapted proposal for propagating particles to the new time step. 
	
	Following this idea, we approximate the predictive density of the current observation given the previous state by
	\begin{equation*}
		p(y_k\mid u_{k-1}) \approx p_{\theta_3}(y_k\mid u_{k-1}),
	\end{equation*}
	where $p_{\theta_3}(y_k\mid u_{k-1})$ is the density induced by a conditional normalizing flow. This model is used to reweight particles $u_{k-1}$ according to their consistency with the current observation $y_k$. In this way, particles that are more likely to explain the new data receive larger importance weights. We also introduce a second conditional normalizing flow, parametrized by \(\theta_4\), to approximate the data-conditioned transition density,
	\begin{equation}
		p(u_k\mid y_k,u_{k-1}) \approx p_{\theta_4}(u_k\mid y_k,u_{k-1}).\label{eq:pf_flow_models}
	\end{equation}
	This flow is then used as a proposal mechanism to generate particles at time \(k\) conditional on both the previous state \(u_{k-1}\) and the current observation \(y_k\). Compared with propagating particles solely according to the latent dynamics, this observation-adapted proposal can reduce weight degeneracy and improve the efficiency of the sequential update.
	The conditional densities $p_{\theta_3}(y_k\mid u_{k-1})$ and $p_{\theta_4}(u_k\mid y_k,u_{k-1})$ can be trained offline from simulated data $\{(u_{k-1}^i,u_k^i,y_k^i)\}_{i=1}^{N}$ by minimizing the negative log-likelihoods
	\begin{align*}
		\min_{\theta_3} -\frac{1}{N}\sum_{i=1}^{N}\log p_{\theta_3}(y_k^i\mid u_{k-1}^i), & \quad
		\min_{\theta_4} -\frac{1}{N}\sum_{i=1}^{N}\log p_{\theta_4}(u_k^i\mid y_k^i,u_{k-1}^i),
	\end{align*}
	respectively.

	Given weighted particles $\{(u_{k-1}^{(j)},\omega_{k-1}^{(j)})\}_{j=1}^{N}$ approximating the previous filtering distribution $p(u_{k-1}\mid y_{1:k-1})$, one step of the flow-based particle filter proceeds as follows:
	\begin{itemize}
		\item \textbf{Predictive weighting.}
		For each particle $u_{k-1}^{(j)}$, evaluate the learned predictive density
		\begin{equation*}
			p_{\theta_3}(y_k\mid u_{k-1}^{(j)}),
		\end{equation*}
		and define the auxiliary weights
		\begin{equation*}
			\tilde{\omega}_{k-1}^{(j)}
			=
			\frac{\omega_{k-1}^{(j)}\,p_{\theta_3}(y_k\mid u_{k-1}^{(j)})}
			{\sum_{\ell=1}^{N}\omega_{k-1}^{(\ell)}\,p_{\theta_3}(y_k\mid u_{k-1}^{(\ell)})},
			\qquad j=1,\dots,N.
		\end{equation*}
		
		\item \textbf{Resampling.}
		Draw ancestor indices
		\begin{equation*}
			a^{(j)} \sim \mathrm{Cat}\bigl(\tilde{\omega}_{k-1}^{(1:N)}\bigr),
			\qquad j=1,\dots,N,
		\end{equation*}
		where $\mathrm{Cat}(\cdot)$ denotes the categorical distribution on $\{1,\dots,N\}$.
		
		\item \textbf{Propagation.}
		Conditioned on the selected ancestor and the current observation $y_k$, generate new particles from the learned proposal density
		\begin{equation*}
			u_k^{(j)} \sim p_{\theta_4}(\cdot\mid y_k,u_{k-1}^{(a^{(j)})}),
			\qquad j=1,\dots,N.
		\end{equation*}
		This yields the empirical distribution  
		\begin{equation*}
			\pi_k^{N}(\mathrm{d}x)
			\coloneqq
			\frac{1}{N}\sum_{j=1}^{N}\delta_{u_k^{(j)}}(\mathrm{d}x)
		\end{equation*}
		that approximates the current filtering distribution $p(u_{k}\mid y_{1:k})$.
	\end{itemize}
	
	\begin{remark}
		Following the conventional formulation of state-space models, the joint transition density can also be factorized as
		\begin{equation*}
			p(u_k,y_k\mid u_{k-1})
			=
			p(u_k\mid u_{k-1})\,p(y_k\mid u_k).
		\end{equation*}
		Accordingly, one may approximate the state transition density and the likelihood function using conditional normalizing flows,
		\begin{equation*}
		    p(u_k\mid u_{k-1}) \approx p_{\theta_5}(u_k\mid u_{k-1}),
		\qquad
		p(y_k\mid u_k) \approx p_{\theta_6}(y_k\mid u_k),
		\end{equation*}
		respectively. These two conditional densities can again be trained separately using simulated data. This leads to a bootstrap-type particle filtering strategy, in which particles are first propagated according to $p_{\theta_5}(u_k\mid u_{k-1})$ and then weighted using $p_{\theta_6}(y_k\mid u_k)$. In contrast, the proposal density $p_{\theta_4}(u_k\mid y_k,u_{k-1})$ defined in \eqref{eq:pf_flow_models} explicitly incorporates the current observation $y_k$, and therefore plays the role of an adapted proposal. As a result, it is expected to generate more informative particles than a bootstrap-type proposal based only on the latent dynamics.
	\end{remark}
	
	When the true state-transition density and likelihood function are available, the efficiency of the learned distributions can be assessed using the effective sample size (ESS) and the relative effective sample size (RESS). To this end, we compare the exact joint density
	\begin{equation*}
		p(u_t,y_t,u_{t-1}\mid y_{1:t-1})
		=
		p(u_t\mid u_{t-1})\,p(y_t\mid u_t)\,p(u_{t-1}\mid y_{1:t-1})
	\end{equation*}
	with the joint density induced by the learned distributions,
	\begin{equation*}
		q(u_t,y_t,u_{t-1}\mid y_{1:t-1})
		=
		p_{\theta_4}(u_t\mid y_t,u_{t-1})\,p_{\theta_3}(y_t\mid u_{t-1})\,p(u_{t-1}\mid y_{1:t-1}).
	\end{equation*}
	Let $\{(u_t^{(i)},y_t^{(i)},u_{t-1}^{(i)})\}_{i=1}^{N}$ be i.i.d. samples drawn from $q$, and define the corresponding importance weights
	\begin{equation*}
		\omega_i
		\coloneqq
		\frac{p(u_t^{(i)}\mid u_{t-1}^{(i)})\,p(y_t^{(i)}\mid u_t^{(i)})}
		{p_{\theta_4}(u_t^{(i)}\mid y_t^{(i)},u_{t-1}^{(i)})\,p_{\theta_3}(y_t^{(i)}\mid u_{t-1}^{(i)})},
		\qquad i=1,\dots,N.
	\end{equation*}
	Then the ESS and RESS are given by 
	\begin{equation*}
	\mathrm{ESS}
	=
	\frac{(\sum_{i=1}^{N}\omega_i)^2}{\sum_{i=1}^{N}\omega_i^2}, \qquad \mathrm{RESS}=\frac{\mathrm{ESS}}{N}.
    \end{equation*}
	Recall the chi-squared divergence of the exact density $p$ from the approximation density $q$,
	\begin{equation*}
		\chi^2(p\|q)
		\coloneqq
		\int \frac{p(x)^2}{q(x)}\,\mathrm{d}x - 1,
		\qquad x=(u_t,y_t,u_{t-1}).
	\end{equation*}
	It yields an estimate
	\begin{equation*}
		\chi^2(p\|q)\approx \frac{N}{\textrm{ESS}}- 1.
	\end{equation*}
	This way, ESS provides a convenient diagnostic for monitoring the efficiency of flow-based particle filters when true likelihood and state transition density are available.

	\begin{remark}
		To draw samples from $q(u_t,y_t,u_{t-1}\mid y_{1:t-1})$, we first sample
		$u_{t-1}^{(i)}\sim p(\cdot\mid y_{1:t-1})$, then draw
		$y_t^{(i)}\sim p_{\theta_3}(\cdot\mid u_{t-1}^{(i)})$, and finally sample
		$u_t^{(i)}\sim p_{\theta_4}(\cdot\mid y_t^{(i)},u_{t-1}^{(i)})$.
		This yields
		$(u_t^{(i)},y_t^{(i)},u_{t-1}^{(i)})\sim q$.
		In our numerical experiments, we take $N=10^6$ to obtain a stable Monte Carlo estimate.
	\end{remark}
	
	\section{Numerical experiments}\label{sec:experiments}
	In this section, we present several numerical experiments to demonstrate the effectiveness of the proposed methods. We first assess the performance of FLUID on a one-dimensional linear advection diffusion problem in Section \ref{linear}, for which the exact solution is available. In this setting, we can quantify accuracy by computing the Kullback--Leibler (KL) divergence between the exact posterior and the predicted filtering and smoothing distributions. Next, we evaluate both FLUID and the flow-based particle filtering method on a two-factor stochastic volatility model in Section \ref{svm}.  
	We then apply FLUID to a Burgers' equation problem to validate the efficiency of the proposed approach for problems governed by more complex nonlinear partial differential equations (PDEs) in Section~\ref{burgers}, and finally consider the stochastic Lorenz system in Section~\ref{lorenz96}, where the state-transition density and the likelihood function are not available in the homogenized setting. 
	
	When assessing the performance of the proposed methods, only the one-dimensional linear advection diffusion problem admits an exact posterior distribution, which enables an explicit evaluation of KL divergence between the predicted and exact posteriors. 
	For all experiments, we additionally report three complementary performance metrics: the root mean squared error (RMSE), the maximum mean discrepancy (MMD), and the continuous ranked probability score (CRPS). 
	The implementation details of these metrics are provided in Appendix ~\ref{appendix:metric}.
    
	For all presented nonlinear data assimilation examples, we benchmark our filtering results against the state-of-the-art FBF method introduced in~\cite{wang2025flow}. Across these numerical experiments, the CNFs within both the forward and backward flows are constructed by stacking a conditional scale-bias layer and 6 conditional affine coupling layers. Each coupling layer is implemented using a random Fourier feature coupling network, which is configured with a Fourier feature embedding and a standard MLP with a depth of 6 and a width of 64 neurons. Further architectural details can be found in Appendix ~\ref{appendix:CNF}. The summary network $s_t=\Enc(y_{1:t};\psi)$ is implemented as a 4-layer LSTM, although this is reduced to a single layer for the one-dimensional linear advection-diffusion problem, and is subsequently followed by a linear transformation. The dimensionality of the resulting summary statistic $s_t$ is set such that $\dim(s_t) = 3\dim(y_t)$, with the exception of the two-factor stochastic volatility example, where $\dim(s_t) = 5\dim(y_t)$. For the training procedure, we employ the Adam optimizer with an initial learning rate of $0.001$. All simulations were implemented in PyTorch and performed on an NVIDIA V100 GPU with 32GB of memory.
	
	\subsection{One-dimensional linear advection diffusion problem}\label{linear}
	We consider the one-dimensional linear advection diffusion equation
	\begin{equation}\label{eq:advec-diff}
		\partial_t u(t,x) = a\,\partial_x u(t,x) + \kappa\,\partial_{xx}u(t,x),
		\qquad x\in[0,1],\ \ t\ge 0,\nonumber
	\end{equation}
	subject to periodic boundary conditions
	\begin{equation}\label{eq:periodic}
		u(t,0)=u(t,1), \qquad \partial_x u(t,0)=\partial_x u(t,1), \qquad t\ge 0,\nonumber
	\end{equation}
	and the initial condition
	\begin{equation}\label{eq:ic}
		u(0,x)=\sin(2\pi x), \qquad x\in[0,1].\nonumber
	\end{equation}
	We discretize $[0,1]$ with $n$ equidistant grid points
	\begin{equation}\label{eq:grid}
		x_j = j\Delta x,\qquad \Delta x=\frac{1}{n},\qquad j=0,1,\dots,n-1,\nonumber
	\end{equation}
	and denote the discrete state at observation times $t_k=k\Delta t$ by $u_k\in\mathbb{R}^n$, where
	$u_k^j $ represents $ u(t_k,x_j)$.
	
	Throughout the experiments, the observation time step $\Delta t$ is fixed for all spatial resolutions $n$. The training and testing trajectories are generated by an explicit or hybrid finite-difference (FD) solver with a stable fine time step $\delta t$, which typically becomes smaller as $n$ increases. We assume that
\begin{equation}\label{eq:dt-relation}
    \Delta t = m_n\,\delta t, \qquad m_n\in\mathbb{N},\nonumber
\end{equation}
and let $M_{\delta t}\in\mathbb{R}^{n\times n}$ denote the one-step FD update matrix associated with $\delta t$. The resulting linear propagation over one observation interval $\Delta t$ is
\begin{equation}\label{eq:coarse-prop}
    u_k = M u_{k-1}, \qquad M \coloneqq M_{\delta t}^{\,m_n}.\nonumber
\end{equation}

To account for unresolved dynamics as well as discretization and modeling errors, we introduce additive Gaussian noise at the $\Delta t$ scale:
\begin{equation}
    u_k = M u_{k-1} + \epsilon_{u,k}, \qquad \epsilon_{u,k} \sim \mathcal{N}(0,Q).\label{case1evolution}
\end{equation}
Equivalently, one may inject i.i.d.\ Gaussian noise at each FD step of size $\delta t$ with covariance matrix $Q_{\delta t}$ and then subsample every $m_n$ steps. This induces the coarse-scale covariance
\begin{equation}\label{eq:Q-induced}
    Q \;=\; \sum_{i=0}^{m_n-1} M_{\delta t}^{\,i}\,Q_{\delta t}
    \bigl(M_{\delta t}^{\,i}\bigr)^{\top}.\nonumber
\end{equation}
The observations are assumed to be linear and noisy:
\begin{equation}
    y_k = H u_k + \epsilon_{y,k}, \qquad \epsilon_{y,k} \sim \mathcal{N}(0,R),\label{case1observation}
\end{equation}
where $H\in\mathbb{R}^{n_y\times n}$ is the observation operator and
$R\in\mathbb{R}^{n_y\times n_y}$ is the observation noise covariance matrix. The initial state is also taken to be uncertain:
\begin{equation}
    u_0 \sim \mathcal{N}(\mu,\Sigma), \qquad
    \mu_j=\sin\left(\frac{2\pi j}{n}\right),\ j=0,1,\cdots,n-1, \qquad \Sigma=\sigma^2 I_n.\label{case1initial}
\end{equation}

This benchmark therefore defines a linear Gaussian state-space model through \eqref{case1evolution} and \eqref{case1observation}, so the exact filtering and smoothing distributions are available in closed form; see Appendix \ref{appendix:reference}.

\subsubsection{Case 1: $a=-1$ and $\kappa=0$}

We first consider the case $a=-1$ and $\kappa=0$, for which the diffusion term vanishes and the PDE reduces to an advection equation. In this setting, the solution is transported at a constant speed without any diffusive smoothing. After process noise is introduced, the resulting long-time inference problem becomes more challenging than in the diffusive regime with $\kappa\neq 0$.

Using the explicit FD solver, we discretize the advection term by a upwind scheme, which yields
\begin{equation}\label{eq:upwind-Mfine}
    M_{\delta t} = I - \nu A, \qquad \nu=\frac{\delta t}{\Delta x},\nonumber
\end{equation}
where $\delta t$ is the time step and $A\in\mathbb{R}^{n\times n}$ is the backward difference matrix under periodic boundary conditions. In addition, we take the process noise covariance in \eqref{case1evolution} to be spatially uncorrelated, namely $Q=qI_n$ with $q>0$.
For the observation model \eqref{case1observation}, we observe every second grid point. Specifically, we define the index set
\begin{equation*}
\mathcal{I} = \{0,2,4,\dots,n-2\},
\end{equation*}
so that $n_y = |\mathcal{I}| = n/2$, assuming that $n$ is even. The matrix $H$ is then the corresponding subsampling matrix whose $i$-th row is the canonical basis vector $e_j$ with $j=\mathcal{I}_i$, that is,
\begin{equation*}
H_{i,j} =
\begin{cases}
    1, & j = \mathcal{I}_i, \\
    0, & \text{otherwise},
\end{cases}
\end{equation*}
and we choose diagonal observation noise $R=rI_{n_y}$ with $r>0$.

Under this setup, the noise levels $q$, $r$, and $\sigma$ are kept fixed across different state dimensions $n$, which makes the inference task increasingly challenging as the discretization dimension grows. In the numerical experiments, we set $\Delta t=0.05$ and simulate the system in \eqref{case1evolution}-\eqref{case1initial} with noise levels $q=0.01$, $r=0.1$, and $\sigma=0.05$. We consider discretization dimensions $n=10,20,30,40,50$. For each $n$, we generate $N_{\mathrm{train}}=2000$ training trajectories of length $T_{\mathrm{train}}=500$ and $N_{\mathrm{test}}=200$ testing trajectories.

The performance of FLUID is summarized in Table~\ref{tt1}. All reported error metrics are computed using the $N_{\mathrm{test}}=200$ test trajectories, each of length $T_{\mathrm{train}}$. Unless otherwise specified, this configuration is used throughout the remainder of the numerical experiments. The KL divergence for $p_{\theta_1,\psi}(u_k\mid s_k)$ indicates that the learned filtering distribution is highly accurate and remains stable as the state dimension increases. Moreover, the RMSE and the other evaluation metrics for the smoothing distribution $p^{\mathrm{smoothing}}_{\theta_1,\theta_2,\psi}(u_k\mid y_{1:T_{\mathrm{train}}})$ are consistently better than those for $p_{\theta_1,\psi}(u_k\mid s_k)$, reflecting the benefit of incorporating information from the entire observation sequence. Although the MMD values increase with $n$, the other error measures remain stable, demonstrating the robustness of the proposed method as the discretisation dimension increases.

\begin{table}[H]
    \centering\footnotesize
    \begin{tabular}{lrrrrrrrrrrr}
        \toprule
        & \multicolumn{4}{c}{$p_{\theta_1,\psi}(u_k\mid s_k)$}
        & \multicolumn{4}{c}{$p_{\theta_2,\psi}(u_{k-1}\mid u_k,s_k)$}
        & \multicolumn{3}{c}{$p^{\mathrm{smoothing}}_{\theta_1,\theta_2,\psi}(u_k\mid y_{1:T_{\mathrm{train}}})$} \\
        \cmidrule(lr){2-5}\cmidrule(lr){6-9}\cmidrule(lr){10-12}
        & KL & RMSE & MMD & CRPS & KL & RMSE & MMD & CRPS & RMSE & MMD & CRPS \\
        \midrule
        $n = 10$ & 0.0191 & 0.1457 & 0.0514 & 0.0822 & 0.0146 & 0.0949 & 0.0222 & 0.0536 & 0.1284 & 0.0401 & 0.0725 \\
        $n = 20$ & 0.0443 & 0.1345 & 0.0860 & 0.0759 & 0.0262 & 0.0956 & 0.0446 & 0.0540 & 0.1207 & 0.0704 & 0.0683 \\
        $n = 30$ & 0.0450 & 0.1288 & 0.1164 & 0.0727 & 0.0358 & 0.0960 & 0.0667 & 0.0542 & 0.1172 & 0.0979 & 0.0663 \\
        $n = 40$ & 0.0524 & 0.1252 & 0.1444 & 0.0706 & 0.0434 & 0.0962 & 0.0883 & 0.0543 & 0.1149 & 0.1237 & 0.0650 \\
        $n = 50$ & 0.0597 & 0.1228 & 0.1710 & 0.0693 & 0.0534 & 0.0965 & 0.1097 & 0.0544 & 0.1134 & 0.1484 & 0.0641 \\
        \bottomrule
    \end{tabular}
    \caption{Performance of the predicted filtering distribution $p_{\theta_1,\psi}(u_k\mid s_k)$, the backward kernel distribution $p_{\theta_2,\psi}(u_{k-1}\mid u_k,s_k)$, and the smoothing distribution $p^{\mathrm{smoothing}}_{\theta_1,\theta_2,\psi}(u_k\mid y_{1:T_{\mathrm{train}}})$ for Case~1 of the advection-diffusion problem under different state dimensions.}\label{tt1}
\end{table}

\begin{figure}[h]
    \centering\vspace{-12pt}
    \includegraphics[width=0.9\textwidth]{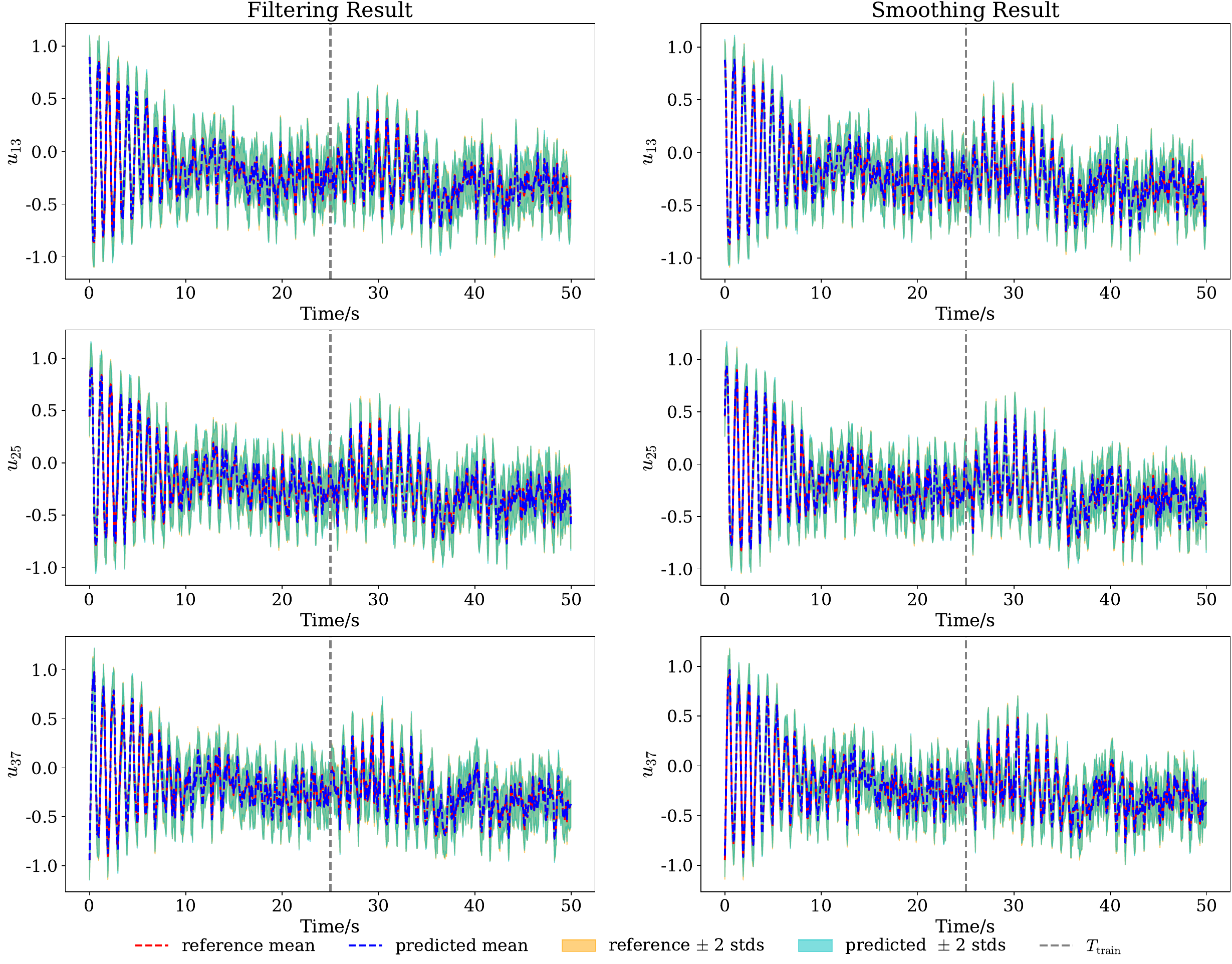}\vspace{-6pt}
    \caption{Visualization of the mean and uncertainty of the estimated filtering distribution $p_{\theta_1,\psi}(u_k\mid s_k)$ (left column) and the smoothing distribution $p^{\mathrm{smoothing}}_{\theta_1,\theta_2,\psi}(u_k\mid y_{1:T})$ (right column) for Case~1 of the advection-diffusion problem with state dimension $n=50$.}\label{fig1}
\end{figure}

To illustrate the effectiveness of FLUID for high-dimensional state estimation, Figure~\ref{fig1} presents the mean and uncertainty of the estimated filtering distribution $p_{\theta_1,\psi}(u_k\mid s_k)$ and smoothing distribution $p_{\theta_1,\theta_2,\psi}(u_k\mid y_{1:T})$ for the case $n=50$ and $T=1000$, corresponding to the physical time $t=50$. The predicted mean and uncertainty are in excellent agreement with the reference solution. The vertical dashed gray line marks $T_{\mathrm{train}}$, the maximum time horizon used in generating the training data. Unless otherwise specified, this notation is adopted throughout the subsequent numerical experiments. Importantly, the agreement persists not only within the training horizon but also over future time steps beyond it, which confirms the extrapolation capability of the proposed approach.

\FloatBarrier

Figure~\ref{oneadkl} shows the evolution of the KL divergence for $p_{\theta_1,\psi}(u_k\mid s_k)$ and $p_{\theta_2,\psi}(u_{k-1}\mid u_k,s_k)$ at $n=50$. In both cases, the KL divergence decreases during the early stage, remains relatively stable over a long interval, and then increases only mildly as the prediction step grows. This trend persists beyond the training horizon indicated by the vertical dashed line at $T_{\mathrm{train}}$, which provides further evidence of the temporal extrapolation robustness of the proposed method. (For clarity, the first 10 time steps are omitted from the figure. The same plotting convention is used in the following numerical experiments.)

\begin{figure}[h]
    \centering\vspace{-6pt}
    \includegraphics[width=0.4\textwidth]{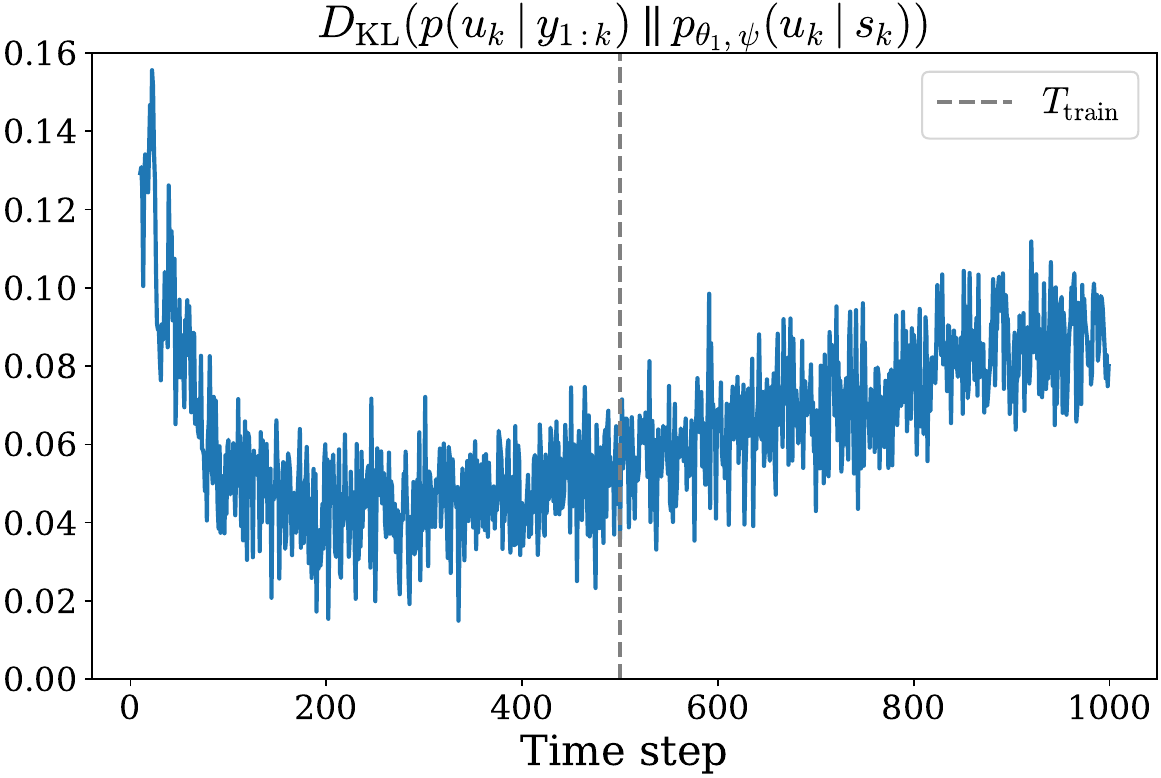}
    \includegraphics[width=0.4\textwidth]{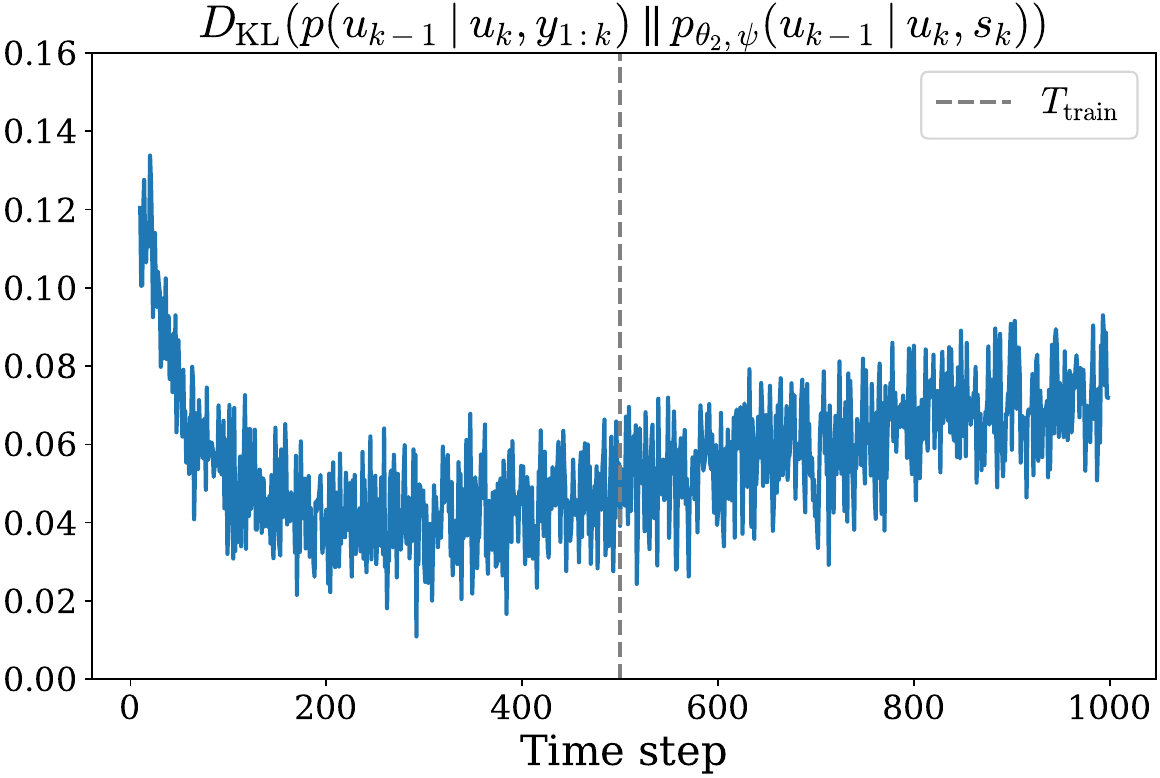}\vspace{-6pt}
    \caption{Time evolution of $D_{\mathrm{KL}}(p(u_k \mid y_{1:k}) \parallel p_{\theta_1,\psi}(u_k \mid s_k))$ (left) and $D_{\mathrm{KL}}(p(u_{k-1} \mid u_k,y_{1:k}) \parallel p_{\theta_2,\psi}(u_{k-1} \mid u_k,s_k))$ (right) for Case~1 of the advection-diffusion problem with state dimension $n=50$.}\label{oneadkl}
\end{figure}

Figure~\ref{oneadrmse} further shows the evolution of RMSE and the other error metrics for $p_{\theta_1,\psi}(u_k\mid s_k)$ and $p_{\theta_2,\psi}(u_{k-1}\mid u_k,s_k)$ when $n=50$. The results again demonstrate the strong robustness and temporal extrapolation performance of the proposed method. In addition, Figure~\ref{oneadt} presents the absolute error between the posterior mean and the reference solution, together with the predicted standard deviation, along a test trajectory of length $T=1000$.

\begin{figure}[h]
    \centering
    \includegraphics[width=.9\textwidth]{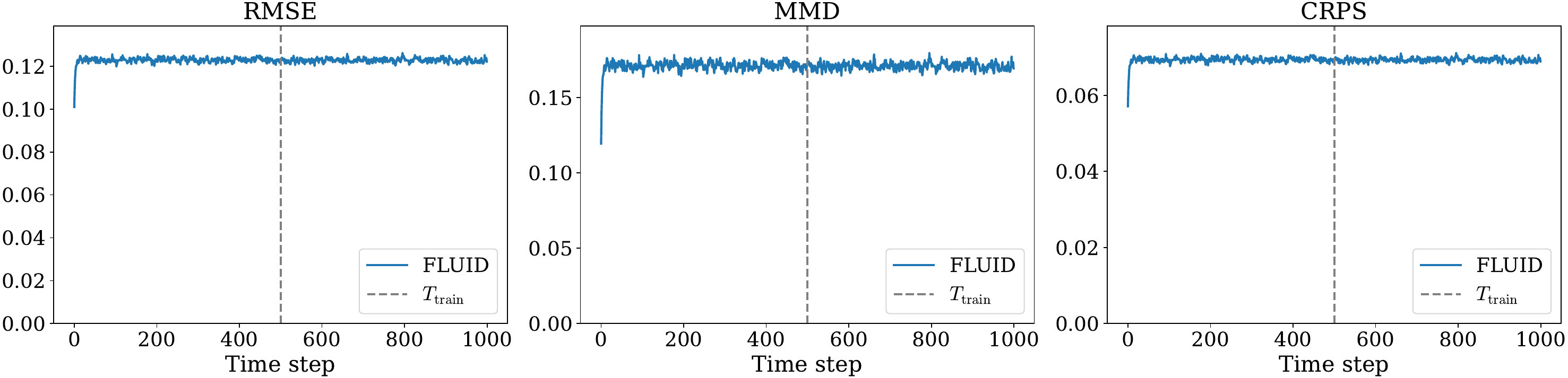}

	\vspace{0.5em}
    
	\includegraphics[width=.9\textwidth]{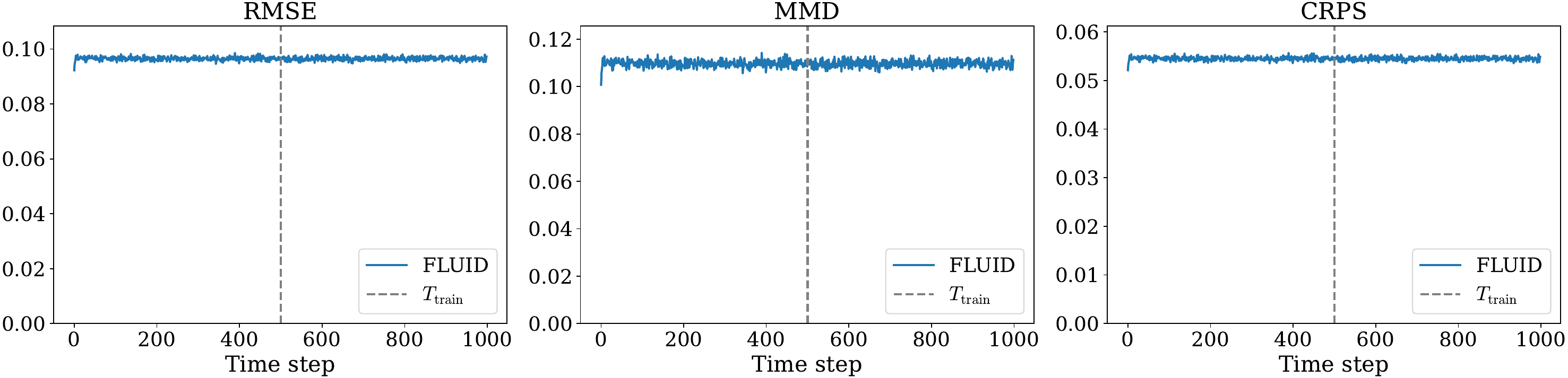}
    \caption{Time evolution of error metrics (RMSE, MMD, and CRPS) for the predicted filtering distribution $p_{\theta_1,\psi}(u_k \mid s_k)$ (top row) and the backward kernel distribution $p_{\theta_2,\psi}(u_{k-1} \mid u_k, s_k)$ (bottom row) for Case~1 of the advection-diffusion problem with state dimension $n=50$.}\label{oneadrmse}
\end{figure}

\begin{figure}[h]
    \centering\vspace{-12pt}
    \includegraphics[width=\textwidth]{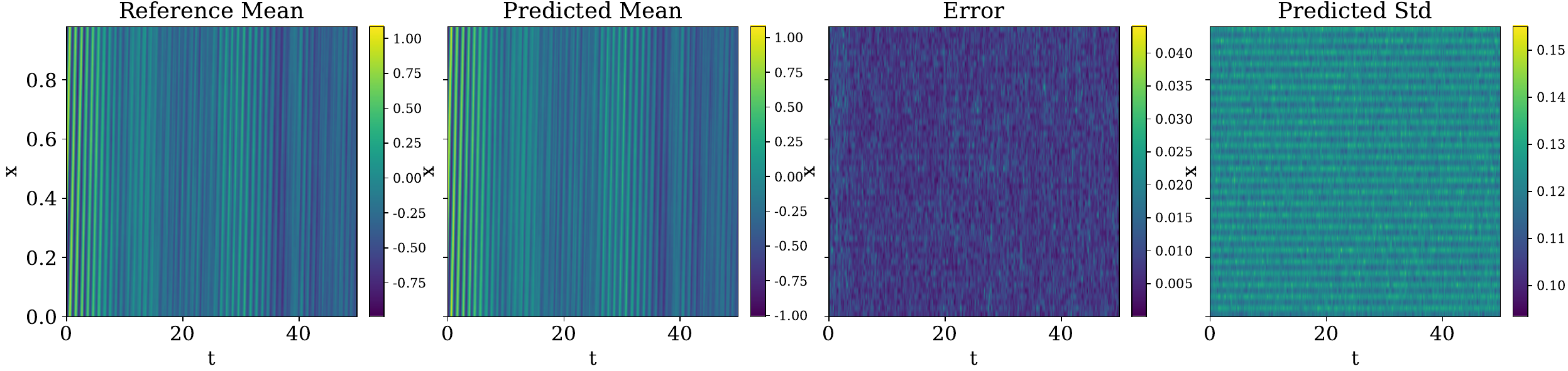}
    \includegraphics[width=\textwidth]{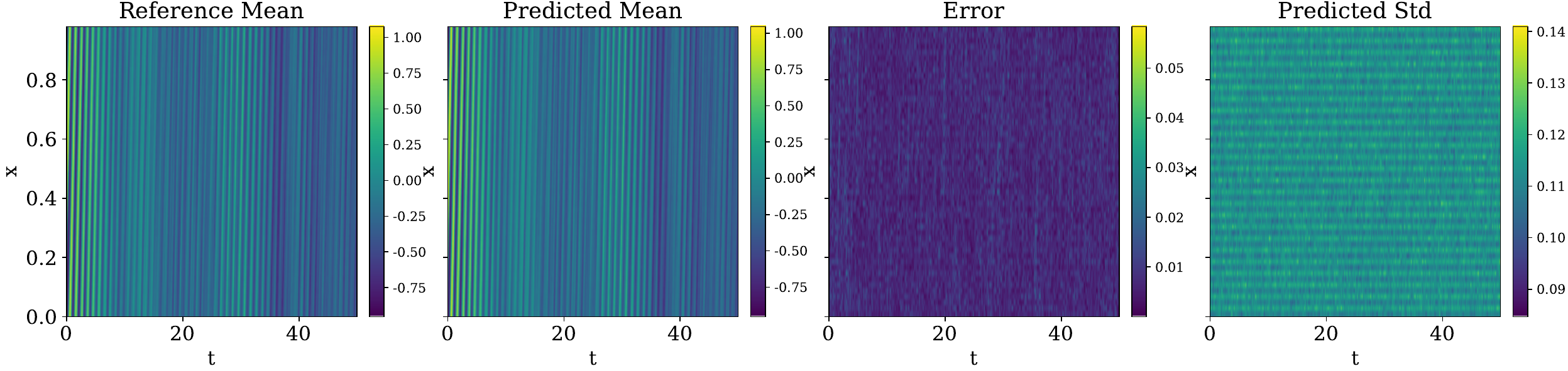}\vspace{-12pt}
    \caption{Spatiotemporal results for the predicted filtering distribution $p_{\theta_1,\psi}(u_k \mid s_k)$ (top row) and the smoothing distribution $p^{\mathrm{smoothing}}_{\theta_1,\theta_2,\psi}(u_k\mid y_{1:T})$ (bottom row) for Case~1 of the advection-diffusion problem with state dimension $n=50$. From left to right, the columns show the reference mean, the predicted mean, the absolute error, and the predicted standard deviation.}\label{oneadt}
\end{figure}
	\subsubsection{Case 2: $a=1$ and $\kappa=0.01$}
	In this subsection, we consider the case $a=1$ and $\kappa=0.01$ and study the performance of FLUID in a discretization-consistent setting. In this regime, the discrete model is expected to provide a progressively more accurate approximation of the underlying continuous model as the discretization dimension $n$ increases. This setting therefore offers a natural baseline for examining whether the proposed method can maintain stable and reliable inference performance across different spatial resolutions. 

	In this case, the process-noise model in \eqref{case1evolution} should no longer be specified by a dimension-independent covariance of the form $Q=qI_n$. Instead, we inject Gaussian noise at each fine finite-difference step of size $\delta t$, with covariance
	\begin{equation*}
		Q_{\delta t}=\frac{\delta t}{n}\,I_n.
	\end{equation*}
	The corresponding process-noise covariance over one observation interval $\Delta t=m_n\delta t$ is then given by
	\begin{equation*}
		Q=\sum_{i=0}^{m_n-1} M_{\delta t}^{\,i}\,Q_{\delta t}
		\bigl(M_{\delta t}^{\,i}\bigr)^{\top}.
	\end{equation*}
	
	Moreover, since the diffusion coefficient $\kappa$ is no longer zero, we construct $M_{\delta t}$ at each $\delta t$ step using an explicit Lax-Wendroff-type discretization. Furthermore, the observation model no longer subsamples every second component of $u_k$. 
	Instead, we partition the $n$ state indices into $n_{\mathcal{I}}$ consecutive groups $\mathcal{I}_i$ with $|\mathcal{I}_i|=\frac{n}{n_{\mathcal{I}}}$, $i=1,\cdots,n_{\mathcal{I}}$, such that
	\begin{align*}
		\mathcal{I}_i = \Big\{(i-1)\times \frac{n}{n_{\mathcal{I}}}+j\mid  j=1,\cdots,\frac{n}{n_{\mathcal{I}}} \Big\},\quad i=1,\cdots,n_{\mathcal{I}},
	\end{align*}
	and $y_k\in R^{n_y}$, $n_y=n_{\mathcal{I}}$ such that
	\begin{align}
		y_k^i=\frac{1}{|\mathcal{I}_i|}\sum_{j\in I_i}u_{k}^j+\epsilon_{y,k}^i,\qquad i=1,\cdots,n_y,\label{case2observation}
	\end{align}
	where $\epsilon_{y,k} \sim \mathcal{N}(0, rI_{n_{y}})$ represents additive Gaussian white noise.
	
	In this example, we take $\Delta t=0.01$ and simulate using the observation equation (\ref{case2observation}) and initial condition (\ref{case1initial}) with noise level $r=0.01$, $\sigma=0.05/n$. In this case, we conduct experiments with varying discretization
	dimensions $n = 16,32, 48, 64$ with fixed $n_{\mathcal{I}}=8$. For each $n$, we use $N_{\mathrm{train}} = 2000$ generated trajectories of length $T = 500$ for training and
	$N_{\mathrm{test}} = 200$ trajectories for testing.
	
	Table~\ref{tt2} reports the performance of FLUID in the discretization-consistent setting. For the learned filtering distribution $p_{\theta_1,\psi}(u_k\mid s_k)$, the KL divergence remains within a relatively narrow range as the state dimension $n$ increases, indicating that the filtering performance stays stable under mesh refinement. At the same time, the RMSE, MMD, and CRPS all decrease with $n$, showing that the approximation quality improves as the discretization becomes finer. A similar trend is observed for the learned backward kernel $p_{\theta_2,\psi}(u_{k-1}\mid u_k,s_k)$, whose error metrics also decrease steadily with the discretization dimension. The smoothing distribution $p^{\mathrm{smoothing}}_{\theta_1,\theta_2,\psi}(u_k\mid y_{1:T_{\mathrm{train}}})$ further improves upon the filtering distribution across all tested dimensions. 
	\begin{table}[H]
		\centering\footnotesize
		\resizebox{0.9\textwidth}{!}{
		\begin{tabular}{lrrrrrrrrrrr}
			\toprule
			& \multicolumn{4}{c}{$p_{\theta_1,\psi}(u_k\mid s_k)$}
			& \multicolumn{4}{c}{$p_{\theta_2,\psi}(u_{k-1}\mid u_k,s_k)$}
			& \multicolumn{3}{c}{$p^{\mathrm{smoothing}}_{\theta_1,\theta_2,\psi}(u_k\mid y_{1:T_{\mathrm{train}}})$} \\
			\cmidrule(lr){2-5}\cmidrule(lr){6-9}
			\cmidrule(lr){10-12}
			& KL & RMSE & MMD & CRPS & KL & RMSE & MMD & CRPS & RMSE & MMD & CRPS \\
			\midrule
			$n = 16$ & 0.1355 & 0.0558 & 0.0123 & 0.0314 & 0.0499 & 0.0236 & 0.0022 & 0.0133 & 0.0548 & 0.0102 & 0.0281 \\
			$n = 32$ & 0.1363 & 0.0360 & 0.0103 & 0.0203 & 0.0653 & 0.0162 & 0.0021 & 0.0091 & 0.0360 & 0.0082 & 0.0182 \\
			$n = 48$ & 0.1135 & 0.0267 & 0.0085 & 0.0150 & 0.0587 & 0.0121 & 0.0017 & 0.0068 & 0.0237 & 0.0067 & 0.0134 \\
			$n = 64$ & 0.1268 & 0.0214 & 0.0073 & 0.0121 & 0.0592 & 0.0095 & 0.0014 & 0.0054 & 0.0190 & 0.0058 & 0.0107 \\
			\bottomrule
		\end{tabular}
		}
		\caption{The performance of the predicted filtering distribution $p_{\theta_1,\psi}(u_k\mid s_k)$, the backward kernel distribution $p_{\theta_2,\psi}(u_{k-1}\mid u_k,s_k)$, and the smoothing distribution $p^{\mathrm{smoothing}}_{\theta_1,\theta_2,\psi}(u_k\mid y_{1:T_{\mathrm{train}}})$ for Case~2 of the advection-diffusion problem with varying state dimensions.}
		\label{tt2}
	\end{table}
    
	\begin{figure}[h]
		\centering
		\includegraphics[width=.9\textwidth]{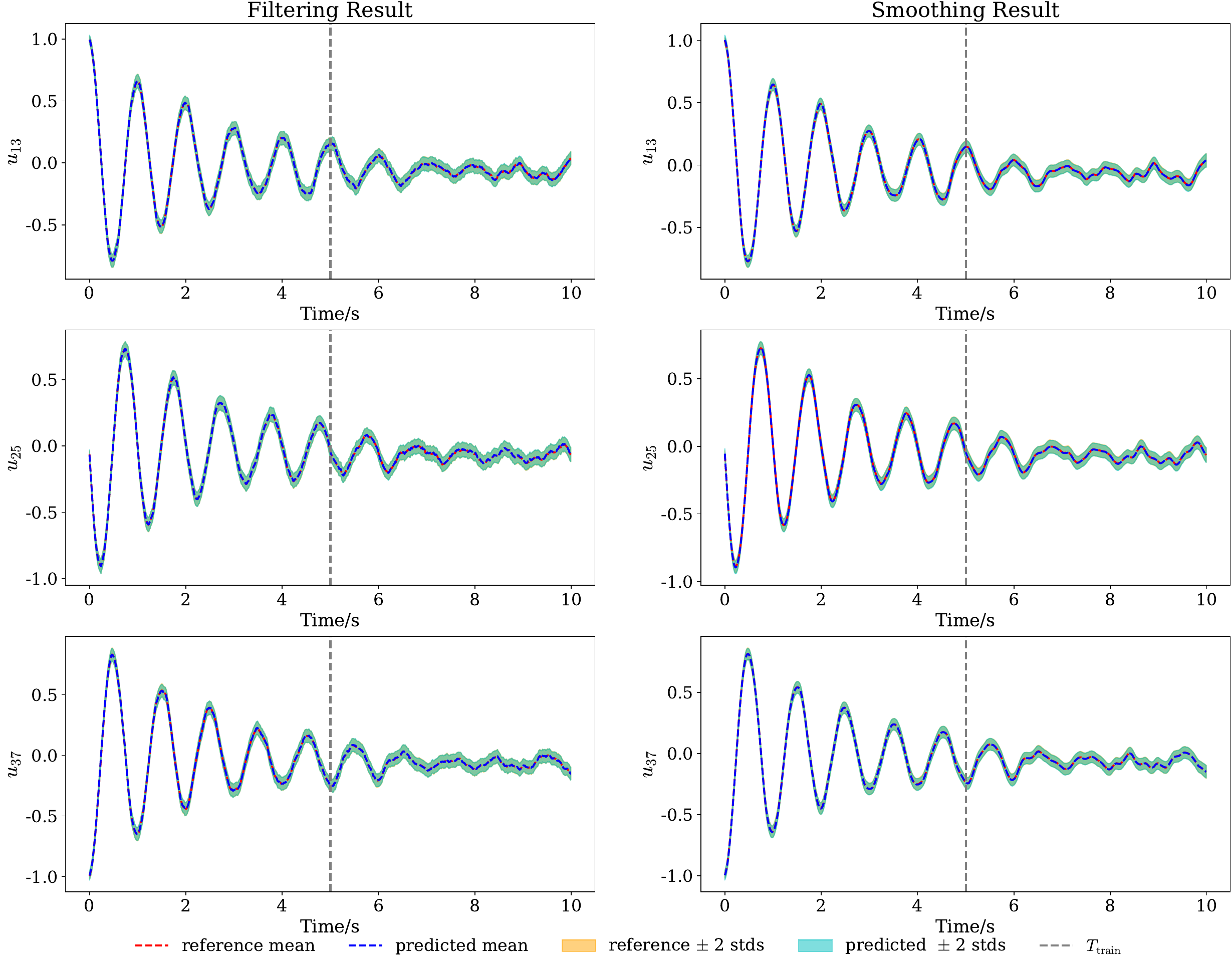}
		\caption{Visualization of the mean and uncertainty of the estimated filtering distribution $p_{\theta_1,\psi}(u_k\mid s_k)$ (left column) and the smoothing distribution $p^{\mathrm{smoothing}}_{\theta_1,\theta_2,\psi}(u_k\mid y_{1:T})$ (right column) for Case 2 of the advection-diffusion problem at state dimension $n=48$.}\label{linearcase2}
	\end{figure}
	
	To illustrate the effectiveness of FLUID for high-dimensional state estimation, Figure~\ref{linearcase2} presents the mean and uncertainty of the estimated filtering distribution $p_{\theta_1,\psi}(u_k\mid s_k)$ and smoothing distribution $p_{\theta_1,\theta_2,\psi}(u_k\mid y_{1:T})$ along a test trajectory with state dimension $n=48$ and time horizon $T=1000$, corresponding to the physical time $t=10$.
    
	For the case $n=48$, Figure~\ref{case2kl} shows the evolution of the KL divergence for $p_{\theta_1,\psi}(u_k\mid s_k)$ and $p_{\theta_2,\psi}(u_{k-1}\mid u_k,s_k)$ over successive prediction steps. Under the same setting, Figure~\ref{case2rmse} presents the evolution of RMSE and the other evaluation metrics for $p_{\theta_1,\psi}(u_k\mid s_k)$ and $p_{\theta_2,\psi}(u_{k-1}\mid u_k,s_k)$. These results provide further evidence of the strong extrapolation capability of the proposed method and its robustness against error accumulation over long prediction horizons.
In addition, Figure~\ref{case2t} shows the absolute error between the posterior mean and the reference solution, together with the predicted standard deviation, along a test trajectory of length $T=1000$.

	\begin{figure}[h]
		\centering\vspace{-12pt}
		\includegraphics[width=0.4\textwidth]{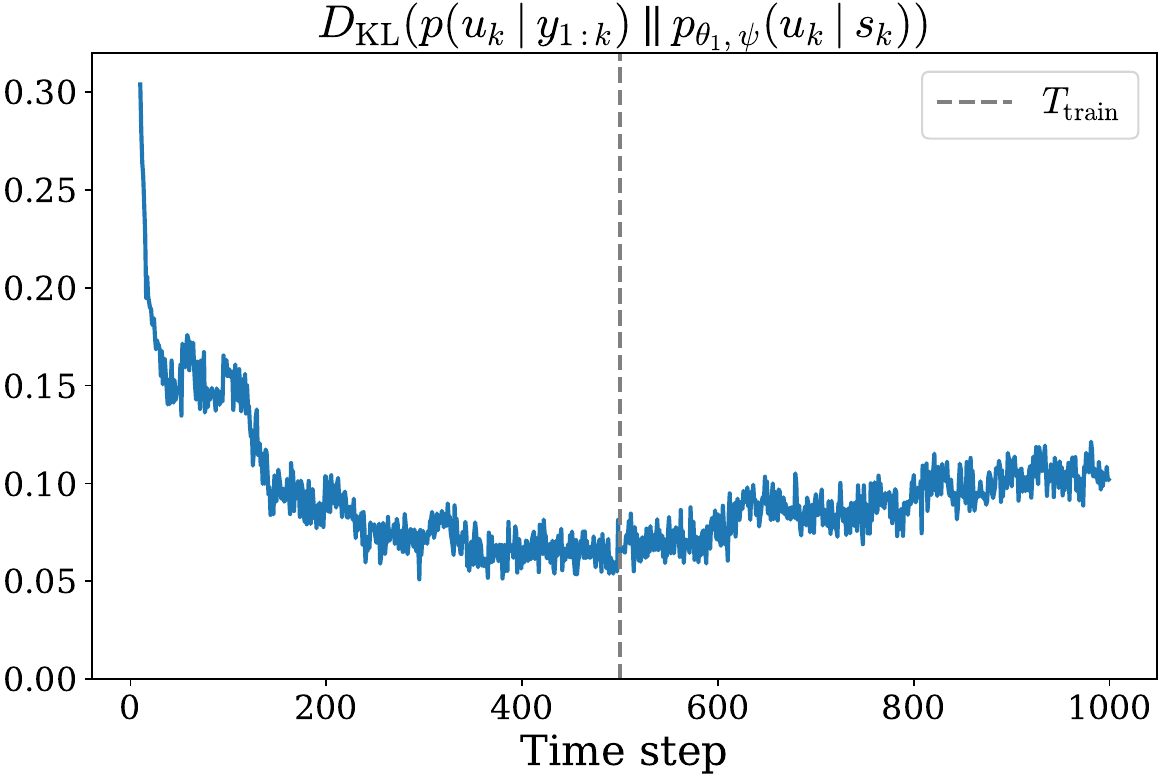}
		\includegraphics[width=0.4\textwidth]{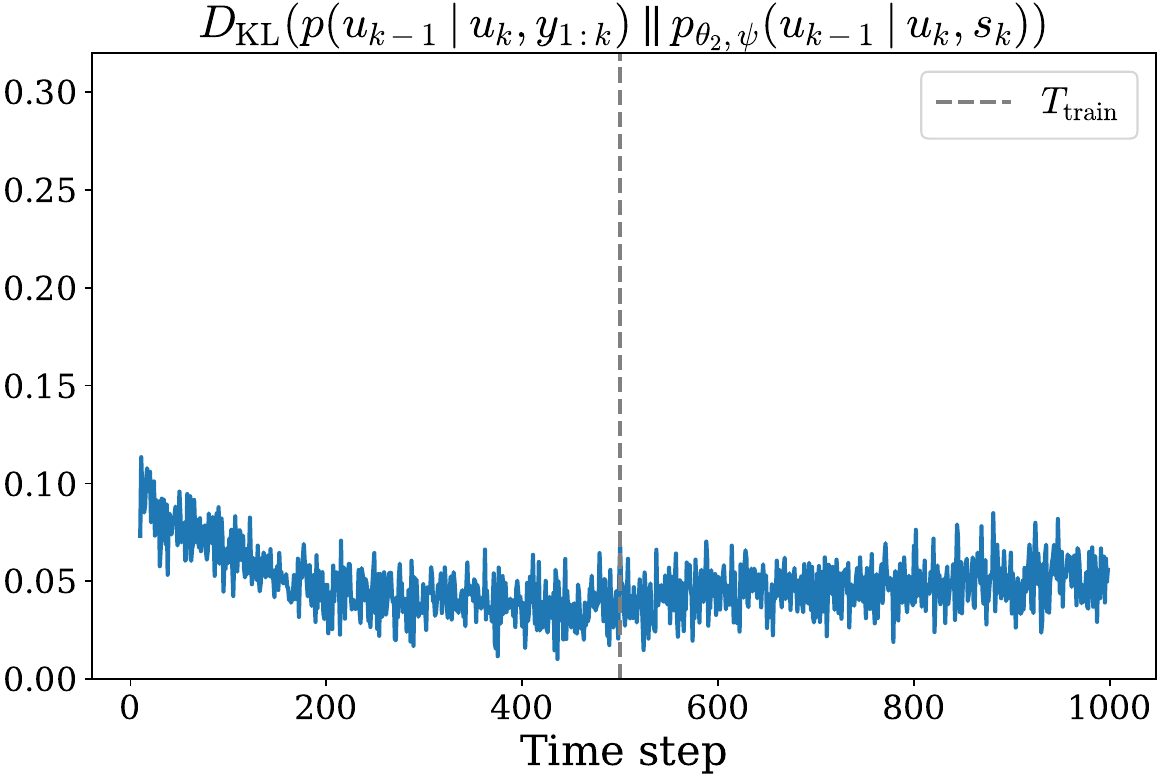}\vspace{-6pt}
		\caption{Time evolution of $D_{\mathrm{KL}}(p(u_k \mid y_{1:k}) \parallel p_{\theta_1,\psi}(u_k \mid s_k))$ (left) and $D_{\mathrm{KL}}(p(u_{k-1} \mid u_k,y_{1:k}) \parallel p_{\theta_2,\psi}(u_{k-1} \mid u_k,s_k))$ (right) for Case 2 of the advection-diffusion problem at state dimension $n=48$.}\label{case2kl}
	    \vspace{6pt}
		\centering
		\includegraphics[width=.9\textwidth]{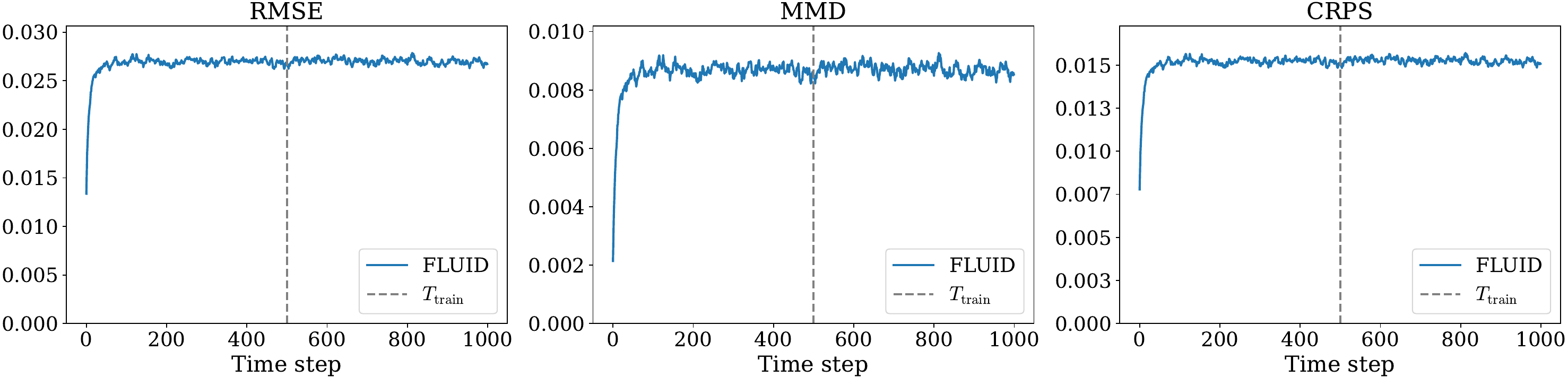}
		\includegraphics[width=.9\textwidth]{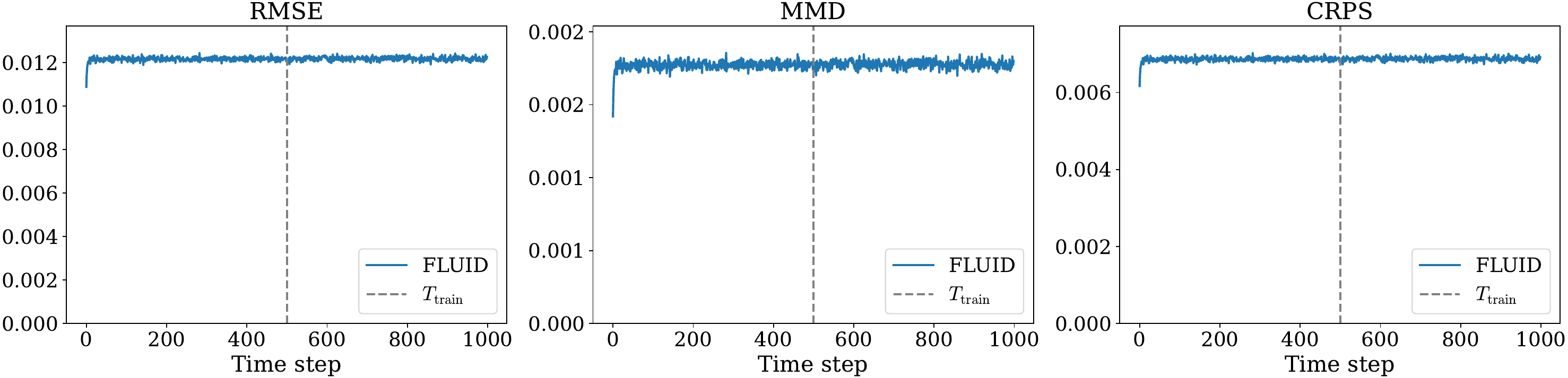}
		\caption{Time evolution of error metrics (RMSE, MMD, and CRPS) for the predicted filtering distribution $p_{\theta_1,\psi}(u_k \mid s_k)$ (top row) and the backward kernel distribution $p_{\theta_2,\psi}(u_k \mid u_{k+1}, s_k)$ (bottom row) for Case~2 of the advection-diffusion problem at state dimension $n=48$.}\label{case2rmse}
	    \vspace{6pt}
		\centering
		\includegraphics[width=\textwidth]{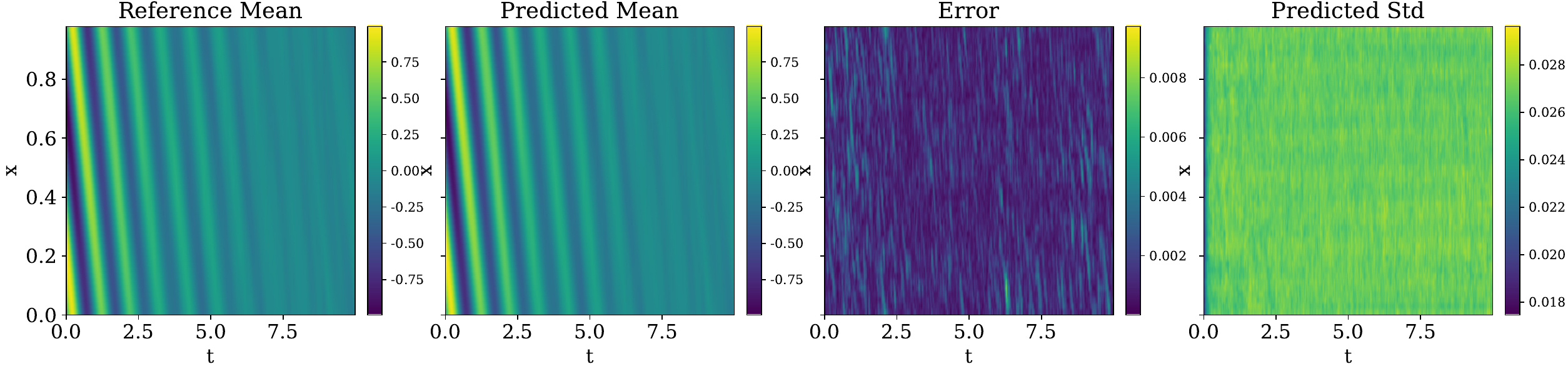}

		\vspace{0.2em}

		\includegraphics[width=\textwidth]{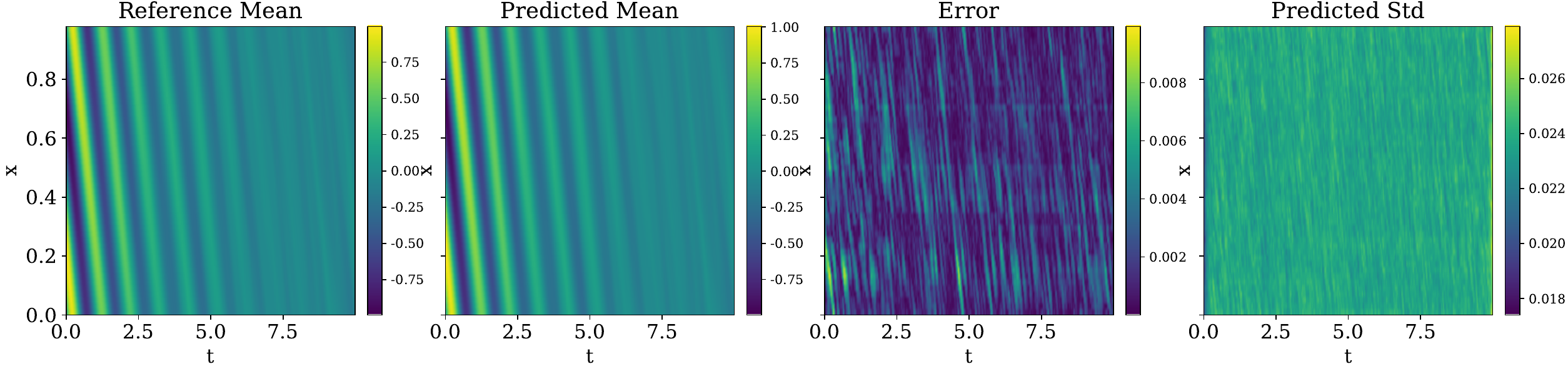}
		\caption{Spatiotemporal results for the predicted filtering distribution $p_{\theta_1,\psi}(u_k \mid s_k)$ (top row) and the smoothing distribution $p^{\mathrm{smoothing}}_{\theta_1,\theta_2,\psi}(u_k\mid y_{1:T})$ (bottom row) for Case~2 of the advection-diffusion problem at state dimension $n=48$. From left to right, the columns display the reference mean, the predicted mean, the absolute error between them, and the predicted standard deviation.}\label{case2t}
	\end{figure}

\FloatBarrier
    
	\subsection{Two-factor stochastic volatility model}\label{svm}

	Stochastic volatility (SV) models are widely used in financial econometrics to describe time-varying and persistent return uncertainty, while also accounting for volatility clustering and the heavy-tailed behavior of asset returns. 
	Here, we consider a two-factor stochastic volatility model with latent log-volatility
	$\mathbf{u}_t=(u_{t,1},u_{t,2})^\top$ and observed returns
	$\mathbf{y}_t=(y_{t,1},y_{t,2})^\top$. The model is defined by
	\begin{align*}
		\mathbf{u}_0 &\sim \mathcal{N}\bigl(\mathbf{0},\operatorname{diag}(\tau_1^2,\tau_2^2)\bigr),\\
		\mathbf{u}_t &= \boldsymbol{\alpha}
		+ A\bigl(\mathbf{u}_{t-1}-\boldsymbol{\alpha}\bigr)
		+ D_\sigma \epsilon_{u,t},
		\qquad \epsilon_{u,t}\sim\mathcal{N}(\mathbf{0},I_2),\\
		\mathbf{y}_t &= \beta\,\exp\bigl(\tfrac{1}{2}\mathbf{u}_t\bigr)\odot \epsilon_{y,t},
		\qquad \epsilon_{y,t}\sim\mathcal{N}(\mathbf{0},I_2),
	\end{align*}
	for $t=1,\dots,T$, where $\odot$ denotes the Hadamard product, and
	$ D_\sigma=\operatorname{diag}(\sigma_1,\sigma_2)$.
	
	The parameter choice is motivated by the S\&P 500 dataset. In particular, \cite{zhao2024tensor} considers a single-factor SV model for S\&P 500 daily returns and reports the MAP estimates $\gamma=0.97$, $\sigma=0.3$, $\beta=0.835$. Based on these values, we set $\boldsymbol{\alpha}=(0,0)^\top$, $A=\operatorname{diag}(\gamma_1,\gamma_2)$, $\gamma_1=\gamma_2=0.97$, $\sigma_1=\sigma_2=0.3$, $\beta=0.835$, and
	\begin{equation*}
		\tau_1^2=\frac{\sigma_1^2}{1-\gamma_1^2},\qquad
		\tau_2^2=\frac{\sigma_2^2}{1-\gamma_2^2}.
	\end{equation*}
	This gives a simple two-factor extension of the single-factor SV model while keeping a direct connection to the empirical characteristics of the S\&P 500 returns.
	We first use this model to generate synthetic data and evaluate the proposed methods in a controlled setting. Specifically, we generate $N=2000$ labeled trajectories of length $T_{\mathrm{train}}=1000$ for training and an additional $N_{\mathrm{test}}=200$ trajectories for testing. Under this setup, we evaluate both FLUID and the flow-based particle filtering method, and compare their filtering performance with that of FBF.
	
	Table~\ref{svfiltering} reports the filtering results. The learned filtering distribution $p_{\theta_1,\psi}(u_k\mid s_k)$ and the flow-based particle filtering distribution $p^{\mathrm{particle}}_{\theta_3,\theta_4}(u_k\mid y_{1:k})$ achieve comparable performance, and both substantially outperform the FBF method $p_{\mathrm{FBF}}(u_k\mid y_{1:k})$ on this highly nonlinear two-dimensional problem. Figure~\ref{svess} further compares the evolution of RESS for the flow-based particle filtering method and FBF on a test trajectory, and again indicates the advantage of the learned proposal.
	
	\begin{table}[h]
		\centering\footnotesize
		\begin{tabular}{cccccccccc}
			\toprule
			\multicolumn{3}{c}{$p_{\theta_1,\psi}(u_k\mid s_k)$}
			& \multicolumn{3}{c}{$p_{\theta_3,\theta_4}^{\mathrm{particle}}(u_k\mid y_{1:k})$}
			& \multicolumn{3}{c}{$p_{\mathrm{FBF}}(u_k\mid y_{1:k})$}\\
			\cmidrule(lr){1-3}\cmidrule(lr){4-6}
			\cmidrule(lr){7-9}
			RMSE & MMD & CRPS & RMSE & MMD & CRPS & RMSE & MMD & CRPS\\
			\midrule
			\textbf{0.6117} & \textbf{0.1571} & \textbf{0.3435} & 0.6163 & 0.1591 & 0.3462 & 0.7481 & 0.2163 & 0.4188 \\
			\bottomrule
		\end{tabular}
		\caption{Comparison of filtering performance among the proposed FLUID method $p_{\theta_1,\psi}(u_k\mid s_k)$, the flow-based particle filtering method $p_{\theta_3,\theta_4}^{\mathrm{particle}}(u_k\mid y_{1:k})$, and the FBF method $p_{\mathrm{FBF}}(u_k\mid y_{1:k})$ for the stochastic volatility model.}\label{svfiltering}
	\end{table}
	
	\begin{figure}[h]
		\centering\vspace{-12pt}
		\includegraphics[width=0.65\textwidth]{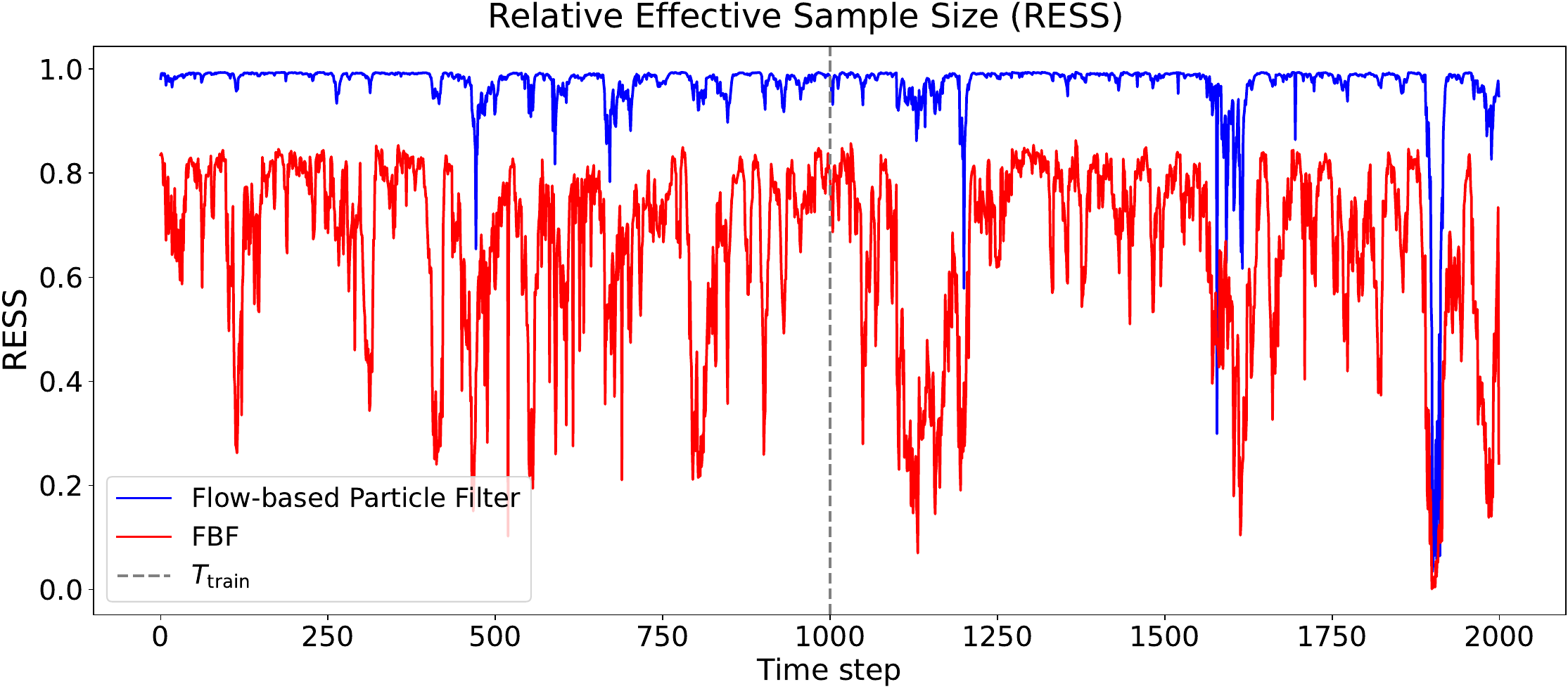}\vspace{-6pt}
		\caption{Comparison of the time evolution of RESS for the filtering distributions of the flow-based particle filtering method $p^{\mathrm{particle}}_{\theta_3,\theta_4}(u_k\mid y_{1:k})$ and the FBF method $p_{\mathrm{FBF}}(u_k \mid y_{1:k})$ for the two-factor stochastic volatility model on a test case.}\label{svess}
	\end{figure}
	
    \FloatBarrier

	\begin{figure}[H]
		\centering\vspace{0pt}
		\includegraphics[width=0.65\textwidth]{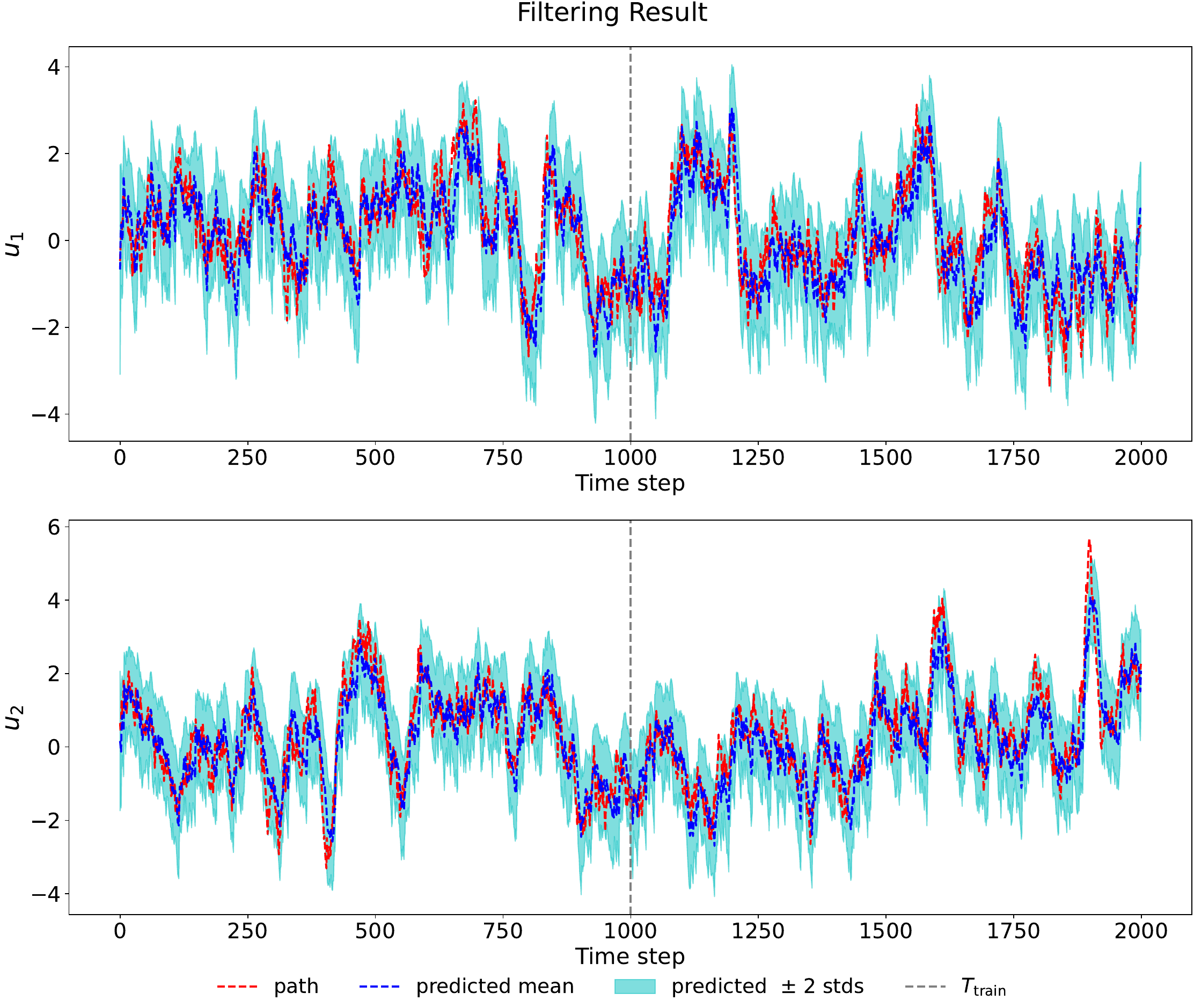}\vspace{-10pt}
		\caption{Visualization of the mean and uncertainty of the estimated filtering distribution $p_{\theta_1,\psi}(u_k\mid s_k)$ for the two-factor stochastic volatility model.}\label{svaf}
        
		\centering
		\includegraphics[width=0.65\textwidth]{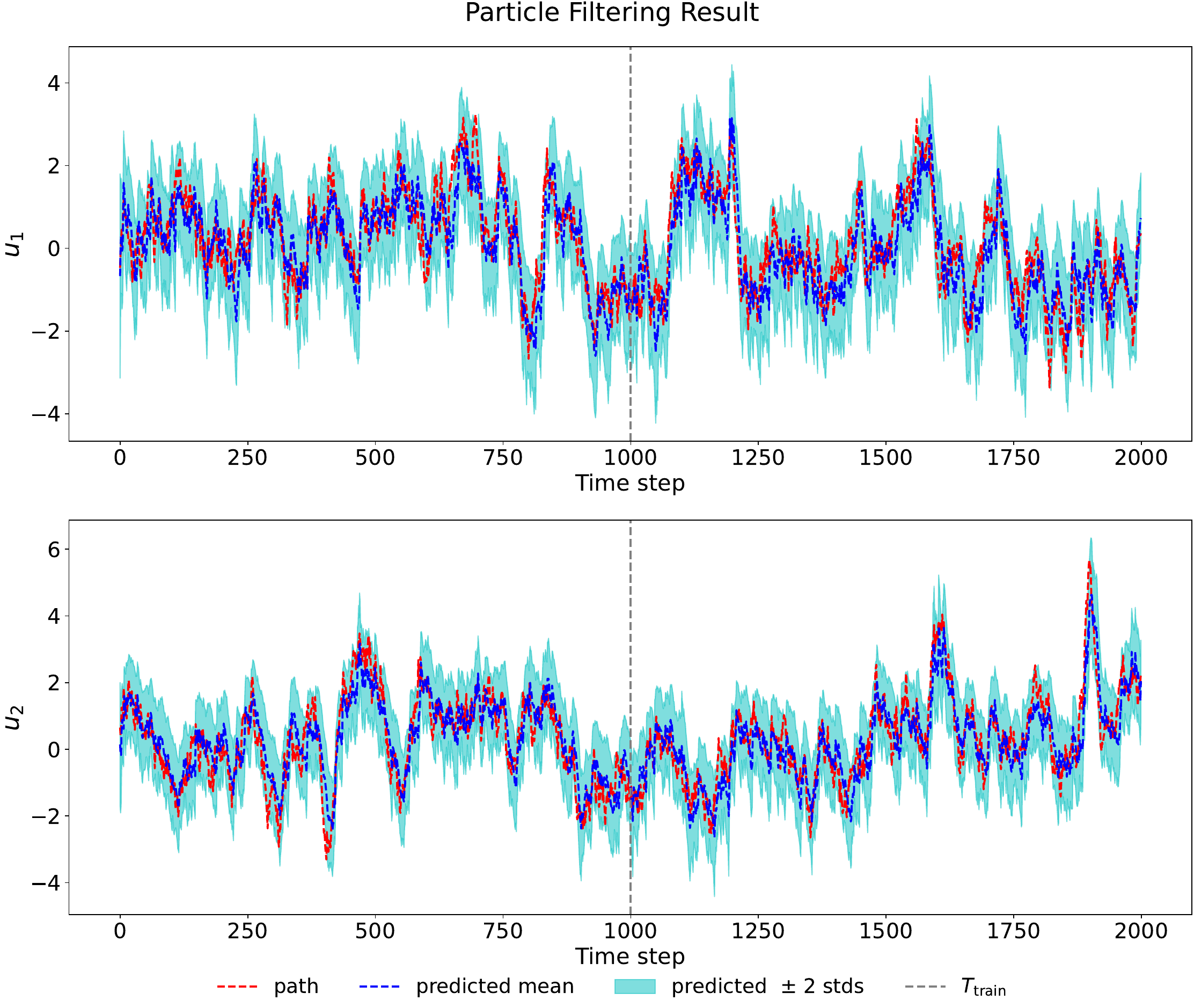}\vspace{-10pt}
		\caption{Visualization of the mean and uncertainty of the estimated filtering distribution $p^{\mathrm{particle}}_{\theta_3,\theta_4}(u_k\mid y_{1:k})$ for the two-factor stochastic volatility model.}\label{svpf}
	\end{figure}    
    
	To further illustrate the behavior of the two learned filtering models, Figures~\ref{svaf} and \ref{svpf} visualize the mean and uncertainty of the estimated filtering distributions produced by FLUID and the flow-based particle filtering method, respectively, on a test trajectory of length $T=2000$.
    
	Table~\ref{svt1} reports the performance of FLUID in filtering, backward simulation, and smoothing for the same two-factor SV model. The smoothing distribution significantly improves upon the filtering distribution, which is expected since it incorporates information from the full observation sequence. Figure~\ref{svfig3} visualizes the mean and uncertainty of the estimated smoothing distribution on the same test trajectory. Figure~\ref{svrmse} further shows the evolution of RMSE and the other error metrics for the learned filtering distribution $p_{\theta_1,\psi}(u_k\mid s_k)$ and the backward kernel $p_{\theta_2,\psi}(u_{k-1}\mid u_k,s_k)$.
    
	\begin{table}[h]
		\centering\footnotesize
		\begin{tabular}{lrrrrrrrrr}
			\toprule
			& \multicolumn{3}{c}{$p_{\theta_1,\psi}(u_k\mid s_k)$}
			& \multicolumn{3}{c}{$p_{\theta_2,\psi}(u_{k-1}\mid u_k,s_k)$}
			& \multicolumn{3}{c}{$p^{\mathrm{smoothing}}_{\theta_1,\theta_2,\psi}(u_k\mid y_{1:T_{\mathrm{train}}})$} \\
			\cmidrule(lr){2-4}\cmidrule(lr){5-7}
			\cmidrule(lr){8-10}
			& RMSE & MMD & CRPS & RMSE & MMD & CRPS & RMSE & MMD & CRPS \\
			\midrule
			Two factor & 0.6117 & 0.1571 & 0.3435 & 0.2746 & 0.0363 & 0.1551 & 0.4805 & 0.1038 & 0.2710 \\
			\bottomrule
		\end{tabular}
		\caption{The performance of the predicted filtering distribution $p_{\theta_1,\psi}(u_k\mid s_k)$, the backward kernel distribution $p_{\theta_2,\psi}(u_{k-1}\mid u_k,s_k)$, and the smoothing distribution $p^{\mathrm{smoothing}}_{\theta_1,\theta_2,\psi}(u_k\mid y_{1:T_{\mathrm{train}}})$ for the two-factor stochastic volatility model.}\label{svt1}
	\end{table}
    
	\begin{figure}[h]
		\centering\vspace{-18pt}
		\includegraphics[width=0.65\textwidth]{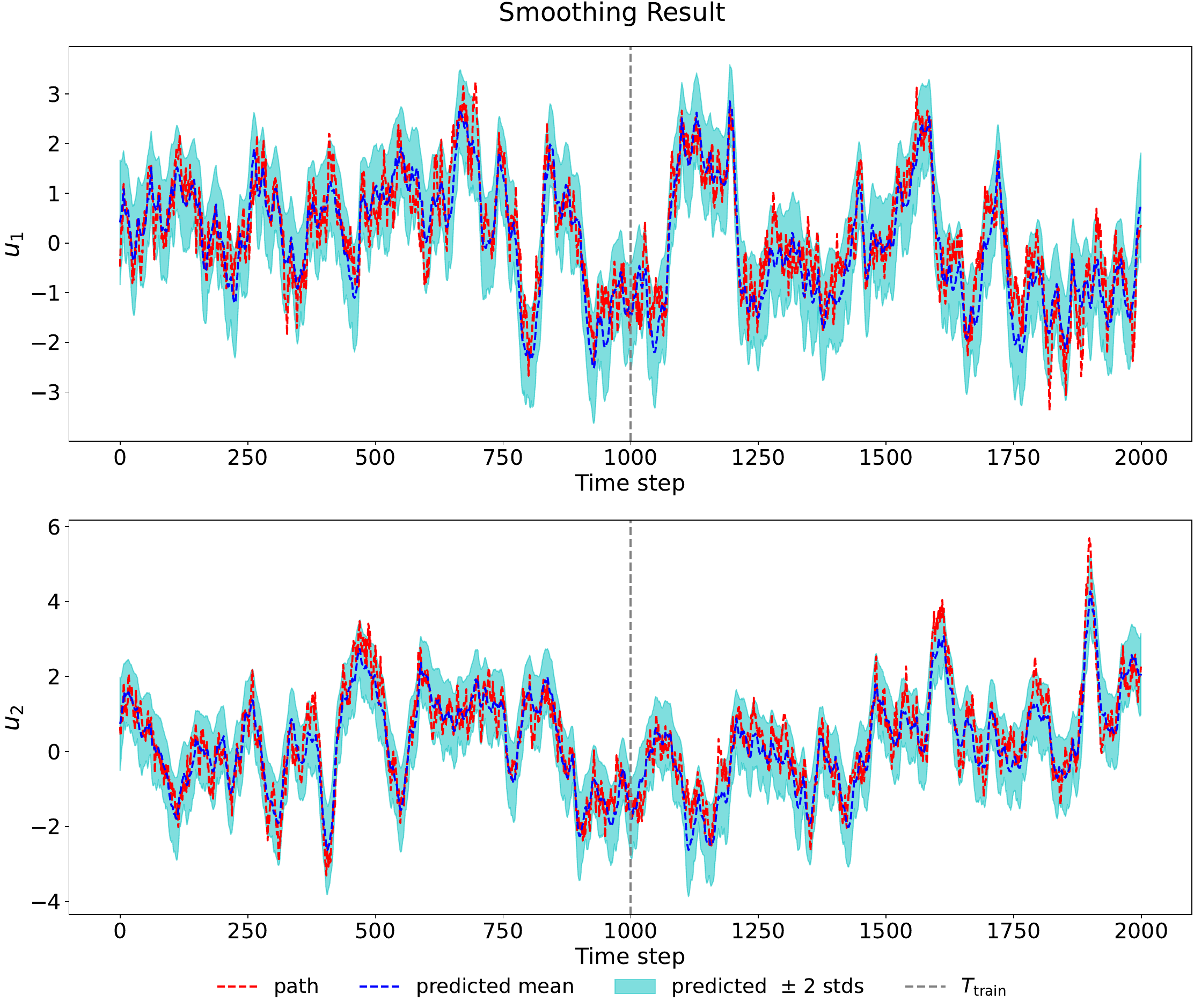}\vspace{-10pt}
		\caption{Visualization of the mean and uncertainty of the estimated smoothing distribution $p^{\mathrm{smoothing}}_{\theta_1,\theta_2,\psi}(u_k\mid y_{1:T})$ for the two-factor stochastic volatility model with $T=2000$.}\label{svfig3}
	\end{figure}    
	
After validating the proposed methods on synthetic data, we further examine the flow-based particle filtering method on real market data. Specifically, we apply the trained model to the daily returns of the S\&P 500 index in order to assess its practical effectiveness. Since the two-factor model considered here is built from S\&P 500-based single-factor parameter estimates, we augment the processed S\&P 500 return series with a synthetic trajectory generated from the corresponding single-factor SV model, and then use the resulting bivariate sequence for inference and prediction.

Figure~\ref{esssp500} presents the RESS of the flow-based particle filtering method on the empirical S\&P 500 dataset, which suggests that the learned proposal remains effective in the real-data setting. Figure~\ref{sp500r} shows the corresponding filtering results. As illustrated in Figure~\ref{sp500r}, the proposed flow-based particle filtering method tracks the latent volatility of the S\&P 500 index from December 31, 2018, to December 29, 2022. The large peak in the predicted mean together with the substantial widening of the $90\%$ credible interval around $k\approx 300$ reflects the severe market disruption associated with the initial COVID-19 outbreak in early 2020. Moreover, from roughly $k\approx 800$ onward, the estimated volatility remains clearly elevated, and several pronounced peaks appear during this period. This pattern is consistent with the strong and persistent financial market turbulence associated with the Russia--Ukraine war and the broader uncertainty in 2022.

    \begin{figure}[h]
		\centering\vspace{-12pt}
		\includegraphics[width=.9\textwidth]{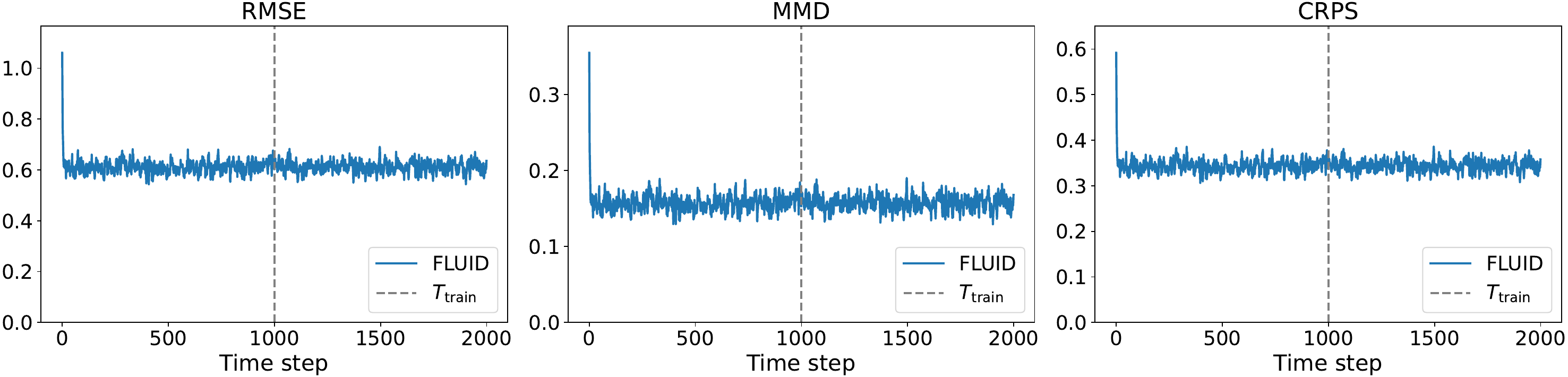}
		\includegraphics[width=.9\textwidth]{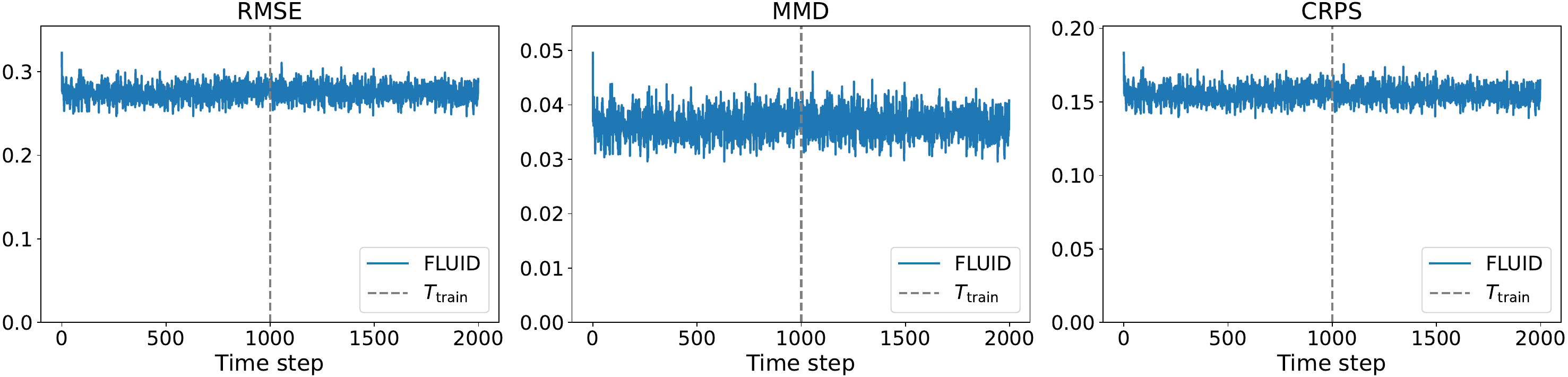}\vspace{-6pt}
		\caption{Time evolution of error metrics (RMSE, MMD, and CRPS) for the predicted filtering distribution $p_{\theta_1,\psi}(u_k \mid s_k)$ (top row) and the backward kernel distribution $p_{\theta_2,\psi}(u_{k-1} \mid u_k, s_k)$ (bottom row) for the two-factor stochastic volatility model.}\label{svrmse}
        
		\vspace{3pt}
		\includegraphics[width=0.65\textwidth]{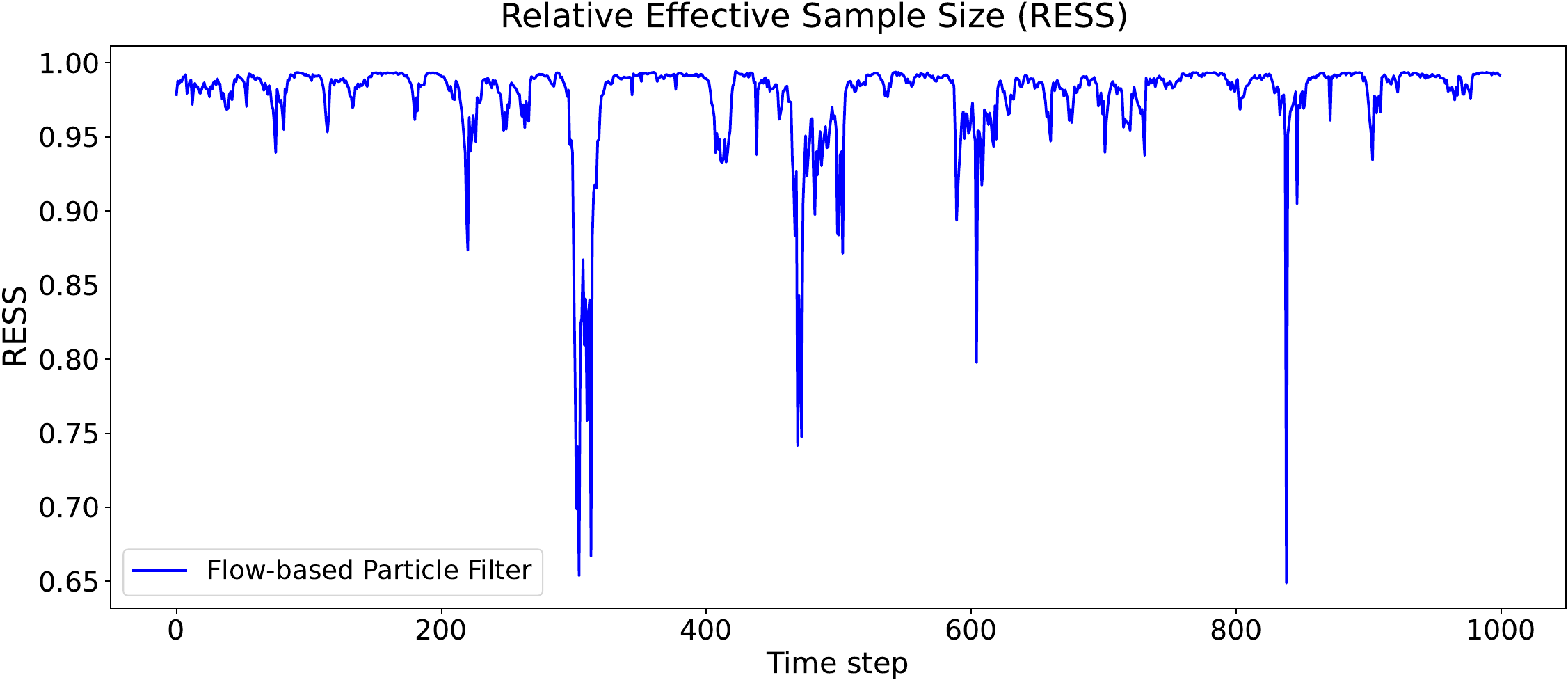}\vspace{-10pt}
		\caption{Time evolution of RESS for the filtering distributions of the flow-based particle filtering method $p^{\mathrm{particle}}_{\theta_3,\theta_4}(u_k\mid y_{1:k})$ for the two-factor stochastic volatility model on S\&P 500 dataset.}\label{esssp500}
        
		\centering
		\includegraphics[width=0.65\textwidth]{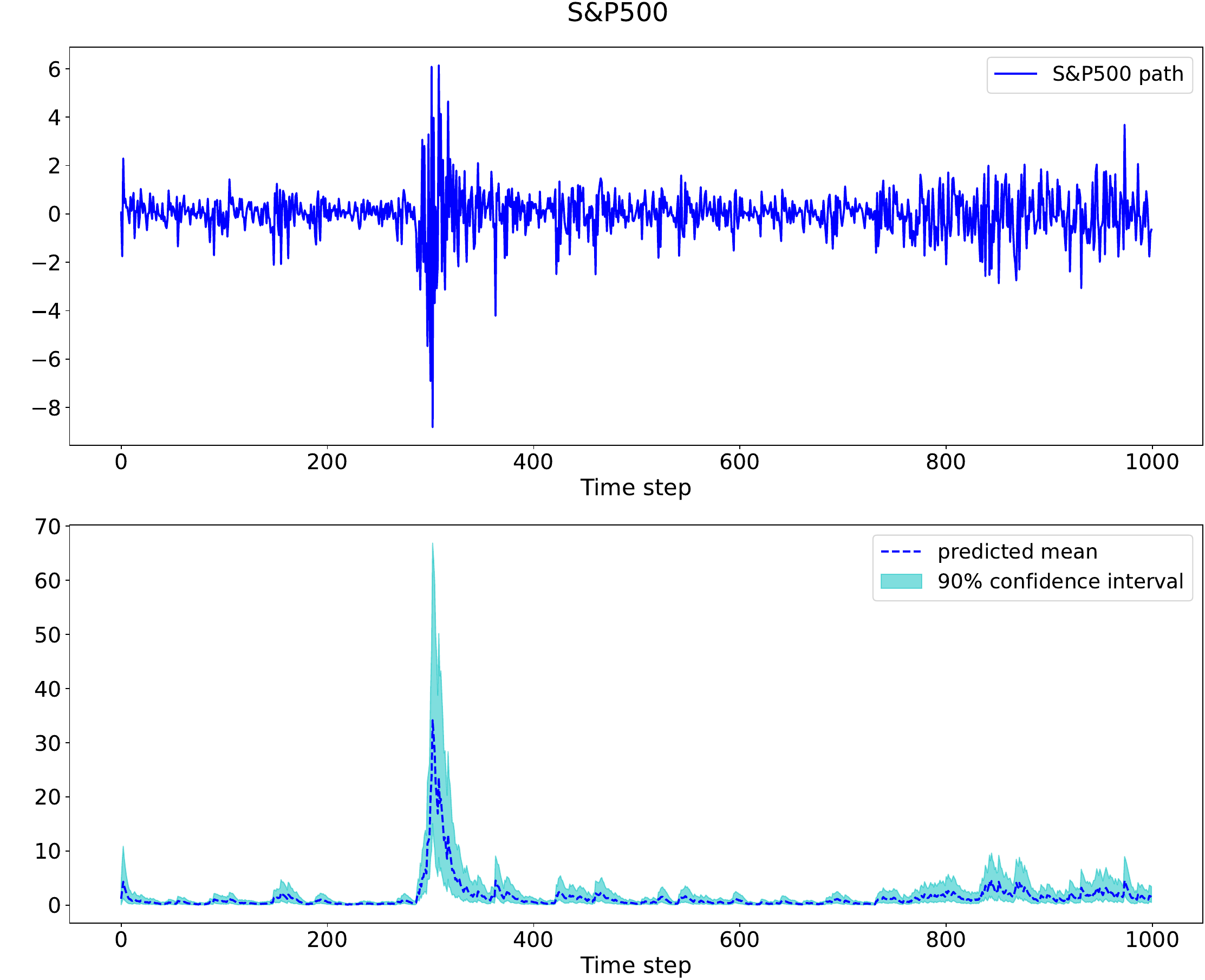}\vspace{-6pt}
		\caption{Flow-based particle filtering results for the S\&P 500 dataset. The top row visualizes the observation sequence over time. The bottom row displays the mean and the 90\% credible interval (5th to 95th percentiles) of the exponentiated state $\exp(u_k)$, derived from the estimated filtering distribution $p^{\mathrm{particle}}_{\theta_3,\theta_4}(u_k\mid y_{1:k})$.}
		\label{sp500r}
    \end{figure}

	\FloatBarrier
    
	\subsection{Burgers' equation}\label{burgers}
	In this subsection we experiment with the Burgers' equation driven by an additive random noise, which has found extensive applications. Let us consider the following Burgers' equation:
	\begin{equation}
		\left\{
		\begin{aligned}
			du +\bigl( u\,\partial_x u - \nu\,\partial_{xx}^2 u\,\bigr)dt
			&= \sigma\, dW(t), && x\in[-1,1],\ t\in[0,1],\\
			u(0,x) &= -\sin(\pi x), && x\in[-1,1],\\
			u(t,-1) &= 0, && t\in[0,1],\\
			u(t,1) &= 0, && t\in[0,1].
		\end{aligned}
		\right.\label{burgersequation}
	\end{equation}
	where $\nu$ represents the kinematic viscosity, $W(t)$ is a continuous Wiener process, and $u(x,t)$ indicates the concentration at position $x$ and time $t$. In this example, $\sigma$ and $\nu$ are set at $1.0$
	and $0.05$, respectively. Additional numerical results for the case $\nu=0.01$ are provided in Appendix \ref{appendix:burgers0.01}.

	To generate the trajectory data, we simulate the Burgers' equation (\ref{burgersequation}) using an hybrid finite difference on a $201\times 50$ uniform temporal-spatial domain. The target state and measurement are defined as:
	\begin{equation*}
		\begin{aligned}
			u_k &= \bigl(u(k\Delta t,x_1),u(k\Delta t,x_2),\ldots,u(k\Delta t,x_{50})\bigr),\\
			y_k &= \bigl(u(k\Delta t,x_1),u(k\Delta t,x_3)\ldots,u(k\Delta t,x_{50})\bigr) + \epsilon_{y,k},
			\qquad \epsilon_{y,k} \sim \mathcal{N}\bigl(0,\,r^2 I_{n_y}\bigr),
		\end{aligned}
	\end{equation*}
	where $\Delta t = 0.005$, $n_y=25$ and $\{x_1,\ldots,x_{50}\}$ are the spatial grid points in the
	hybrid finite difference simulation.
	Here, we aim to quantify the uncertainty of the random field $u(k\delta t,\cdot)$ at
	$\{x_1,\ldots,x_{50}\}$ using FLUID, based on incomplete and noisy
	observations $y_k$.
	We generate $N = 3000$ labeled trajectories of length $T_{\mathrm{train}} = 200$ for training, and
	an additional $N_{\mathrm{test}} = 200$ trajectories for testing.
	
	First, we investigate the sensitivity of FLUID to varying observation noise levels $r^2$. Specifically, Table\,\ref{burgerst1} compares the performance of our filtering distribution $p_{\theta_1,\psi}(u_k\mid s_k)$ against the FBF method $p_{\mathrm{FBF}}(u_k\mid y_{1:k})$ under different noise regimes. These results indicate that the performance of the two methods is remarkably similar, as both have nearly reached the accuracy of the analytical solution for the Burgers' equation. On the other hand, our proposed filtering approach maintains a slight but consistent advantage over the FBF method.
	This trend is also reflected in Figure~\ref{burgers0.05vs}, which shows the evolution of RMSE and the other error metrics for both methods over time.
	
	\begin{table}[h]
		\centering\footnotesize
		\begin{tabular}{lrrrrrr}
			\toprule
			& \multicolumn{2}{c}{RMSE}
			& \multicolumn{2}{c}{MMD}
			& \multicolumn{2}{c}{CRPS}\\
			\cmidrule(lr){2-3}\cmidrule(lr){4-5}
			\cmidrule(lr){6-7}
			& $p_{\theta_1,\psi}(u_k\mid s_k)$ & $p_{\mathrm{FBF}}(u_k\mid y_{1:k})$ & $p_{\theta_1,\psi}(u_k\mid s_k)$ & $p_{\mathrm{FBF}}(u_k\mid y_{1:k})$ & $p_{\theta_1,\psi}(u_k\mid s_k)$ & $p_{\mathrm{FBF}}(u_k\mid y_{1:k})$ \\
			\midrule
			$r^2 = 0.01$ & \textbf{0.0751}&0.0752&0.0683&\textbf{0.0681}&0.0411&\textbf{0.0410}\\
			$r^2 = 0.04$ & \textbf{0.0898}&0.0917&\textbf{0.0961}&0.0994&\textbf{0.0497}&0.0506\\
			$r^2 = 0.09$ & \textbf{0.1003}&0.1015&\textbf{0.1184}&0.1203&\textbf{0.0556}&0.0561\\
			$r^2 = 0.16$ & \textbf{0.1084}&0.1113&\textbf{0.1369}&0.1426&\textbf{0.0601}&0.0615\\
			$r^2 = 0.25$ & \textbf{0.1149}&0.1179&\textbf{0.1523}&0.1584&\textbf{0.0637}&0.0650\\
			\bottomrule
		\end{tabular}
		\caption{Comparison of filtering performance among the FLUID method $p_{\theta_1,\psi}(u_k\mid s_k)$ and the FBF method $p_{\mathrm{FBF}}(u_k\mid y_{1:k})$ for the Burgers' equation problem with varying observation noise level $r^2$.}\label{burgerst1}
	\end{table}
	\begin{figure}[h]
		\centering\vspace{-12pt}
		\quad\;\includegraphics[width=.88\textwidth]{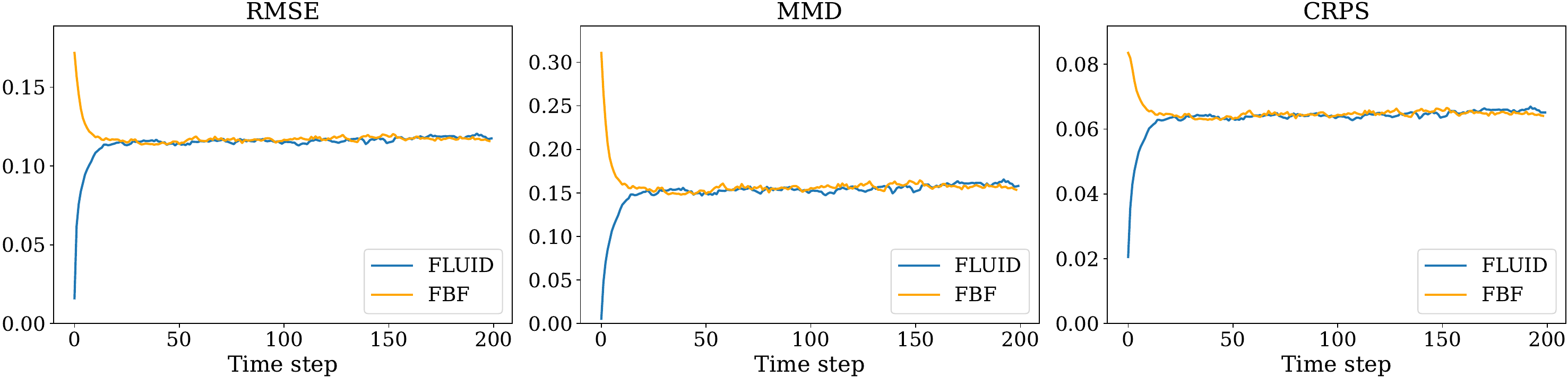}\vspace{-6pt}
		\caption{Comparison of the time evolution of error metrics (RMSE, MMD, and CRPS) for the filtering distributions of the proposed FLUID method $p_{\theta_1,\psi}(u_k \mid s_k)$ and the FBF method $p_{\mathrm{FBF}}(u_k \mid y_{1:k})$ for the Burgers' equation at an observation noise level of $r^2=0.25$.}\label{burgers0.05vs}
	\end{figure}
    
	\begin{figure}[h]
        \centering\vspace{-12pt}
		\includegraphics[width=.9\textwidth]{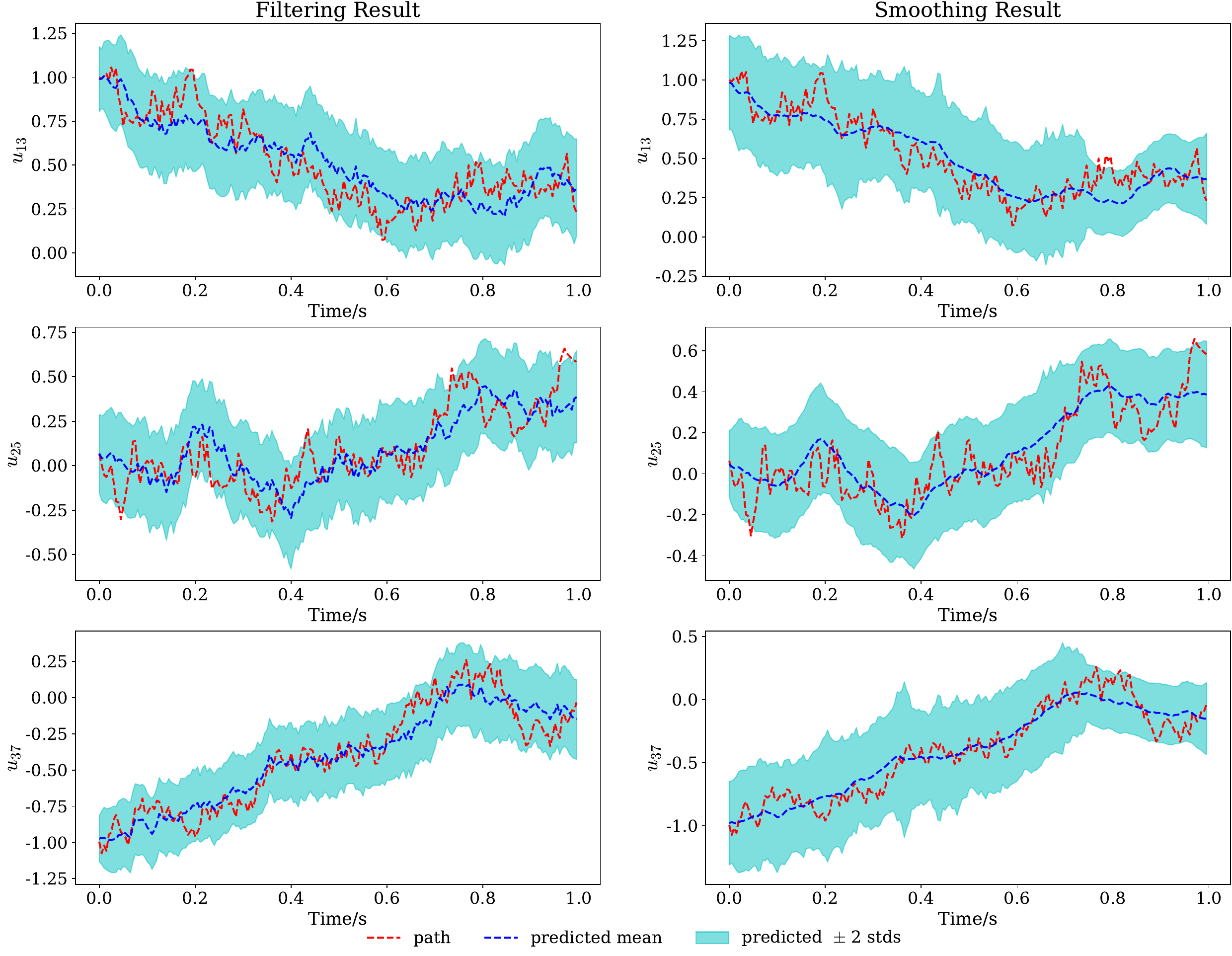}\vspace{-6pt}
		\caption{Visualization of the mean and uncertainty of the estimated filtering distribution $p_{\theta_1,\psi}(u_k\mid s_k)$ (left column) and the smoothing distribution $p^{\mathrm{smoothing}}_{\theta_1,\theta_2,\psi}(u_k\mid y_{1:T})$ (right column) for the Burgers' equation problem at an observation noise level of $r^2=0.25$.}\label{burgers0.05fusion}
	\end{figure}
	
	\FloatBarrier

Table~\ref{burgerst2} reports the performance of FLUID for the Burgers' equation. The smoothing distribution achieves slightly better results than the filtering distribution, which is expected since it incorporates information from the full observation sequence. To illustrate the behavior of FLUID under high observation noise, Figure~\ref{burgers0.05fusion} shows the mean and uncertainty of the estimated filtering and smoothing distributions along a test trajectory with $r^2=0.25$ and $T=200$, corresponding to the physical time $t=1$. Figure~\ref{burgers0.05metric} further presents the evolution of RMSE and the other error metrics for $p_{\theta_2,\psi}(u_{k-1}\mid u_k,s_k)$, providing additional evidence of the robustness of the proposed framework. Finally, Figure~\ref{burgers0.05st} displays the absolute error between the posterior mean and the reference state together with the predicted standard deviation along the same test trajectory.
	
	\begin{table}[h]
		\centering\footnotesize
		\begin{tabular}{lrrrrrrrrr}
			\toprule
			& \multicolumn{3}{c}{$p_{\theta_1,\psi}(u_k\mid s_k)$}
			& \multicolumn{3}{c}{$p_{\theta_2,\psi}(u_{k-1}\mid u_k,s_k)$}
			& \multicolumn{3}{c}{$p^{\mathrm{smoothing}}_{\theta_1,\theta_2,\psi}(u_k\mid y_{1:T_{\mathrm{train}}})$} \\
			\cmidrule(lr){2-4}\cmidrule(lr){5-7}
			\cmidrule(lr){8-10}
			& RMSE                                                                             & MMD    & CRPS   & RMSE   & MMD    & CRPS   & RMSE   & MMD    & CRPS   \\
			\midrule
			$r^2=0.01$ &0.0751&0.0683&0.0411&0.0525&0.0340&0.0290&0.0710&0.0570&0.0373\\
			$r^2=0.04$ &0.0898&0.0961&0.0497&0.0562&0.0387&0.0311&0.0879&0.0795&0.0458\\
			$r^2=0.09$ &0.1003&0.1184&0.0556&0.0574&0.0405&0.0318&0.1056&0.0950&0.0502\\
			$r^2=0.16$ &0.1084&0.1369&0.0601&0.0581&0.0414&0.0322&0.0981&0.1088&0.0531\\
			$r^2=0.25$ &0.1149&0.1523&0.0637&0.0584&0.0418&0.0324&0.1185&0.1236&0.0581\\
			\bottomrule
		\end{tabular}
		\caption{The performance of the predicted filtering distribution $p_{\theta_1,\psi}(u_k\mid s_k)$, the backward kernel distribution $p_{\theta_2,\psi}(u_{k-1}\mid u_k,s_k)$, and the smoothing distribution $p^{\mathrm{smoothing}}_{\theta_1,\theta_2,\psi}(u_k\mid y_{1:T_{\mathrm{train}}})$ for  the Burgers' equation problem with varying observation noise level $r^2$.}\label{burgerst2}
	\end{table}
    
	\begin{figure}[h]
		\centering\vspace{-12pt}
		\includegraphics[width=.9\textwidth]{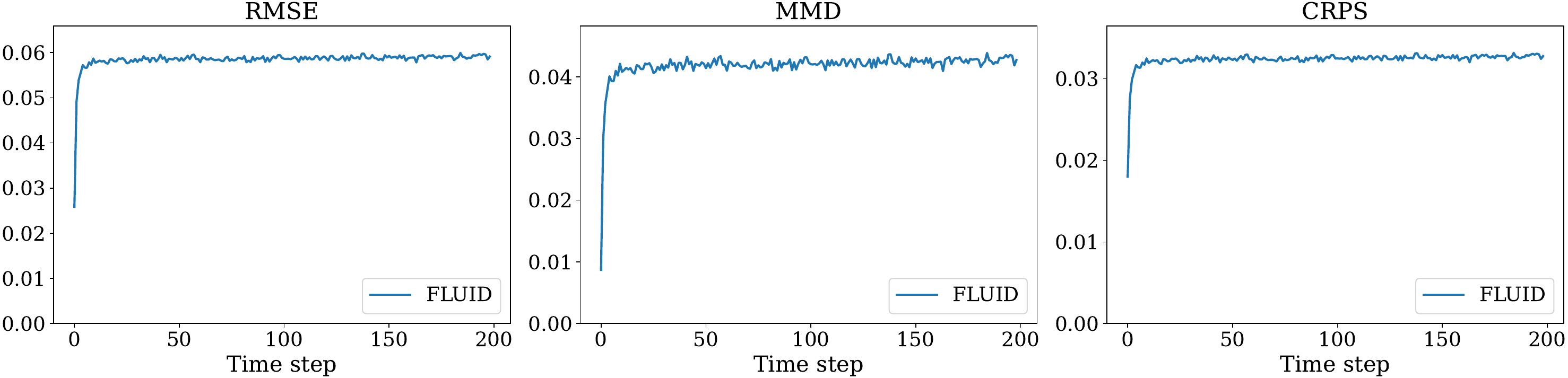}
		\caption{Time evolution of error metrics (RMSE, MMD, and CRPS) for the predicted backward kernel distribution $p_{\theta_2,\psi}(u_k \mid u_{k+1}, s_k)$ for the Burgers' equation problem at an observation noise level $r^2=0.25$.}\label{burgers0.05metric}
    	\vspace{12pt}
		\centering
		\includegraphics[width=.9\textwidth]{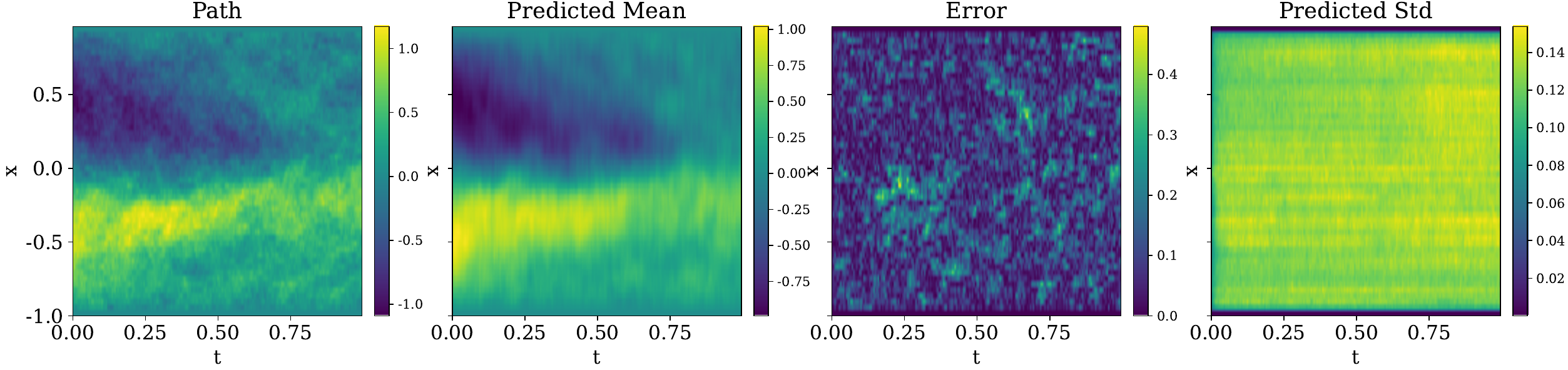}
		\includegraphics[width=.9\textwidth]{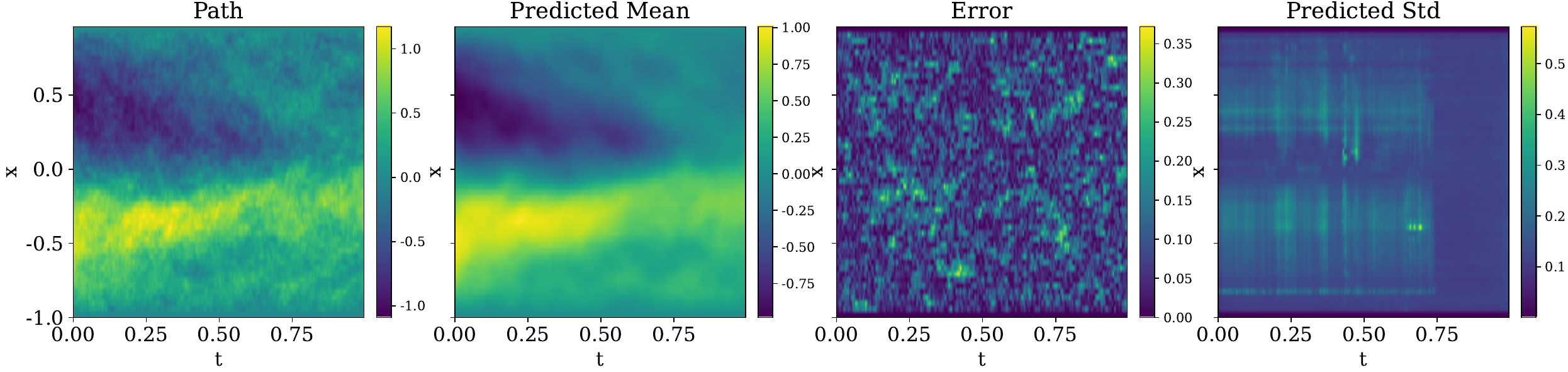}
		\caption{Spatiotemporal results for the predicted filtering distribution $p_{\theta_1,\psi}(u_k \mid s_k)$ (top row) and the smoothing distribution $p^{\mathrm{smoothing}}_{\theta_1,\theta_2,\psi}(u_k\mid y_{1:T})$ (bottom row) for the Burgers' equation problem at an observation noise level $r^2=0.25$. From left to right, the columns display the true path (reference), the predicted mean, the absolute error between them, and the predicted standard deviation.}\label{burgers0.05st}
	\end{figure}
	
	\FloatBarrier
	
	\subsection{Lorenz model}\label{lorenz96}
	Here, we consider the two-scale Lorenz model \cite{wilks2005effects}, which describes a coupled system of equations for $K$ slow variables, $(u_1, \ldots, u_K)$, and $K \times J$ fast variables, $(v_1, \ldots, v_{KJ})$, arranged around a latitude circle
	\begin{equation}
	\left\{
	\begin{array}{rll}
		\displaystyle du_{t,i} & = \left(- u_{t,i-1}  ( u_{t,i-2} - u_{t,i+1} ) - u_{t,i} + F - \frac{hc}{b} \sum_{j = (i-1)J + 1}^{iJ} v_{t,j}\right) dt + \sigma_u dW_{u,i}, & i = 1,\ldots,K, \vspace{6pt}\\
		\displaystyle dv_{t,j} & = \left( - c \, b \, v_{t,j+1}  ( v_{t,j+2} - v_{t,j-1} ) - c \, v_{t,j} + \frac{hc}{b} u_{t,\lfloor(j-1)/J \rfloor + 1} \right) dt + \sigma_v dW_{v,j}, & j = 1,\ldots,KJ,
	\end{array}
	\right.\label{lorenz96evo}
	\end{equation}
	where $d W_{u,k}$ and $d W_{v,j}$ are Brownian motion, $\lfloor \cdot \rfloor$ denotes the floor function, and the variables $u$ and $v$ have cyclic boundary conditions; that is,
	\begin{equation*}
	    u_{t,i} \coloneqq u_{t, \mathrm{mod}(i-1, K) +1 } \quad \mathrm{and} \quad v_{t,j} \coloneqq  v_{t, \mathrm{mod}(j-1, KJ) +1 }.
	\end{equation*}

	In the above system, the $u_{t,i}$ variables are large-amplitude, low-frequency variables, each of which is coupled to $J$ small-amplitude, high-frequency $v_{t,j}$ variables. Lorenz suggested that the $v_{t,j}$ represent convective events, whereas the $u_{t,i}$ could represent, for example, larger-scale synoptic events. Parameters of the above system and their corresponding values adopted in this example are summarized in Table~\ref{L96}. Note that increasing the forcing term $F$ leads to more turbulent behaviour. 
	
	\begin{table}[h]
		\centering\footnotesize
		\begin{tabular}{lll}
			\toprule
			parameter & symbol & value \\
			\midrule
			number of slow variables $U_k$ & $K$ & vary \\ 
			number of fast variables $V_j$ per $U_k$ & $J$ & $32$ \\
			forcing term & $F$ & $5$, $8$, or $16$\\
			coupling constant & $h$ & $1$ \\
			spatial-scale ratio & $b$ & $10$\\
			time-scale ratio & $c$ & 4 \\
			noise level of slow variables & $\sigma_u$ & 0.1 \\
			noise level of fast variables & $\sigma_v$ & 0.01 \\
			\bottomrule
		\end{tabular}
		\caption{Parameters of two-scale Lorenz system.}\label{L96}
	\end{table}

We consider two test cases. The first is the conventional single-scale case, where we set $c = 0$ to switch off the interaction between the two scales and focus solely on the slow scale. We additionally set $\sigma_u = 1$ for this experiment. Under these conditions, both the state-transition density and the likelihood function are available. The second is the two-scale system specified by parameters given in Table~\ref{L96}. In this case, after marginalizing the interaction with the fast dynamics, the effective state-transition density for the slow dynamics alone becomes intractable.

In both test cases, the slow variables $u_k$ are inferred through the measurement process
\begin{equation*}
 y_{k,i} = u_{k,i}^3 + \epsilon_{y,i}, \quad \epsilon_{y,i} \sim \mathcal{N}(0, 1).   
\end{equation*}
For the single-scale case, observations are acquired at all indices $i = 1, \dots, K$, whereas for the two-scale case, they are restricted to the odd indices $i = 2\tau - 1$ for $\tau = 1, \dots, K/2$.

\subsubsection{Single-scale case}
    In this test case, we set the initial condition to $u_{0,j} = \sin(2\pi j/n)$ and the forcing term to $F=8$, which represents the moderate
turbulent regime. The measurement time step is fixed at $\Delta t = 0.05$. We design a series of experiments across various state dimensions $K = 10, 20, 30, 40, 50$. For each $K$, we generate $N_{\mathrm{train}} = 2000$ trajectories of length $T_{\mathrm{train}} = 500$ for training and
	$N_{\mathrm{test}} = 200$ trajectories for testing.

Table~\ref{lorenz96vs} compares the performance of the learned filtering distribution $p_{\theta_1,\psi}(u_k\mid s_k)$ with that of the FBF method $p_{\mathrm{FBF}}(u_k\mid y_{1:k})$ across different state dimensions $K$. The results show that, for the single-scale Lorenz-96 model, the proposed FLUID method consistently outperforms FBF. This advantage is also reflected in Figure~\ref{lorenz96filterrmse}, which further shows that the filtering performance remains stable over long prediction horizons.

	\begin{table}[h]
		\centering\footnotesize
		\begin{tabular}{lcccccc}
			\toprule
			& \multicolumn{2}{c}{RMSE}
			& \multicolumn{2}{c}{MMD}
			& \multicolumn{2}{c}{CRPS}                                                                                                                                              \\
			\cmidrule(lr){2-3}\cmidrule(lr){4-5}
			\cmidrule(lr){6-7}
			& $p_{\theta_1,\psi}(u_k\mid s_k)$ & $p_{\mathrm{FBF}}(u_k\mid y_{1:k})$ & $p_{\theta_1,\psi}(u_k\mid s_k)$ & $p_{\mathrm{FBF}}(u_k\mid y_{1:k})$ & $p_{\theta_1,\psi}(u_k\mid s_k)$ & $p_{\mathrm{FBF}}(u_k\mid y_{1:k})$ \\
			\midrule
			$K = 10$ &\textbf{0.1632}&0.2044&\textbf{0.0647}&0.0967&\textbf{0.0738}&0.1015\\
			$K = 20$ &\textbf{0.1945}&0.2439&\textbf{0.1703}&0.2530&\textbf{0.0902}&0.1272\\
			$K = 30$ &\textbf{0.2081}&0.2604&\textbf{0.2675}&0.3987&\textbf{0.0976}&0.1393\\
			$K = 40$ &\textbf{0.2255}&0.2742&\textbf{0.3844}&0.5295&\textbf{0.1131}&0.1489\\
			$K = 50$ &\textbf{0.2605}&0.3106&\textbf{0.5508}&0.6918&\textbf{0.1366}&0.1668\\
			\bottomrule
		\end{tabular}
		\caption{Comparison of filtering performance among the proposed FLUID method $p_{\theta_1,\psi}(u_k\mid s_k)$ and the FBF method $p_{\mathrm{FBF}}(u_k\mid y_{1:k})$ for the single-scale Lorenz-96 model with varying state dimensions.}\label{lorenz96vs}
	\end{table}
	\begin{figure}[h]
		\centering\vspace{-12pt}
		\includegraphics[width=.9\textwidth]{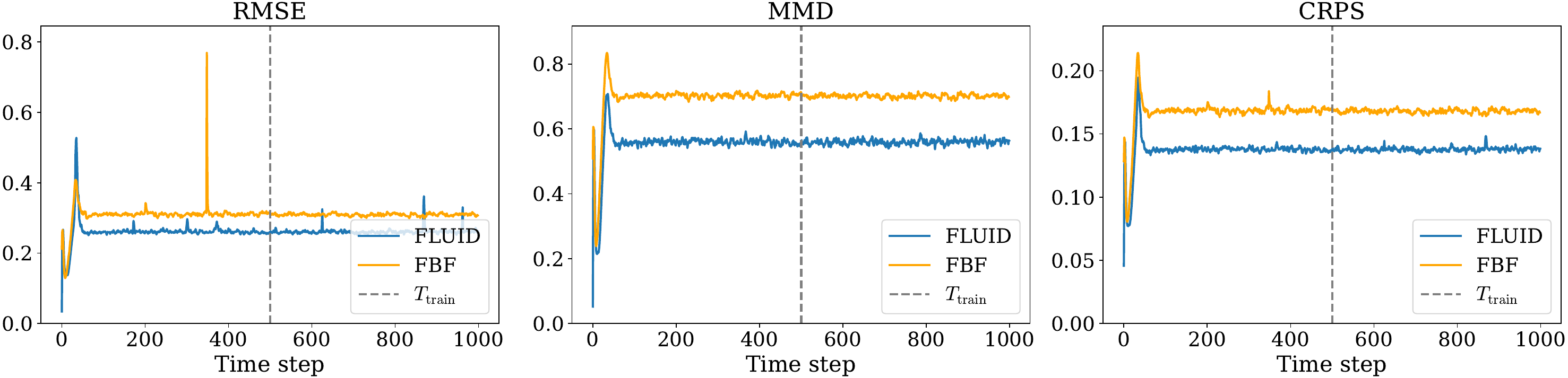}\vspace{-6pt}
		\caption{Comparison of the time evolution of error metrics (RMSE, MMD, and CRPS) for the filtering distributions of the proposed FLUID method $p_{\theta_1,\psi}(u_k \mid s_k)$ and the FBF method $p_{\mathrm{FBF}}(u_k \mid y_{1:k})$ for the single-scale Lorenz-96 model at state dimension $n=50$.}\label{lorenz96filterrmse}
	\end{figure}

Next, Table~\ref{lorenz96fs} reports the performance of FLUID for the single-scale Lorenz-96 model. For state dimensions $K\leq 40$, the smoothing distribution consistently outperforms the filtering distribution. This trend changes at $K=50$, where the smoothing results become worse than the filtering estimates. This deterioration is mainly due to the iterative structure of the smoothing procedure in Algorithm~\ref{alg:predict-smooth}. In particular, sampling from $p^{\mathrm{smoothing}}_{\theta_1,\theta_2,\psi}(u_k\mid y_{1:T_{\mathrm{train}}})$ requires repeated backward propagation through the learned backward flow $p_{\theta_2,\psi}(u_{k-1}\mid u_k,s_k)$, so approximation errors may accumulate along the reverse trajectory and eventually reduce the smoothing accuracy in higher-dimensional settings. This phenomenon can, however, be alleviated by increasing the amount of training data. For example, when the number of training trajectories is increased from $N_{\mathrm{train}}=2000$ to $3000$ for the case $K=50$, the smoothing RMSE, MMD, and CRPS decrease to $0.2063$, $0.4034$, and $0.1085$, respectively, so that the smoothing results again become better than the filtering results. 

	\begin{table}[h]
		\centering\footnotesize
		\begin{tabular}{lrrrrrrrrr}
			\toprule
			& \multicolumn{3}{c}{$p_{\theta_1,\psi}(u_k\mid s_k)$}
			& \multicolumn{3}{c}{$p_{\theta_2,\psi}(u_{k-1}\mid u_k,s_k)$}
			& \multicolumn{3}{c}{$p^{\mathrm{smoothing}}_{\theta_1,\theta_2,\psi}(u_k\mid y_{1:T_{\mathrm{train}}})$} \\
			\cmidrule(lr){2-4}\cmidrule(lr){5-7}
			\cmidrule(lr){8-10}
			& RMSE                                                                             & MMD    & CRPS   & RMSE   & MMD    & CRPS   & RMSE   & MMD    & CRPS   \\
			\midrule
			$K = 10$ &0.1632&0.0647&0.0738&0.1105&0.0301&0.0527&0.1298&0.0428&0.0598 \\
			$K = 20$ &0.1945&0.1703&0.0902&0.1240&0.0738&0.0630&0.1525&0.1108&0.0742\\
			$K = 30$ &0.2081&0.2675&0.0976&0.1347&0.1253&0.0697&0.1680&0.1896&0.0838\\
			$K = 40$ &0.2255&0.3844&0.1131&0.1499&0.1979&0.0810&0.1881&0.2950&0.0991 \\
			$K = 50$ &0.2605&0.5508&0.1366&0.1695&0.2966&0.0935&0.5423&0.4594&0.1844\\
			\bottomrule
		\end{tabular}
		\caption{The performance of the predicted filtering distribution $p_{\theta_1,\psi}(u_k\mid s_k)$, the backward kernel distribution $p_{\theta_2,\psi}(u_{k-1}\mid u_k,s_k)$, and the smoothing distribution $p^{\mathrm{smoothing}}_{\theta_1,\theta_2,\psi}(u_k\mid y_{1:T_{\mathrm{train}}})$ for  the single-scale Lorenz-96 model with varying state dimensions.}\label{lorenz96fs}
	\end{table}

For $K=50$, Figure~\ref{lorenz96backwardrmse} shows the evolution of RMSE and the other error metrics for $p_{\theta_2,\psi}(u_{k-1}\mid u_k,s_k)$, which again indicates the robustness of the proposed framework over long prediction horizons. In Figure~\ref{lorenz96fusion}, we further visualize the posterior mean and uncertainty along a test trajectory for the Lorenz-96 model with $K=50$ and $T=1000$, corresponding to the physical time $t=50$.  Finally, Figure~\ref{lorenz96st} presents the absolute error with respect to the reference state together with the predicted standard deviation along the same trajectory.

	\begin{figure}[h]
		\centering\vspace{-3pt}
		\includegraphics[width=.9\textwidth]{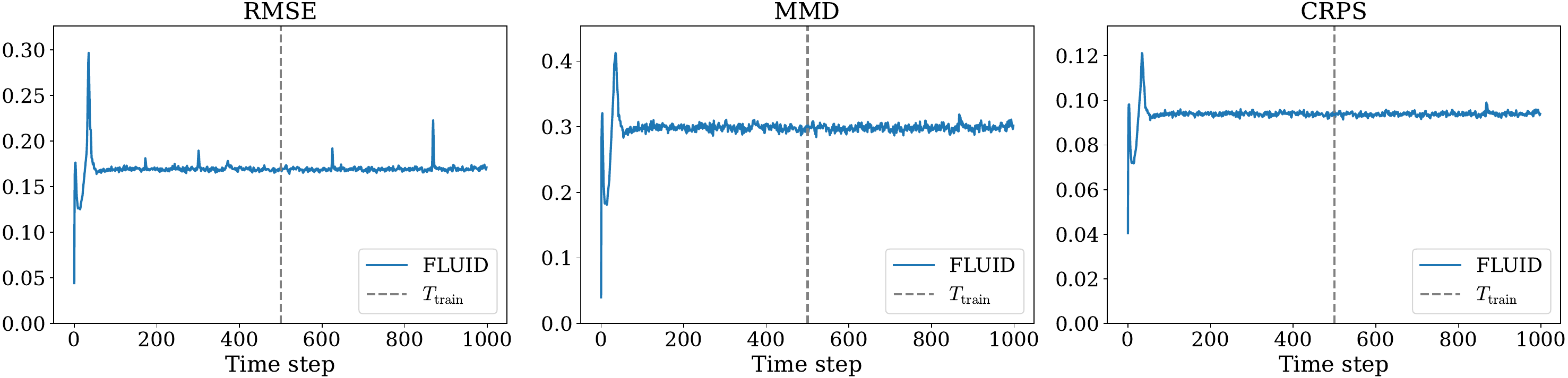}\vspace{-6pt}
		\caption{Time evolution of error metrics (RMSE, MMD, and CRPS) for the predicted backward kernel distribution $p_{\theta_2,\psi}(u_k \mid u_{k+1}, s_k)$ for the single-scale Lorenz-96 model at state dimension $K=50$.}\label{lorenz96backwardrmse}
	\end{figure}
    
	\begin{figure}[h]
		\centering
		\includegraphics[width=.9\textwidth]{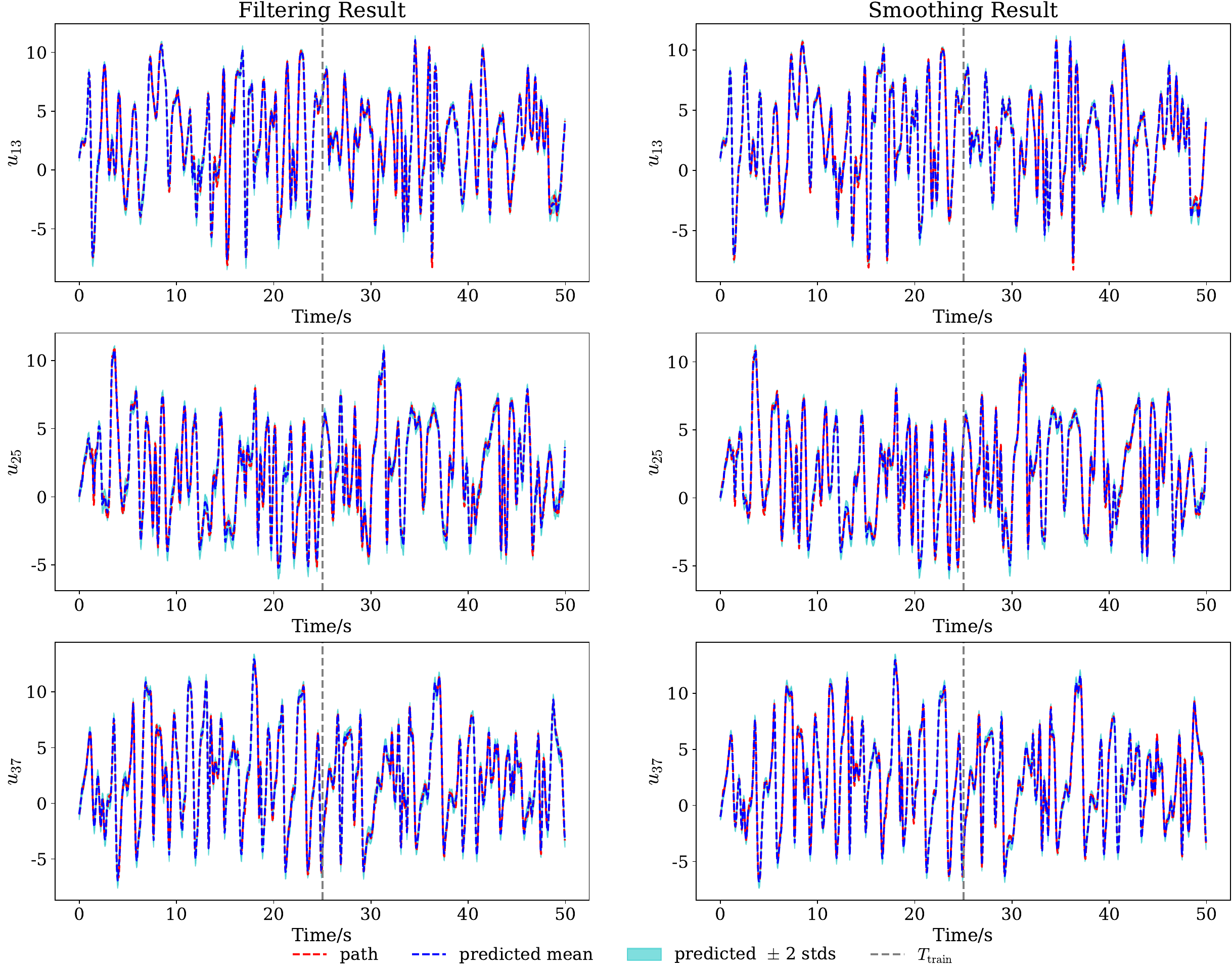}\vspace{-6pt}
		\caption{Visualization of the mean and uncertainty of the estimated filtering distribution $p_{\theta_1,\psi}(u_k\mid s_k)$ (left column) and the smoothing distribution $p^{\mathrm{smoothing}}_{\theta_1,\theta_2,\psi}(u_k\mid y_{1:T})$ (right column) for the single-scale Lorenz-96 model at state dimension $n=50$.}\label{lorenz96fusion}
	\end{figure}

	\begin{figure}[h]
		\centering\vspace{-12pt}
		\includegraphics[width=.9\textwidth]{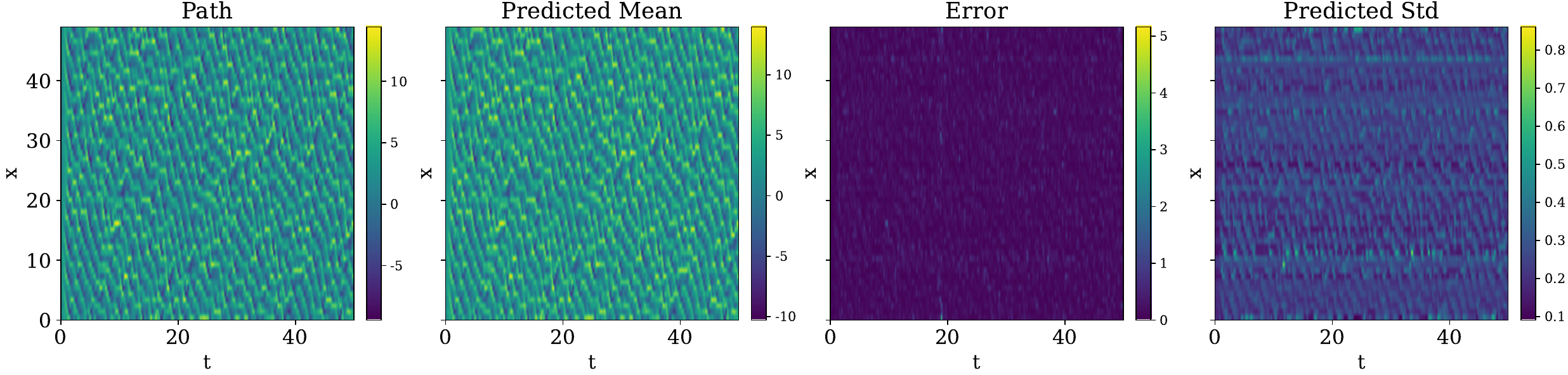}
		\includegraphics[width=.9\textwidth]{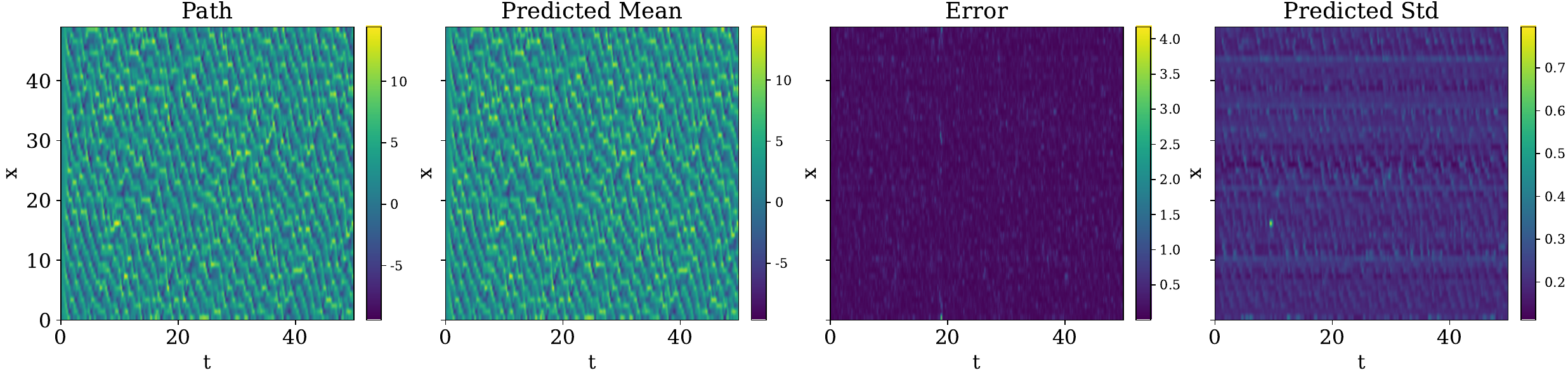}\vspace{-6pt}
		\caption{Spatiotemporal results for the predicted filtering distribution $p_{\theta_1,\psi}(u_k \mid s_k)$ (top row) and the smoothing distribution $p^{\mathrm{smoothing}}_{\theta_1,\theta_2,\psi}(u_k\mid y_{1:T})$ (bottom row) for the single-scale Lorenz-96 model at state dimension $K=50$. From left to right, the columns display the true path (reference), the predicted mean, the absolute error between them, and the predicted standard deviation.}\label{lorenz96st}
	\end{figure}

	  Furthermore, we investigate the necessity of sharing the summary statistic for the smoothing procedure. As shown in Table~\ref{lorenz96joint} for the $K=20$ case, utilizing a shared $s_k$ is critical for achieving accurate smoothing. Although sharing $s_k$ only mildly affects the performance of the forward and backward flows when they are evaluated separately, it is crucial for the backward iterative sampling procedure. With independent summary statistics, errors accumulate rapidly during smoothing and eventually render the estimates unusable, especially for state dimensions $K\geq 20$. This observation is also consistent with our analysis of the loss function \eqref{eq:loss_shared} in Section~\ref{sec:method-amortized}.
	
	\begin{table}[h]
		\centering\footnotesize
		\begin{tabular}{lrrrrrrrrr}
			\toprule
			& \multicolumn{3}{c}{$p_{\theta_1,\psi}(u_k\mid s_k)$}
			& \multicolumn{3}{c}{$p_{\theta_2,\psi}(u_{k-1}\mid u_k,s_k)$}
			& \multicolumn{3}{c}{$p^{\mathrm{smoothing}}_{\theta_1,\theta_2,\psi}(u_k\mid y_{1:T_{\mathrm{train}}})$} \\
			\cmidrule(lr){2-4}\cmidrule(lr){5-7}
			\cmidrule(lr){8-10}
			& RMSE                                                                             & MMD    & CRPS   & RMSE   & MMD    & CRPS   & RMSE   & MMD    & CRPS   \\
			\midrule
			$\mathrm{Shared }$ &0.1945&0.1703&0.0902&0.1240&0.0738&0.0630&0.1525&0.1108&0.0742\\
			$\mathrm{Independent}$ &0.1958&0.1713&0.0916&0.1596&0.1184&0.0816&60.3744&0.9841&33.2893\\
			\bottomrule
		\end{tabular}
		\caption{Performance comparison for the single-scale Lorenz-96 model at $K=20$ under two configurations: a shared-summary setting, where $p_{\theta_1,\psi}(u_k\mid s_k)$ and $p_{\theta_2,\psi}(u_{k-1}\mid u_k,s_k)$ use the same summary statistic $s_k$, and an independent-summary setting, where they use different summary statistics.}\label{lorenz96joint}
	\end{table}
	\FloatBarrier

	\subsubsection{Two-scale Lorenz model}
	Next we test FLUID on the two-scale Lorenz model with parameters specified in Table~\ref{L96}, slow-state dimension $K = 16$, and the initial condition
\begin{equation}
\begin{aligned}
	u_{0,i}&=F+\sigma_u \,\epsilon_u,\qquad \epsilon_{u,i}\sim\mathcal{N}(0,1),\\
	v_{0,j}&=\,\sigma_v\,\epsilon_v,\qquad\epsilon_{v,j}\sim\mathcal{N}(0,1).\nonumber
\end{aligned}
\end{equation}
In this test case, we experiment with three choices of the forcing term: the low turbulent regime with $F = 5$, the moderate turbulent regime with $F = 8$, and the full turbulent regime with $F = 16$. 
For each $F$, we generate $N_{\mathrm{train}} = 2000$ trajectories of length $T_{\mathrm{train}} = 500$ for training and
$N_{\mathrm{test}} = 200$ trajectories for testing. Table\,\ref{twoscalevs} compares the performance of our filtering distribution $p_{\theta_1,\psi}(u_k\mid s_k)$ against the FBF method $p_{\mathrm{FBF}}(u_k\mid y_{1:k})$ across various forcing terms $F$. Our approach shows a clear advantage over FBF because the two-scale Lorenz model does not follow the standard structure of (\ref{SSM}), which is a requirement for the FBF framework. This gap in performance shows that our method is more flexible in handling systems with complex dynamics. Additional evidence is provided in Figure~\ref{twoscalefilterrmse}, where the RMSE and other error metrics plots confirm that our filtering results remain accurate and reliable over long periods.

	\begin{table}[h]
	\centering\footnotesize
	\begin{tabular}{lcccccc}
		\toprule
		& \multicolumn{2}{c}{RMSE}
		& \multicolumn{2}{c}{MMD}
		& \multicolumn{2}{c}{CRPS}                                                                                                                                              \\
		\cmidrule(lr){2-3}\cmidrule(lr){4-5}
		\cmidrule(lr){6-7}
		& $p_{\theta_1,\psi}(u_k\mid s_k)$ & $p_{\mathrm{FBF}}(u_k\mid y_{1:k})$ & $p_{\theta_1,\psi}(u_k\mid s_k)$ & $p_{\mathrm{FBF}}(u_k\mid y_{1:k})$ & $p_{\theta_1,\psi}(u_k\mid s_k)$ & $p_{\mathrm{FBF}}(u_k\mid y_{1:k})$ \\
		\midrule
		$F=5$ &\textbf{0.4544}&0.5817&\textbf{0.5462}&0.6557&\textbf{0.2161}&0.2661\\
		$F=8$ &\textbf{0.4397}&0.8301&\textbf{0.5069}&0.8502&\textbf{0.2055}&0.3806\\
		$F=16$ &\textbf{0.5218}&3.7536&\textbf{0.5183}&1.0198&\textbf{0.2350}&1.3578\\
		\bottomrule
	\end{tabular}
	\caption{Comparison of filtering performance among the proposed FLUID method $p_{\theta_1,\psi}(u_k\mid s_k)$ and the FBF method $p_{\mathrm{FBF}}(u_k\mid y_{1:k})$ for the two-scale Lorenz  model with varying forcing terms $F$.}\label{twoscalevs}
\end{table}
	
	\begin{figure}[h]
	\centering
	\includegraphics[width=.9\textwidth]{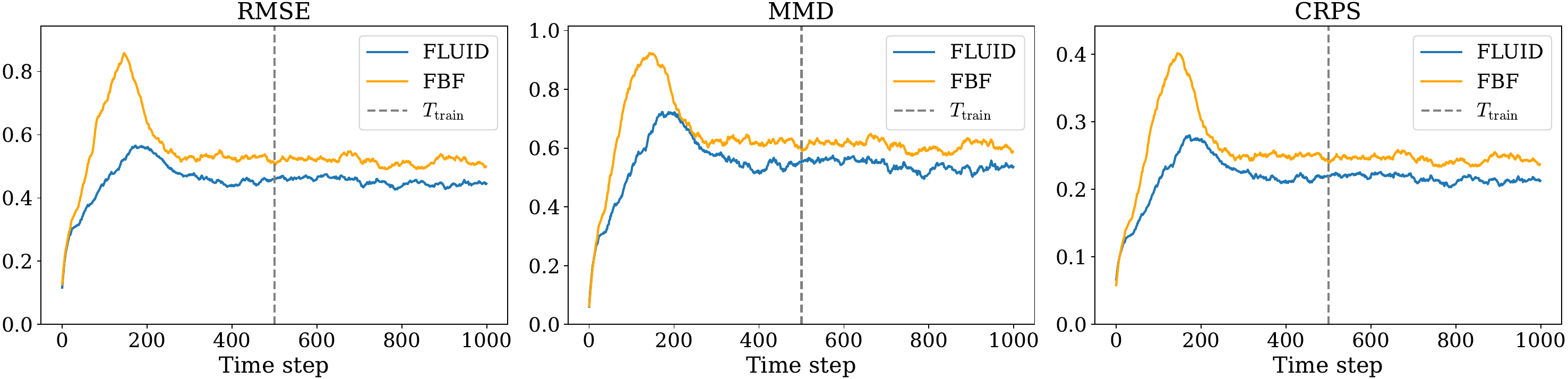}
	\includegraphics[width=.9\textwidth]{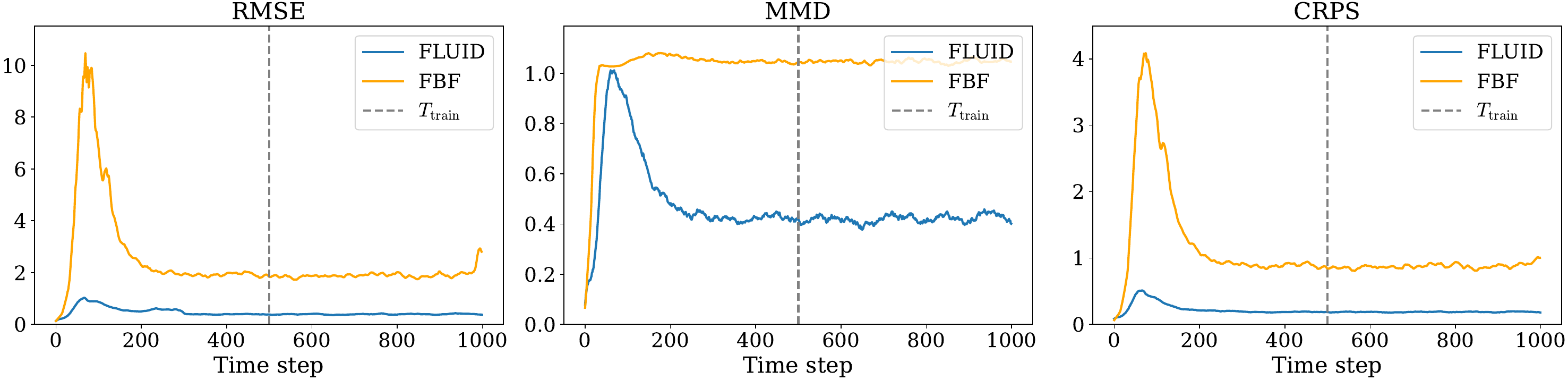}
	\caption{Comparison of the time evolution of error metrics (RMSE, MMD, and CRPS) for the filtering distributions of the proposed FLUID method $p_{\theta_1,\psi}(u_k \mid s_k)$ and the FBF method $p_{\mathrm{FBF}}(u_k \mid y_{1:k})$ for the two-factor Lorenz model with forcing term $F=5$ (top row) and $F=16$ (bottom row).}\label{twoscalefilterrmse}
\end{figure}

Table~\ref{twoscalermse} summarizes the performance of FLUID for this case. The method remains accurate in both the low-turbulence regime with $F=5$ and the fully turbulent regime with $F=16$. In addition, the smoothing distribution consistently yields lower errors than the filtering distribution in all cases.

\begin{table}[h]
		\centering\footnotesize
		\begin{tabular}{lrrrrrrrrr}
			\toprule
			& \multicolumn{3}{c}{$p_{\theta_1,\psi}(u_k\mid s_k)$}
			& \multicolumn{3}{c}{$p_{\theta_2,\psi}(u_{k-1}\mid u_k,s_{k-1})$}
			& \multicolumn{3}{c}{$p^{\mathrm{smoothing}}_{\theta_1,\theta_2,\psi}(u_k\mid y_{1:T_{\mathrm{train}}})$}                                                                         \\
			\cmidrule(lr){2-4}\cmidrule(lr){5-7}
			\cmidrule(lr){8-10}
			& RMSE                                                                             & MMD    & CRPS   & RMSE   & MMD    & CRPS   & RMSE   & MMD    & CRPS   \\
			\midrule
			$F=5$ &0.4544&0.5462&0.2161&0.0857&0.0290&0.0453&0.4063&0.4579&0.1902 \\
			$F=8$ &0.4397&0.5069&0.2055&0.0876&0.0304&0.0469&0.3795&0.4257&0.1801\\
			$F=16$ &0.5218&0.5183&0.2350&0.1094&0.0447&0.0581&0.4837&0.4603&0.2433\\
			\bottomrule
		\end{tabular}
		\caption{The performance of the predicted filtering distribution $p_{\theta_1,\psi}(u_k\mid s_k)$, the backward kernel distribution $p_{\theta_2,\psi}(u_{k-1}\mid u_k,s_k)$, and the smoothing distribution $p^{\mathrm{smoothing}}_{\theta_1,\theta_2,\psi}(u_k\mid y_{1:T_{\mathrm{train}}})$ for the two-scale Lorenz model with varying forcing terms $F$.}\label{twoscalermse}
\end{table}

To visually evaluate the FLUID method on the two-scale Lorenz model, Figures~\ref{twoscalefusionF5} and \ref{twoscalefusionF16} display the posterior mean and uncertainty for $F=5$ and $F=16$ on test trajectories with length $T=1000$ (corresponding to 10s) respectively. The sustained robustness of our framework is further confirmed by Figure~\ref{twoscalebackwardrmse}, which tracks the RMSE and associated error indicators for the backward flow $p_{\theta_2,\psi}(u_{k-1}\mid u_k,s_k)$ under both $F=5$ and $F=16$. Finally, Figure~\ref{twoscalestF516} illustrates the absolute error and predicted standard deviation relative to the reference state for these two forcing cases.
	
	\begin{figure}[h]
		\centering
		\includegraphics[width=.9\textwidth]{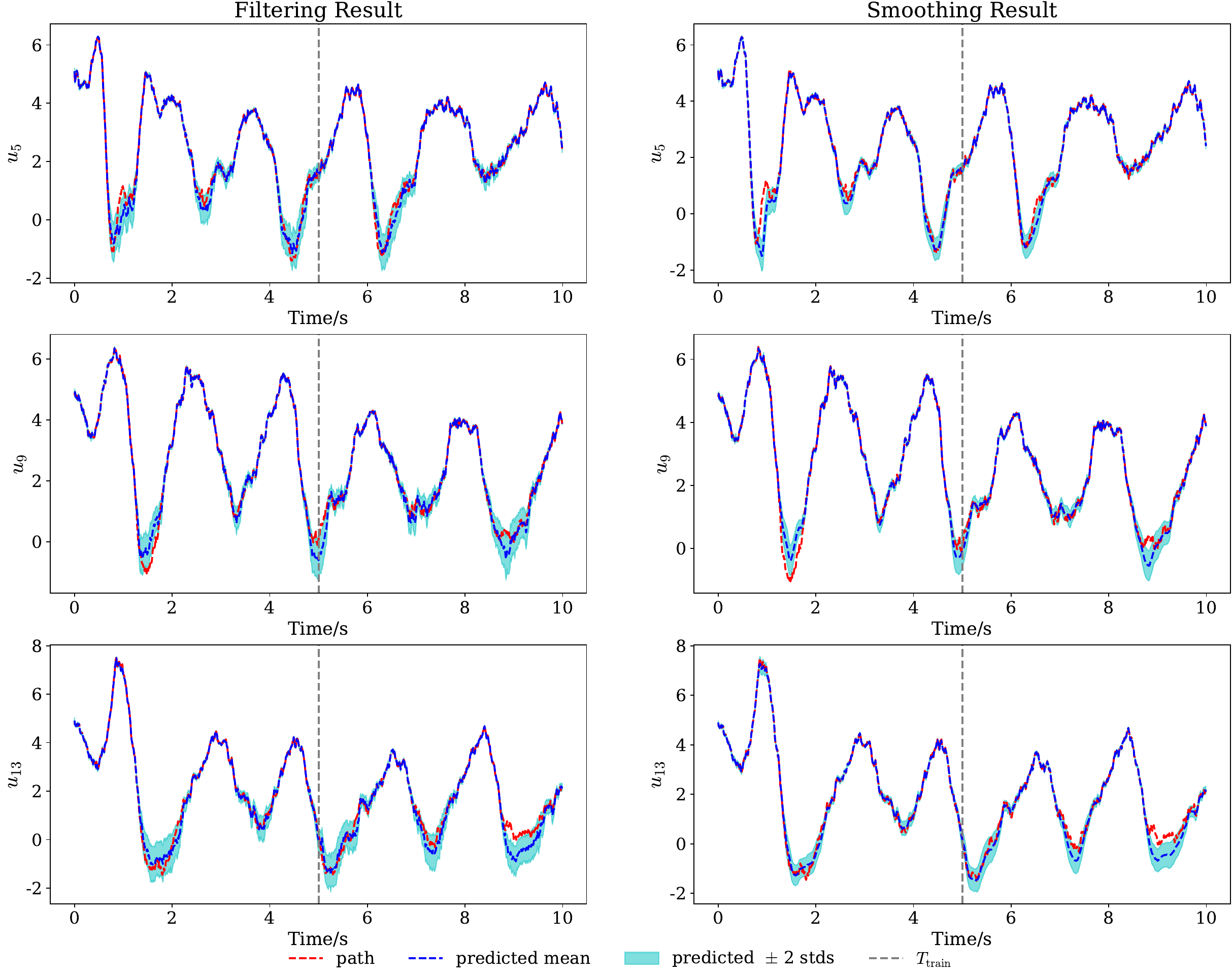}
		\caption{Visualization of the mean and uncertainty of the estimated filtering distribution $p_{\theta_1,\psi}(u_k\mid s_k)$ (left column) and the smoothing distribution $p^{\mathrm{smoothing}}_{\theta_1,\theta_2,\psi}(u_k\mid y_{1:T})$ (right column) for the two-scale Lorenz model at forcing term $F=5$.}\label{twoscalefusionF5}
	\end{figure}
	\begin{figure}[h]
		\centering
		\includegraphics[width=.9\textwidth]{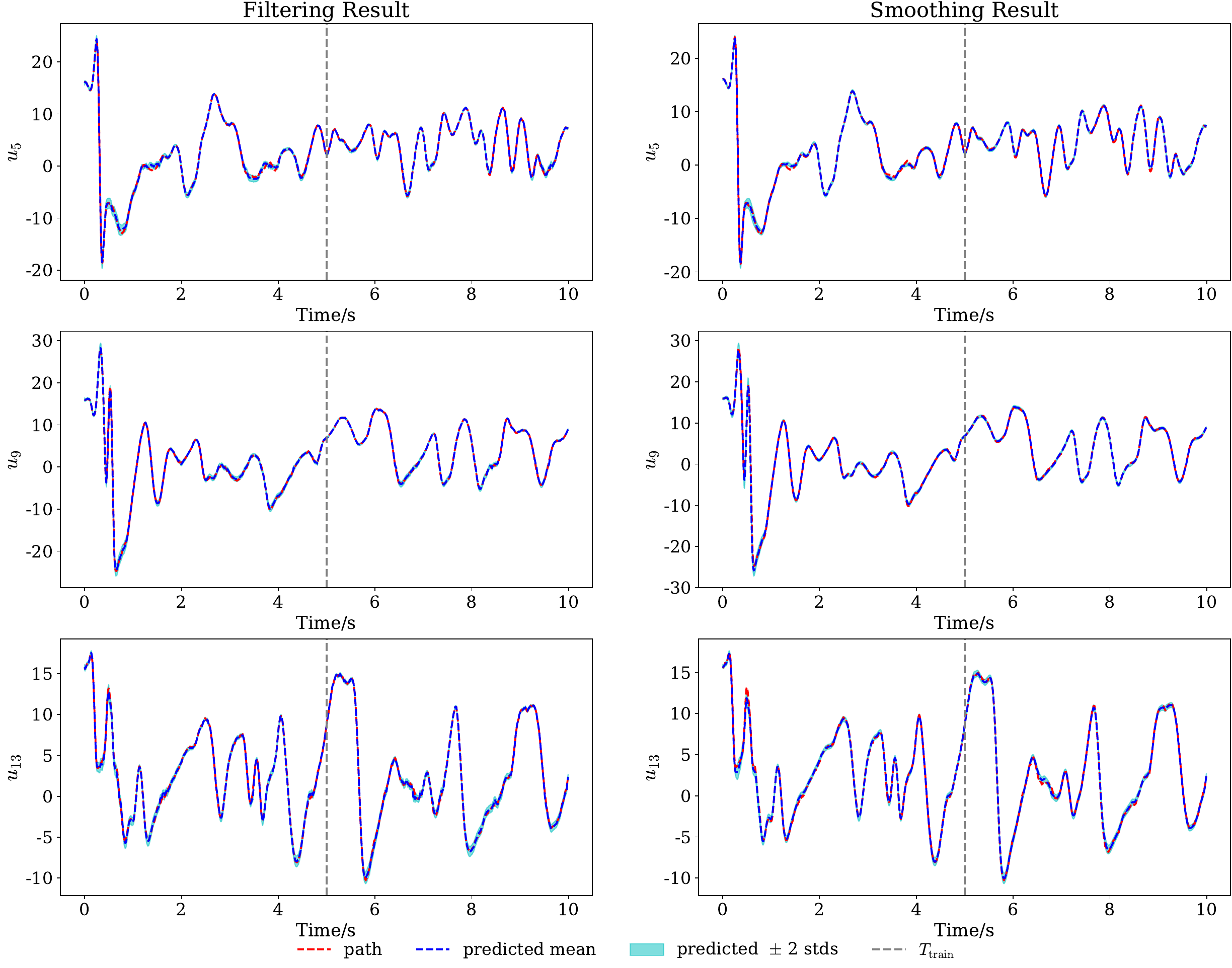}
		\caption{Visualization of the mean and uncertainty of the estimated filtering distribution $p_{\theta_1,\psi}(u_k\mid s_k)$ (left column) and the smoothing distribution $p^{\mathrm{smoothing}}_{\theta_1,\theta_2,\psi}(u_k\mid y_{1:T})$ (right column) for the two-scale Lorenz model at forcing term $F=16$.}\label{twoscalefusionF16}
	\end{figure}
	
		\begin{figure}[h]
		\centering
		\includegraphics[width=.9\textwidth]{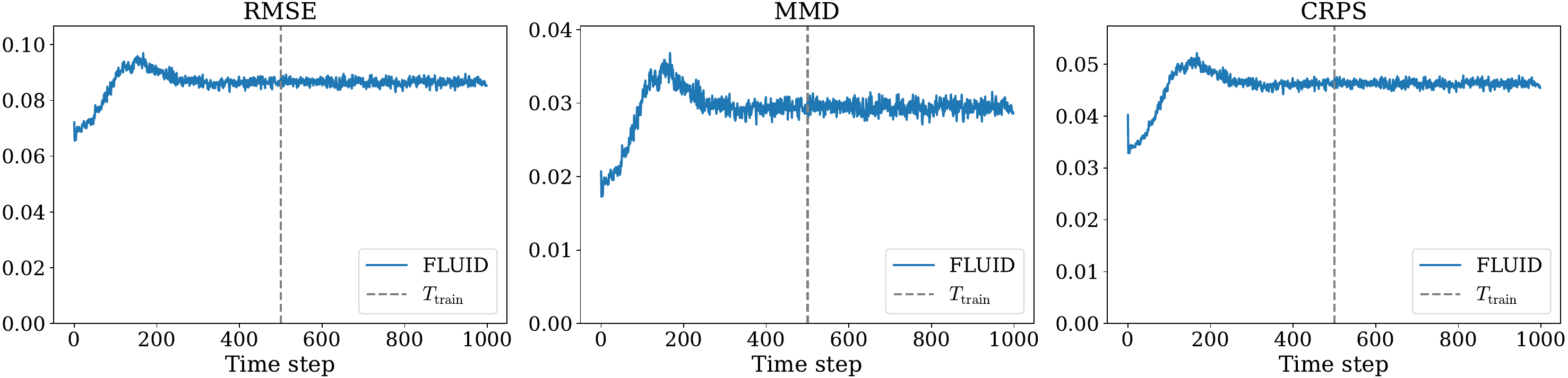}
				\includegraphics[width=.9\textwidth]{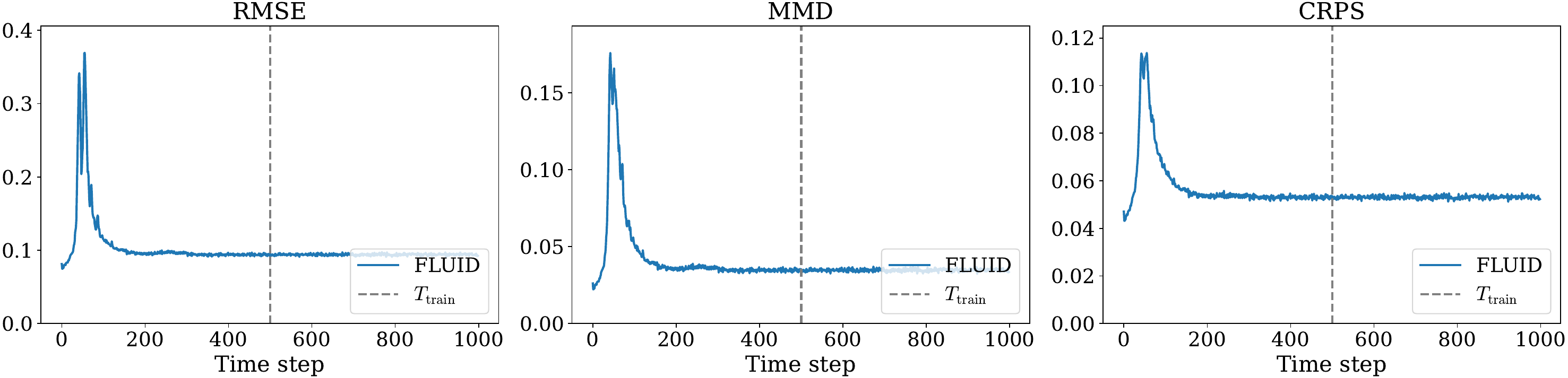}
		\caption{Time evolution of error metrics (RMSE, MMD, and CRPS) for the predicted backward kernel distribution $p_{\theta_2,\psi}(u_k \mid u_{k+1}, s_k)$ for the two-scale Lorenz model with forcing term $F=5$ (top row) and $F=16$ (bottom row).}\label{twoscalebackwardrmse}
	\end{figure}

	\begin{figure}[h]
		\centering
		\includegraphics[width=.9\textwidth]{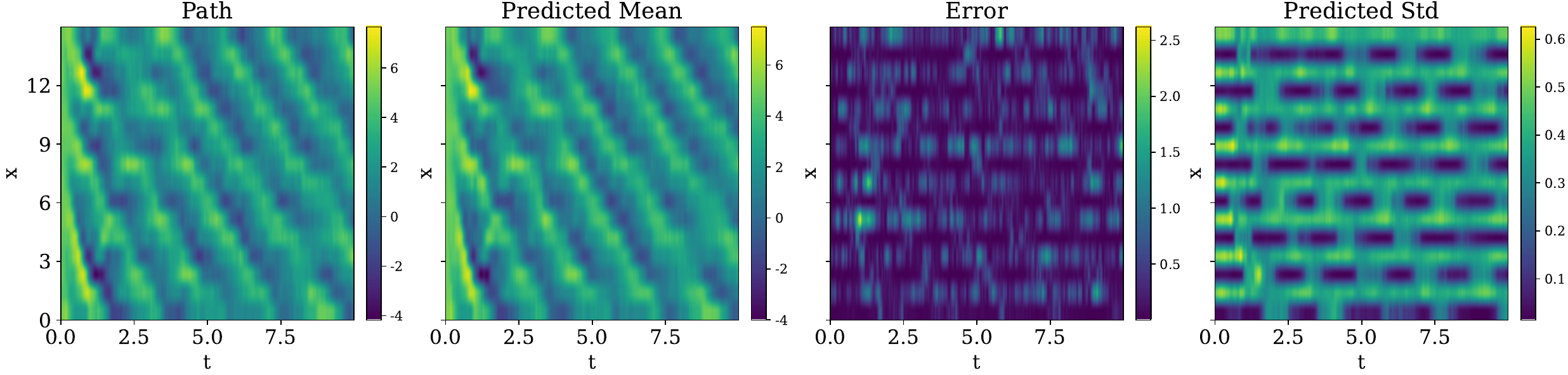}
		\includegraphics[width=.9\textwidth]{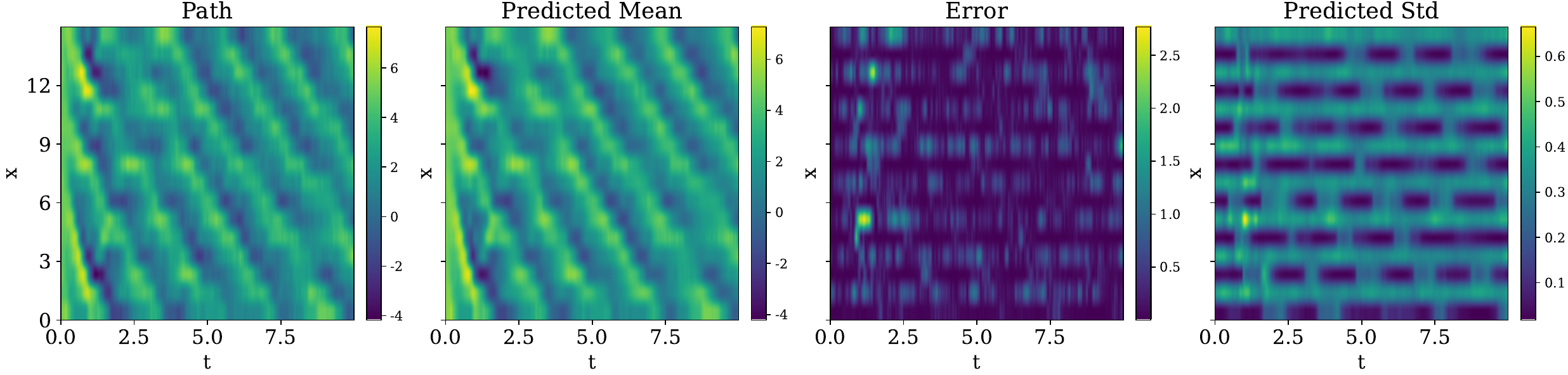}
		\includegraphics[width=.9\textwidth]{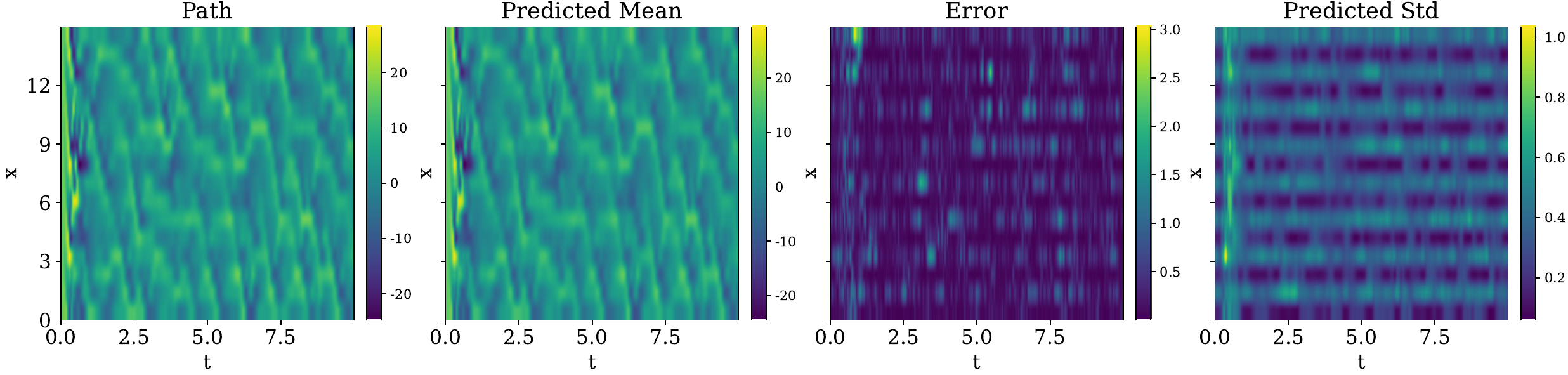}
		\includegraphics[width=.9\textwidth]{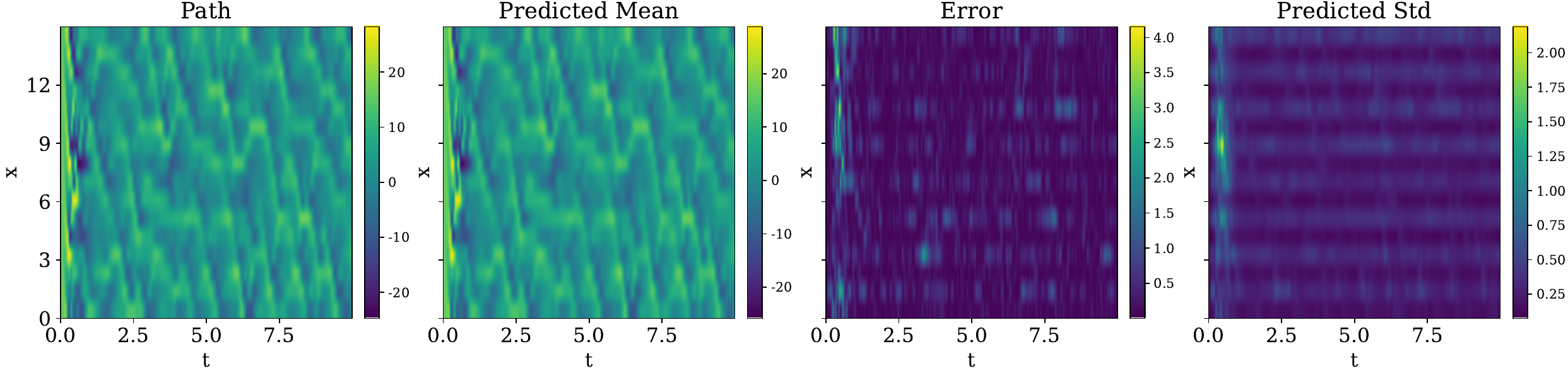}
		\caption{Spatiotemporal results for the two-scale Lorenz model under forcing terms $F=5$ (top two rows) and $F=16$ (bottom two rows). For each forcing term, the predicted filtering distribution $p_{\theta_1,\psi}(u_k \mid s_k)$ and the smoothing distribution $p^{\mathrm{smoothing}}_{\theta_1,\theta_2,\psi}(u_k\mid y_{1:T})$ are displayed in the upper and lower rows, respectively. From left to right, the columns show the true path (reference), the predicted mean, the absolute error, and the predicted standard deviation.}\label{twoscalestF516}
	\end{figure}
	
	\FloatBarrier
    
    \section{Conclusion}\label{sec:conclusion}
	In this work we proposed a unified amortized inference framework for filtering and smoothing dynamics based on
	conditional normalizing flows and recurrent neural networks. By directly learning the
	filtering distribution and the backward kernel
	 from simulated trajectories, the method avoids explicit
	modeling of transition and observation densities and enables fast posterior sampling
	at test time. Sharing a single summary network for both objectives provides an
	information-theoretic justification and acts as an effective regularizer against
	overfitting. Furthermore, we introduced a flow-based particle filtering variant as an alternative for online inference, allowing for diagnostics based on ESS when explicit model factors are available. Numerical results on a linear problem and several nonlinear data assimilation models demonstrate that the learned filters and smoothers remain accurate and robust as the state dimension increases. Future work will focus on establishing rigorous theoretical guarantees, incorporating more advanced sequence encoders such as Transformers, and extending the framework to more complex spatio-temporal systems and continuous-time formulations.
	\section*{Acknowledgements}
    The work of TC is partially supported by the Australian Research Council grant FT250100199.
\section*{Appendix}
\appendix
\renewcommand{\thesection}{\Alph{section}}

\makeatletter
\@addtoreset{table}{section}
\@addtoreset{figure}{section}
\@addtoreset{equation}{section}
\@addtoreset{theorem}{section}
\makeatother

\renewcommand{\thetable}{\thesection.\arabic{table}}
\renewcommand{\thefigure}{\thesection.\arabic{figure}}
\renewcommand{\theequation}{\thesection.\arabic{equation}}
\renewcommand{\thetheorem}{\thesection.\arabic{theorem}}

\section{Conditional normalizing flow}\label{appendix:CNF}
	We briefly describe the simplified conditional KR-net, a conditional normalizing flow used in this work. It defines a conditional invertible map
	\begin{equation*}
	    z = T_{\Theta}(u;c),\qquad u\in\mathbb{R}^{d_u},\ \ c\in\mathbb{R}^{d_c},
	\end{equation*}
	which induces a tractable conditional density through the change-of-variables formula
	\begin{equation*}
	    	\log p_{\Theta}(u\mid c)
	= \log p_Z\bigl(T_{\Theta}(u;c)\bigr)
	+ \log\Bigl|\det\nabla_u T_{\Theta}(u;c)\Bigr|,
	\end{equation*}
	where the base density $p_Z$ is taken as an unconditional standard Gaussian
	$\mathcal{N}(0,I_{d_u})$. The mapping $T_{\Theta}$ is constructed by sequentially composing several invertible modules, specifically starting with an initial conditional scale-bias layer and followed by a stack of conditional affine coupling layers.
	
	We first introduce the conditional scale-bias layer, which applies a conditioning-dependent per-coordinate affine transformation and plays the role of a lightweight global re-scaling between coupling layers. Given $u\in\mathbb{R}^{d_u}$ and $c\in\mathbb{R}^{d_c}$, it is defined as
	\begin{equation*}
	    	T^{\mathrm{s}}_{\psi}(u;c)
	= \exp\bigl(\eta_{\psi}(c)\bigr)\odot u + \xi_{\psi}(c),
	\end{equation*}
	where $\eta_{\psi}(c),\xi_{\psi}(c)\in\mathbb{R}^{d_u}$ are the outputs of a small neural network. This map is explicitly invertible with
	\begin{equation*}
	    	\bigl(T^{\mathrm{s}}_{\psi}\bigr)^{-1}(v;c)
	= \bigl(v-\xi_{\psi}(c)\bigr)\odot \exp\bigl(-\eta_{\psi}(c)\bigr),
	\end{equation*}
	and its log-determinant is efficiently computed as
	\begin{equation*}
	    	\log\Bigl|\det\nabla_u T^{\mathrm{s}}_{\psi}(u;c)\Bigr|
	= \sum_{j=1}^{d_u}\eta_{\psi,j}(c).
	\end{equation*}
	
	After applying this initial global rescaling, the model relies on a sequence of conditional affine coupling layers to introduce the necessary nonlinearity and high expressivity. Within a single such coupling layer, denoted by $T^{\mathrm{coup}}_{\omega}$, we fix a partition $u=[u^{(1)},u^{(2)}]^\top$ with $u^{(1)}\in\mathbb{R}^{k}$ and $u^{(2)}\in\mathbb{R}^{d_u-k}$, where $k=\lfloor d_u/2\rfloor$. The forward mapping is given by
	\begin{equation*}
	    	\tilde{u}^{(1)} = u^{(1)},\qquad
	\tilde{u}^{(2)}
	= \bigl(\mathbf{1}+\alpha\,\tanh s_{\omega}(u^{(1)},c)\bigr)\odot u^{(2)}
	+ \gamma \odot \tanh t_{\omega}(u^{(1)},c),
	\end{equation*}
	where $s_{\omega},t_{\omega}:\mathbb{R}^{k+d_c}\to\mathbb{R}^{d_u-k}$ are neural networks, $\alpha\in(0,1)$ is fixed (we use $\alpha=0.6$), and $\gamma\in\mathbb{R}^{d_u-k}_{+}$ is a learned positive scaling vector. Since $\tanh(\cdot)\in(-1,1)$, all scale factors $\mathbf{1}+\alpha\tanh s_{\omega}(\cdot,\cdot)$ strictly lie in the interval $(1-\alpha,1+\alpha)$, ensuring invertibility. The inverse is explicit:
	\begin{equation*}
	    	u^{(1)}=\tilde{u}^{(1)},\qquad
	u^{(2)}=
	\frac{\tilde{u}^{(2)}-\gamma\odot\tanh t_{\omega}(\tilde{u}^{(1)},c)}
	{\mathbf{1}+\alpha\,\tanh s_{\omega}(\tilde{u}^{(1)},c)}.
	\end{equation*}
	Because the Jacobian of this transformation is block triangular, its log-determinant is
	\begin{equation*}
	    	\log\Bigl|\det\nabla_u T^{\mathrm{coup}}_{\omega}(u;c)\Bigr|
	= \sum_{j=1}^{d_u-k}\log\Bigl(1+\alpha\,\tanh s_{\omega,j}(u^{(1)},c)\Bigr).
	\end{equation*}
	
	In our implementation, the coupling networks within these conditional affine coupling layers are realized by random Fourier feature coupling networks. Specifically, let $u=[u^{(1)},c]^\top\in\mathbb{R}^{k+d_c}$ denote the concatenated input. We first apply a random Fourier feature embedding and then process it through a standard multi-layer perceptron (MLP):
    \begin{align*}
    h_0 &= u, \\[1ex]
    h_1 &=
    \begin{bmatrix}
        \sin\bigl(e^{-\sigma}Fh_0+b_0\bigr) \\[0.5ex]
        \cos\bigl(e^{-\sigma}Fh_0+b_0\bigr) \\[0.5ex]
        h_0
    \end{bmatrix}, \\[1ex]
    h_i &= \mathrm{SiLU}(W_{i-1}h_{i-1}+b_{i-1}), \qquad i=2,\dots,M, \\[1ex]
    \begin{bmatrix} s_{\omega}(u^{(1)},c) \\[0.5ex] t_{\omega}(u^{(1)},c) \end{bmatrix}
    &= W_M h_M + b_M,
\end{align*}
	where $(F,b_0)$ are fixed random features, while the scaling factor $\sigma$ and the parameters $\{W_i,b_i\}$ are trainable.
	
	Finally, to construct the overall conditional KR-net architecture for $T_{\Theta}$, we compose the initial conditional scale-bias layer with a stack of $K$ conditional affine coupling layers. Between successive coupling layers, we apply a fixed permutation $\Pi_k$ that swaps the roles of the two coordinate groups, so that all components are updated repeatedly:
	\begin{equation*}
	    	T_{\Theta}(u;c)
	=
	\Pi_K\circ T^{\mathrm{coup}}_{\omega_K}(\cdot;c)
	\circ \cdots \circ
	\Pi_1\circ T^{\mathrm{coup}}_{\omega_1}(\cdot;c)
	\circ T^{\mathrm{s}}_{\psi}(\cdot;c)\,(u).
	\end{equation*}
	The inverse map $T_{\Theta}^{-1}(z;c)$ is obtained by traversing the same sequence in reverse order and applying the analytic inverse of each component. The total log-determinant is computed by summing the per-layer contributions from the initial scale-bias layer and the $K$ coupling layers, while each permutation $\Pi_k$ has a unit determinant and hence contributes zero to the total log-determinant.	
	\section{Shared recurrent summary network} \label{appendix:summary}
	Let $(\Omega,\mc{F},\mb{P})$ be a probability space and define random variables
	\begin{align*}
		X\coloneqq  u_k, \quad Y\coloneqq  y_{1:k}, \quad Z\coloneqq  u_{k+1}.
	\end{align*}
	A summary is any measurable map $S=f(Y)$. We consider two learning objectives:
	\begin{itemize}
		\item Filtering objective
		\begin{align*}
			\min\limits_{f,\theta_1} \; \mbe [-\log p_{\theta_1}(X|f(Y))].
		\end{align*}
		At optimum over $\theta_1$, the value equals $H(X\mid S)$ with $S=f(Y)$.
		\item Smoothing objective
		\begin{align*}
			\min\limits_{f,\theta_2} \; \mbe [-\log p_{\theta_2}(X|Z,f(Y))].
		\end{align*}
		At optimum over $\theta_2$, the value equals $H(X|Z,S)$ with $S=f(Y)$.
	\end{itemize}
	Now define the optimal summaries
	\begin{align*}
		S^*_{F}  = \argmin\limits_{S=f(Y)} H(X\mid S),   \quad S^*_{S}  = \argmin\limits_{S=f(Y)} H(X|Z,S).
	\end{align*}Then we have
	\begin{lemma}
		With $S^*_F,S^*_S$ defined above, we have
		\begin{align*}
			I(X,S^*_F) \geq I(X,S^*_S) \quad \mathrm{and}\quad I(X;Z\mid S^*_S) \geq I(X;Z\mid S^*_F).
		\end{align*}
	\end{lemma}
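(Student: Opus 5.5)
We will use only the optimality of $S^*_F$ and $S^*_S$ together with the chain rule for mutual information. The starting point is the two decompositions
\begin{align*}
H(X\mid S) &= H(X) - I(X;S),\\
H(X\mid Z,S) &= H(X) - I(X;S) - I(X;Z\mid S),
\end{align*}
valid for any summary $S=f(Y)$. The second follows from the chain rule $I(X;Z,S)=I(X;S)+I(X;Z\mid S)$. Since $H(X)$ does not depend on $f$, the filtering objective amounts to maximizing $I(X;S)$ over $S=f(Y)$. The smoothing objective amounts to maximizing the sum $I(X;S)+I(X;Z\mid S)=I(X;Z,S)$.

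\textbf{First inequality.} This is immediate from the definition of $S^*_F$. Because $S^*_S$ is itself an admissible summary $f(Y)$, we have $H(X\mid S^*_F)\le H(X\mid S^*_S)$. By the first decomposition this is the same as $I(X;S^*_F)\ge I(X;S^*_S)$.

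\textbf{Second inequality.} Here we use the optimality of $S^*_S$ in the same way. Since $S^*_F$ is admissible for the smoothing objective,
\begin{equation*}
I(X;S^*_S)+I(X;Z\mid S^*_S)\ \ge\ I(X;S^*_F)+I(X;Z\mid S^*_F).
\end{equation*}
Rearranging gives
\begin{equation*}
I(X;Z\mid S^*_S)-I(X;Z\mid S^*_F)\ \ge\ I(X;S^*_F)-I(X;S^*_S).
\end{equation*}
By the first inequality the right-hand side is nonnegative, so $I(X;Z\mid S^*_S)\ge I(X;Z\mid S^*_F)$.

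The main point needing care is well-posedness rather than the algebra. The argmins must be attained, or else the argument must be run with $\varepsilon$-optimal summaries and $\varepsilon\to0$. The entropies must be finite: for continuous $u_k$ these are differential entropies, but the mutual informations remain well defined and the identities still hold provided $H(X)$ is finite. We also need that minimizing over $\theta_1$ and $\theta_2$ recovers the true conditional entropies, which is the stated assumption "at optimum the value equals $H(X\mid S)$". I would state these regularity assumptions explicitly at the start of the proof; under them the argument above is complete.
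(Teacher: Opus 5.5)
Your proof is correct and follows essentially the same route as the paper. The first inequality comes from the optimality of $S^*_F$ after subtracting from $H(X)$. The second comes from the optimality of $S^*_S$ combined with the decomposition $H(X\mid S)=H(X\mid Z,S)+I(X;Z\mid S)$, which is your chain-rule identity written with conditional entropies instead of mutual informations. Your regularity caveats (attainment of the argmins, finiteness of the entropies) are sensible additions that the paper leaves implicit.
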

	\begin{proof}
		By definition, $S_F^*$ minimizes $H(X\mid S)$ over all $f(Y)$, so $H(X\mid S_F^*) \leq H(X\mid S_S^*)$. Subtracting from $H(X)$ gives $I(X;S_F^*) \geq I(X;S_S^*)$. Since
		\begin{align*}
			H(X\mid S_S^*) & =   H(X|Z,S_S^*) + I(X;Z\mid S_S^*).
		\end{align*}
		And $S_S^*$ minimizes $H(X|Z,S)$ over all $f(Y)$, so $H(X|Z,S_S^*) \leq H(X|Z,S_F^*)$. Thus
		\begin{align*}
			H(X\mid S_S^*)\leq H(X|Z,S_F^*) + I(X;Z\mid S_S^*).
		\end{align*}
		But \begin{align*}
			H(X\mid S_F^*) & =   H(X|Z,S_F^*) + I(X;Z\mid S_F^*).
		\end{align*}
		Hence we have
		\begin{align*}
			I(X;Z\mid S_F^*) - I(X;Z\mid S_S^*)\leq H(X\mid S_F^*) - H(X\mid S_S^*)\leq 0.
		\end{align*}
	\end{proof}
	\begin{definition}
		A summary $S^\dagger=f(Y)$ is \textit{sufficient} if
		\begin{align*}
			p(X|Y) = p(X\mid S^\dagger) \quad \mbox{a.s.}
		\end{align*}
	\end{definition}
	\begin{proposition}
		If a sufficient summary $S^\dagger$ exists, then $S^\dagger$ is optimal for both filtering and smoothing objectives, i.e., $S^\dagger = S^*_F = S^*_S$.
	\end{proposition}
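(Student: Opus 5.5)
The plan is to show that $S^\dagger$ attains, for each objective, the smallest value that any summary $f(Y)$ can achieve, namely the value obtained by conditioning on all of $Y$. Optimality will be read in the sense of the $\argmin$ definitions of $S^*_F$ and $S^*_S$. Since these minimizers need not be unique as random variables, the equality $S^\dagger = S^*_F = S^*_S$ will be interpreted as ``$S^\dagger$ is a minimizer of both problems.''

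\textbf{Lower bound.} First I establish a bound valid for every summary. For any measurable $S=f(Y)$, the data-processing inequality (conditioning reduces entropy, applied to the $\sigma$-algebra inclusion $\sigma(S)\subseteq\sigma(Y)$) gives
\begin{align*}
H(X\mid S)\ \ge\ H(X\mid Y), \qquad H(X\mid Z,S)\ \ge\ H(X\mid Z,Y).
\end{align*}
So $H(X\mid Y)$ and $H(X\mid Z,Y)$ are lower bounds for the filtering and smoothing objectives, respectively.

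\textbf{Filtering objective.} Sufficiency means $p(X\mid Y)=p(X\mid S^\dagger)$ almost surely. Taking $\mbe[-\log(\cdot)]$ of both sides yields $H(X\mid S^\dagger)=H(X\mid Y)$, so $S^\dagger$ attains the bound and $S^\dagger\in S^*_F$.

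\textbf{Smoothing objective.} This is the main obstacle, because sufficiency is stated only for $X$ given $Y$, not jointly with $Z$. Here I use the Markov structure of the state-space model. Given $u_k$, the next state $u_{k+1}$ is independent of $y_{1:k}$, so $p(Z\mid X,Y)=p(Z\mid X)$. Bayes' rule then gives
\begin{align*}
p(X\mid Z,Y)=\frac{p(Z\mid X)\,p(X\mid Y)}{\int p(Z\mid x)\,p(x\mid Y)\,dx}.
\end{align*}
The right-hand side depends on $Y$ only through $p(X\mid Y)=p(X\mid S^\dagger)$. Applying the same computation with $S^\dagger$ in place of $Y$ is legitimate, since $p(Z\mid X,S^\dagger)=p(Z\mid X)$ because $S^\dagger$ is a function of $Y$. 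This produces the identical expression, hence $p(X\mid Z,Y)=p(X\mid Z,S^\dagger)$ almost surely. Consequently $H(X\mid Z,S^\dagger)=H(X\mid Z,Y)$, which attains the smoothing lower bound, and $S^\dagger\in S^*_S$.

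Combining the two parts, $S^\dagger$ is simultaneously optimal for both objectives. It follows from the lemma's decomposition that $I(X;Z\mid S)$ coincides at the two optima, which is consistent with the lemma's inequalities collapsing to equalities.
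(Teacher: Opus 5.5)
Your proof is correct. The paper gives no argument for this proposition at all (it is stated right after the definition of sufficiency, and the appendix then moves on), so there is no authors' route to compare against; your write-up supplies the missing proof. The filtering half is as immediate as you say.

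The main value of your argument is that it exposes the one hypothesis the smoothing half really needs: $p(Z\mid X,Y)=p(Z\mid X)$, i.e.\ $u_{k+1}$ is conditionally independent of $y_{1:k}$ given $u_k$. The appendix never states this. It enters only through the identification $X=u_k$, $Y=y_{1:k}$, $Z=u_{k+1}$ together with the state-space structure.

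This hypothesis cannot be dropped. Let $X,Y,\eta$ be i.i.d.\ $\mathcal N(0,1)$ and $Z=X+Y+\eta$. The constant summary is sufficient because $X$ is independent of $Y$. Yet $\mathrm{Var}(X\mid Z)=\tfrac{2}{3}>\tfrac{1}{2}=\mathrm{Var}(X\mid Z,Y)$, so $H(X\mid Z)>H(X\mid Z,Y)$, and the constant summary is not a smoothing minimizer. For the same reason, your argument does not by itself cover the two-scale Lorenz experiment, where the slow variables alone are not Markov, as the paper itself notes.

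If you want the smoothing step in the entropy language of the preceding lemma, your Bayes-rule computation is equivalent to the chain rule $I(X,Z;Y\mid S^\dagger)=I(X;Y\mid S^\dagger)+I(Z;Y\mid X,S^\dagger)=0$. This forces $I(X;Y\mid Z,S^\dagger)=0$, i.e.\ $H(X\mid Z,S^\dagger)=H(X\mid Z,Y)$, without manipulating conditional densities pointwise.

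Reading $S^\dagger=S^*_F=S^*_S$ as membership in both argmin sets is the right repair of the statement. Your argument in fact gives a bit more. Any filtering minimizer has $H(X\mid S)=H(X\mid Y)$, hence $I(X;Y\mid S)=0$. So it is itself sufficient, and by your smoothing step it is also a smoothing minimizer.
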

	\section{Error Metric}\label{appendix:metric}
	To assess the performance of the proposed method, we use the following metrics
	\begin{itemize}
		\item Root Mean Squared Error (RMSE):  Quantifies the accuracy of point estimates derived from the mean of the filtered distribution in recovering the true state values.
		\begin{equation*}
			\mathrm{RMSE} = \sqrt{\frac{1}{mK}\sum_{k=1}^{K}\bigg\vert u_{k}^{\mathrm{true}}-\frac{1}{N}\sum_{j=1}^{N}u_k^{(j)}
				\bigg\vert^2 }
		\end{equation*}
		\item Maximum Mean Discrepancy(MMD): Measures the accuracy of point estimates in a nonlinear feature space defined by a kernel function. Let $\phi(\cdot)$ denote a feature mapping derived from a kernel function $ker(\cdot,\cdot)$
		such that $\phi(x)^T\phi(y)=ker(x,y)$. At each time step $k$, the maximum mean discrepancy (MMD) between empirical distribution of samples $\{u_k^{(j)}\}_{j=1}^{N}$ generated by the filter and the delta distribution
		centered at $u_{k}^{\mathrm{true}}$ is given by
		\begin{equation*}
			\begin{aligned}
				\mathrm{MMD}_k & =\left\Vert \phi(u_k^{\mathrm{true}}) - \frac{1}{N} \sum_{j=1}^{N}\phi(u_k^{(j)})\right\Vert^2                                                                            \\
				& = \frac{1}{N^2} \sum\limits_{i,j}ker(u_k^{(i)},u_k^{(j)}) - \frac{2}{N} \sum\limits_{j}ker(u_k^{(j)},u_k^{\mathrm{true}}) +ker(u_k^{\mathrm{true}}, u_k^{\mathrm{true}}).
			\end{aligned}
		\end{equation*}
		To compare the performance of different filters, we use the average MMD:
		\begin{equation*}
			\mathrm{MMD} = \frac{1}{K} \sum_{k=1}^{K} \mathrm{MMD}_k.
		\end{equation*}
		In this work, we select the kernel function as $ker(x,y)=\exp\left(-\frac{\Vert x-y\Vert^2}{2\sigma^2}\right)$ where $\sigma=2$.
		\item Continuous Ranked Probability Score (CRPS): Assesses the consistency between the cumulative distribution functions of the inferred state variables and the true state value. The continuous ranked probability score(CRPS) is commonly used to measure the
		difference between the cumulative distribution of predicted samples and the empirical distribution of the true state in the context of
		state estimation. For the $i$-th component of the state at time step $k$, it is
		defined as
		\begin{equation*}
			\mathrm{CRPS}_{k,i} = \int_{-\infty}^{\infty} \left(1(u_k^{\mathrm{true}}<x) -\frac{1}{N} \sum_{j=1}^{N}1(u_{k,i}^{(j)}<x)\right)^2 dx.
		\end{equation*}
		To assess the overall performance of filter methods, we compute the $\mathrm{CRPS}$ by averaging $\mathrm{CRPS}_{k,i}$ across state components and time steps
		\begin{equation*}
			\mathrm{CRPS} = \frac{1}{mK}\sum_{k=1}^{K}\sum_{i=1}^{m} \mathrm{CRPS}_{k,i}.
		\end{equation*}
	\end{itemize}
	\section{Linear reference calculations}\label{appendix:reference}
	
	We provide the closed-form expressions associated with a linear Gaussian
	state-space model, which are used as a reference in the numerical
	experiments.
	Consider the model
	\begin{align}
		u_{t+1} & = M u_t + w_t, \qquad  w_t \sim \mathcal N(0, Q_t), \nonumber \\
		y_t     & = H u_t + v_t, \qquad  v_t \sim \mathcal N(0, R), \nonumber
	\end{align}
	where $\{w_t\}$ and $\{v_t\}$ are mutually independent and independent of the
	initial state.  Here $u_t$ denotes the hidden state and $y_t$ the observation
	at time $t$, with process noise covariance $Q_t$ (which may be constant in
	time) and observation noise covariance $R$.
	
	The following standard identity for jointly Gaussian vectors will be used
	repeatedly.  If
	\begin{equation*}
	    	\begin{bmatrix} a \\ b \end{bmatrix}
	\sim \mathcal N\left(
	\begin{bmatrix}\mu_a\\ \mu_b\end{bmatrix},
	\begin{bmatrix}\Sigma_{aa} & \Sigma_{ab}\\ \Sigma_{ba} & \Sigma_{bb}\end{bmatrix}\right),
	\end{equation*}
	then the conditional distribution of $a$ given $b$ is Gaussian with
	\begin{align}
		\E[a\mid b]           & = \mu_a + \Sigma_{ab}\Sigma_{bb}^{-1}(b-\mu_b), \nonumber         \\
		\mathrm{Cov}(a\mid b) & = \Sigma_{aa} - \Sigma_{ab}\Sigma_{bb}^{-1}\Sigma_{ba}. \nonumber
	\end{align}
	
	We now state, without further comment, the closed-form expressions for the
	distributions of interest.
	
	\medskip\noindent
	\textit{(i) Single-step posterior $p(u_t\mid u_{t-1},y_t)$.}
	The one-step prior and likelihood are
	$u_t\mid u_{t-1} \sim \mathcal N(Mu_{t-1},Q_t)$ and
	$y_t\mid u_t \sim \mathcal N(Hu_t,R)$.
	Applying the Gaussian conditioning formula to the joint density of
	$(u_t,y_t)$ given $u_{t-1}$ yields
	\begin{align}
		\bar K_t & = Q_t H^\top\bigl(H Q_t H^\top + R\bigr)^{-1}, \nonumber \\
		p(u_t\mid u_{t-1},y_t)
		& = \mathcal N\Bigl(
		Mu_{t-1} + \bar K_t\bigl(y_t - HMu_{t-1}\bigr),\;
		(I-\bar K_t H)\,Q_t
		\Bigr). \nonumber
	\end{align}
	
	\medskip\noindent
	\textit{(ii) Filtering distribution $p(u_t\mid y_{1:t})$.}
	Let $\hat u_{t-1\mid t-1}$ and $P_{t-1\mid t-1}$ denote the mean and
	covariance of the filtering distribution at time $t-1$.
	The prediction step is
	\begin{align}
		\hat u_{t\mid t-1} & = M\,\hat u_{t-1\mid t-1}, \nonumber          \\
		P_{t\mid t-1}      & = M\,P_{t-1\mid t-1}\,M^\top + Q_t. \nonumber
	\end{align}
	The innovation covariance and Kalman gain are
	\begin{align}
		S_t & = H\,P_{t\mid t-1}\,H^\top + R, \nonumber    \\
		K_t & = P_{t\mid t-1}\,H^\top\,S_t^{-1}. \nonumber
	\end{align}
	The updated filtering distribution is then
	\begin{align}
		\hat u_{t\mid t}   & = \hat u_{t\mid t-1} + K_t\bigl(y_t - H \hat u_{t\mid t-1}\bigr),\nonumber \\
		P_{t\mid t}        & = (I - K_t H)\,P_{t\mid t-1},\nonumber                                     \\
		p(u_t\mid y_{1:t}) & = \mathcal N\bigl(\hat u_{t\mid t},\,P_{t\mid t}\bigr).\nonumber
	\end{align}
	
	\medskip\noindent
	\textit{(iii) Backward kernel $p(u_t\mid u_{t+1},y_{1:t})$.}
	Conditionally on $y_{1:t}$, the pair $(u_t,u_{t+1})$ is jointly Gaussian with
	\begin{equation*}
	\begin{bmatrix} u_t \\ u_{t+1} \end{bmatrix}\Big|\;y_{1:t}
	\sim \mathcal N\left(
	\begin{bmatrix}\hat u_{t\mid t}\\ \hat u_{t+1\mid t}\end{bmatrix},
	\begin{bmatrix}
		P_{t\mid t}  & P_{t\mid t}M^\top \\
		MP_{t\mid t} & P_{t+1\mid t}
	\end{bmatrix}
	\right),
	\qquad
	P_{t+1\mid t} = M P_{t\mid t} M^\top + Q_t.
	\end{equation*}
	Using the conditional Gaussian formula with $a=u_t$, $b=u_{t+1}$ leads to the
	Rauch-Tung-Striebel (RTS) backward kernel
	\begin{align}
		G_t & \coloneqq  P_{t\mid t} M^\top P_{t+1\mid t}^{-1},\nonumber \\
		S_t & \coloneqq  P_{t\mid t} - G_t P_{t+1\mid t} G_t^\top
		= (I - G_t M)\,P_{t\mid t},\nonumber                             \\
		p(u_t\mid u_{t+1},y_{1:t})
		& = \mathcal N\bigl(u_t;\,
		\hat u_{t\mid t} + G_t(u_{t+1}-\hat u_{t+1\mid t}),\;
		S_t\bigr).\nonumber
	\end{align}
	
	\medskip\noindent
	\textit{(iv) One-step-ahead predictive distribution $p(y_{t+1}\mid u_t)$.}
	Finally, from $u_{t+1}=Mu_t+w_t$ with $w_t\sim\mathcal N(0,Q_t)$ and
	$y_{t+1}=H u_{t+1}+v_{t+1}$ with $v_{t+1}\sim\mathcal N(0,R)$, we obtain
	\begin{align}
		p(y_{t+1}\mid u_t)
		& = \mathcal N\Bigl(y_{t+1};\; H M u_t,\; H Q_t H^\top + R\Bigr). \nonumber
	\end{align}
	These expressions provide the linear-Gaussian benchmark used for comparison
	with the learned nonlinear filters and smoothers in the first numerical experiment.
	
	\section{Other numerical experiments}
\subsection{Burgers' equation with $\nu=0.01$}\label{appendix:burgers0.01}
In Section~\ref{burgers}, we report numerical results for the Burgers' equation with $\nu=0.05$. Here, we additionally provide filtering results for the case $\nu=0.01$, with all other experimental settings kept the same as those in Section~\ref{burgers}.

The results for case $\nu=0.01$ under different noise regimes are summarized in Table~\ref{burgers0.01vs}. Consistent with the findings in Section~\ref{burgers}, both our filtering distribution $p_{\theta_1,\psi}(u_k\mid s_k)$ and the FBF method $p_{\mathrm{FBF}}(u_k\mid y_{1:k})$ nearly reach the analytical accuracy of the Burgers' equation problem, with our approach maintaining a slight but steady advantage. This performance is further visualized in Figure~\ref{burgers0.01filterrmse}, where the RMSE and other error metrics for both methods remain closely aligned throughout the simulation. Figure~\ref{burgers0.01fusion} visualizes the estimated filtering distribution $p_{\theta_1,\psi}(u_k\mid s_k)$ for the Burgers' equation at a noise level of $r^2=0.25$, showing the posterior mean and uncertainty across both temporal and spatial dimensions. Finally, Figure~\ref{burgers0.01st} highlights the precision of these estimates by plotting the absolute error relative to the reference state alongside the predicted standard deviation for the same test trajectory.
	\begin{table}[h]
		\centering\footnotesize
		\begin{tabular}{lrrrrrr}
			\toprule
			& \multicolumn{2}{c}{RMSE}
			& \multicolumn{2}{c}{MMD}
			& \multicolumn{2}{c}{CRPS}\\
			\cmidrule(lr){2-3}\cmidrule(lr){4-5}
			\cmidrule(lr){6-7}
			& $p_{\theta_1,\psi}(u_k\mid s_k)$ & $p_{\mathrm{FBF}}(u_k\mid y_{1:k})$ & $p_{\theta_1,\psi}(u_k\mid s_k)$ & $p_{\mathrm{FBF}}(u_k\mid y_{1:k})$ & $p_{\theta_1,\psi}(u_k\mid s_k)$ & $p_{\mathrm{FBF}}(u_k\mid y_{1:k})$ \\
			\midrule
			$r^2 = 0.01$ & \textbf{0.1140}&0.1152&\textbf{0.1501}&0.1521&\textbf{0.0588}&0.0594\\
			$r^2 = 0.04$  & \textbf{0.1270}&0.1301&\textbf{0.1830}&0.1897&\textbf{0.0686}&0.0699\\
			$r^2 = 0.09$ & \textbf{0.1380}&0.1403&\textbf{0.2120}&0.2169&\textbf{0.0754}&0.0766\\
			$r^2 = 0.16$  & \textbf{0.1467}&0.1507&\textbf{0.2360}&0.2454&\textbf{0.0807}&0.0827\\
			$r^2 = 0.25$  & \textbf{0.1546}&0.1640&\textbf{0.2581}&0.2833&\textbf{0.0852}&0.0900\\
			\bottomrule
		\end{tabular}
		\caption{Comparison of filtering performance among the proposed FLUID method $p_{\theta_1,\psi}(u_k\mid s_k)$ and the FBF method $p_{\mathrm{FBF}}(u_k\mid y_{1:k})$ for the Burgers' equation problem with varying observation noise level $r^2$ under $\nu=0.01$.}\label{burgers0.01vs}
	\end{table}
	\begin{figure}[h]
		\centering
		\includegraphics[width=.9\textwidth]{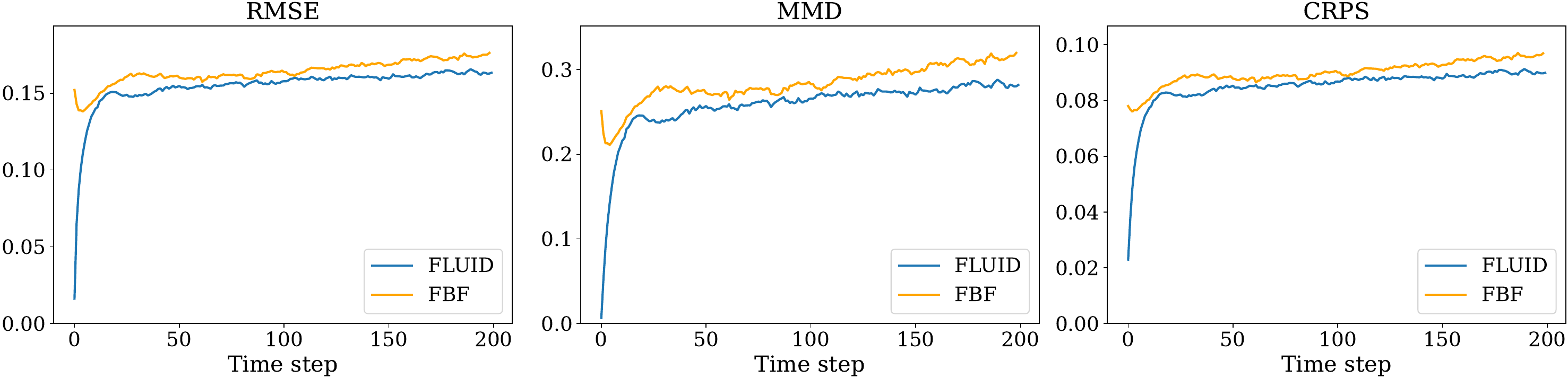}
		\caption{Comparison of the time evolution of error metrics (RMSE, MMD, and CRPS) for the filtering distributions of the proposed FLUID method $p_{\theta_1,\psi}(u_k \mid s_k)$ and the FBF method $p_{\mathrm{FBF}}(u_k \mid y_{1:k})$ for the Burgers' equation problem at an observation noise level $r^2=0.25$ with $\nu=0.01$.}\label{burgers0.01filterrmse}
	\end{figure}
	
	\begin{figure}[h]
		\centering
		\includegraphics[width=.9\textwidth]{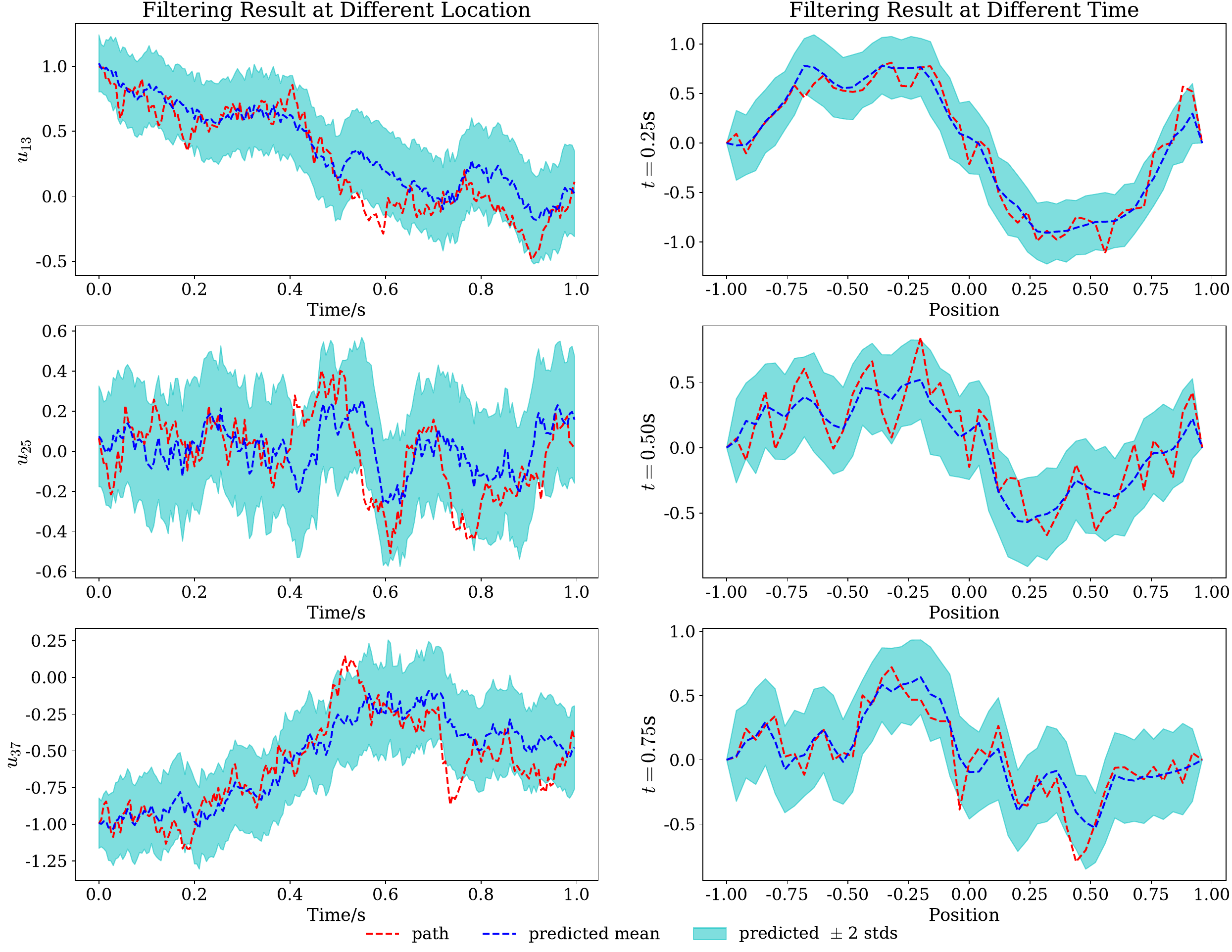}
		\caption{Visualization of the mean and uncertainty of the estimated filtering distribution $p_{\theta_1,\psi}(u_k\mid s_k)$ along the temporal (left column) and spatial (right column) dimensions for the Burgers' equation problem at an observation noise level of $r^2=0.25$.}\label{burgers0.01fusion}
	\end{figure}
	
	\begin{figure}[h]
		\centering
		\includegraphics[width=.9\textwidth]{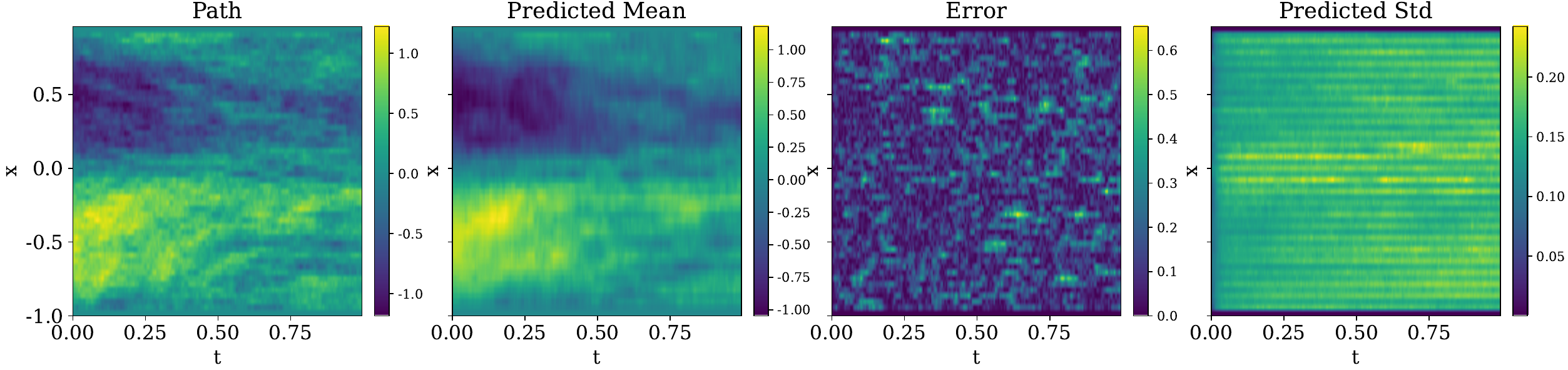}
		\caption{Spatiotemporal results for the predicted filtering distribution $p_{\theta_1,\psi}(u_k \mid s_k)$ for the Burgers' equation problem at an observation noise level $r^2=0.25$ with $\nu=0.01$. From left to right, the columns display the true path (reference), the predicted mean, the absolute error between them, and the predicted standard deviation.}\label{burgers0.01st}
	\end{figure}
	\FloatBarrier
	\bibliography{ref.bib}
\end{document}